\theoremstyle{plain}
\newtheorem{theorem}{Theorem}[section]
\newtheorem{lemma}[theorem]{Lemma}
\theoremstyle{definition}
\newtheorem{definition}[theorem]{Definition}
\newtheorem{assumption}[theorem]{Assumption}
\theoremstyle{remark}
\newcommand{\sys}{SureFED}
\newcommand{\sysbase}{BayP2PFL}
\newcommand\numberthis{\addtocounter{equation}{1}\tag{\theequation}}
\begin{document}
	
	\title{\sys{}: Robust  Federated Learning via Uncertainty-Aware  Inward and Outward Inspection}

	% You may provide any keywords that you
	% find helpful for describing your paper; these are used to populate
	% the "keywords" metadata in the PDF but will not be shown in the document
	% \keywords{Federated Learning, Adversarial Attacks, Bayesian Learning}
	
	% \vskip 0.3in
	% ]

	\DeclareRobustCommand{\IEEEauthorrefmark}[1]{\smash{\textsuperscript{\footnotesize #1}}}

	% author names and affiliations
	% use a multiple column layout for up to three different
	% affiliations
	\author{\IEEEauthorblockN{Nasimeh Heydaribeni*\IEEEauthorrefmark{1}, Ruisi Zhang*\IEEEauthorrefmark{1},  Tara Javidi\IEEEauthorrefmark{1},
			Cristina Nita-Rotaru\IEEEauthorrefmark{2}, Farinaz Koushanfar\IEEEauthorrefmark{1} }
		\IEEEauthorblockA{\IEEEauthorrefmark{1} Department of Electrical and Computer Engineering, University of California, San Diego, 
			La Jolla, CA 92093-0021}
		\IEEEauthorblockA{\IEEEauthorrefmark{2} Khoury College of Computer Sciences, Northeastern University, 
			Boston, MA 02115 }
		\IEEEauthorblockA{* Equal contribution}}

	\maketitle
	% As a general rule, do not put math, special symbols or citations
	% in the abstract
	\begin{abstract} 
		% {\color{red} to be updated!}
		In this work, we introduce \sys, a novel framework for byzantine robust  federated learning. Unlike many existing defense methods that rely on statistically robust quantities, making them vulnerable to stealthy and colluding attacks,  \sys{} establishes trust using the local information of benign clients. \sys{} utilizes an uncertainty aware model evaluation and introspection to safeguard against poisoning attacks.  In particular, each  client independently trains a clean local model exclusively  using its local dataset, acting as the reference point for evaluating model updates.
		\sys{} leverages Bayesian models that provide model uncertainties and play a crucial role in the model evaluation process.
		% \sys{} utilizes Bayesian models that provide  model uncertainties, which are used in the  model evaluation process.
		% Each benign client independently trains a clean local model exclusively  using its local dataset, serving as the baseline for evaluating each received model update individually.
		% The utilization of  local trusted Bayesian models in  
		Our framework exhibits robustness even when the majority of clients are compromised, remains agnostic to the number of malicious clients, and is well-suited for non-IID settings.
		% \sys{}  works well even when the majority of clients are compromised, is agnostic to the number of malicious clients, and is suitable for   non-IID settings.
		We theoretically prove the robustness of our algorithm against data and model poisoning attacks in a decentralized linear regression setting. Proof-of-Concept evaluations on benchmark image classification data demonstrate the superiority of \sys{}  over the state of the art defense methods  under various colluding and non-colluding data and model poisoning attacks. 
		%Federated learning frameworks are known not to be robust against data poisoning attacks; 
		% We analyze the robustness of the known variational Bayesian peer-to-peer federated learning framework (\sysbase{}) and subsequently show that it is not robust against those attacks. 
		% We also show the limitations of the other defense methods for federated learning under various attack. These limitations include poor performance under certain attacks, including the stealthy Trojan attacks,  and requiring the benign majority, to name a few. 
		% \ruisi{Maybe we need to explain more on what the limitations exactly are, e.g., perform bad under certain attacks.}
		% The new \sys{} algorithm is then devised to overcome these limitations. 
		% \sys{}  works well in non-IID settings, does not require the majority of the benign nodes over the compromised ones, is agnostic to the number of compromised clients, and performs well under numerous data/model poisoning attacks. 
		% even outperforms the baseline algorithm in benign settings. \ruisi{We didn't outperform the DNN ones in benign setting, not sure we need to mention it here.} 
		% We theoretically prove the robustness of our algorithm against data and model poisoning attacks in a decentralized linear regression setting. Proof-of-Concept evaluations on benchmark data from image classification demonstrate the superiority of \sys{}  over the existing frameworks under various data/model poisoning attacks. 
	\end{abstract}

	\section{Introduction}

In contemporary large networks, decentralized operations involve individual devices (clients) having access to their local datasets. However, these local datasets often lack the capacity to learn a global model with sufficient accuracy, and privacy concerns impede raw data sharing among clients. To address these challenges, federated learning algorithms \cite{mcmahan2017communication} like Federated Averaging have been proposed. Furthermore, an array of extensions were introduced to enable improved scalability in a peer-to-peer setting or adaptation to non-IID data distributions \cite{roy2019braintorrent,lalitha2019peer, wang2021non, wink2021approach, wang2022peer, al2020federated}.
However, federated learning algorithms have been shown to be vulnerable to various byzantine and adversarial attacks, including data and model poisoning attacks  \cite{lyu2020threats,sun2019can,tolpegin2020data,jere2020taxonomy,bhagoji2019analyzing,Yar2023Backdoor,severi2022network}. 
% Several contemporary work suggested new defenses against the vulnerabilities of federated learning frameworks. 
% For instance, various secure gradient aggregation rules  have been proposed to add robustness to the algorithm \cite{yin2018byzantine,  blanchard2017machine, boussetta2021aksel, chen2017distributed, guerraoui2018hidden}. 
%This is  the main reason why the Bayesian federated learning proposed in \cite{lalitha2019peer, wang2022peer}, \sysbase{} , outperforms FedAvg \cite{mcmahan2017communication} in settings with large and non-IId datasets. 
 %\cnr{this is repetitive with the sentence above}
State of the art  defense methods in federated learning can be categorized into (1) statistics-based defenses   \cite{yin2018byzantine,  blanchard2017machine, boussetta2021aksel, chen2017distributed, guerraoui2018hidden}, such as Trimmed Mean \cite{yin2018byzantine} and Clipping \cite{bagdasaryan2020backdoor}, 
 % coding-based defenses \cite{chen2018draco}, defenses based on differential privacy \cite{sun2019can,nguyen2022flame}, 
  (2) defenses based on model evaluation on a local dataset \cite{xie2019zeno, xie2020zeno++}, such as Zeno \cite{xie2019zeno}, and (3) defenses based on model comparison with the server's local model, such as  FLTrust \cite{cao2020fltrust}.
  
Statistics-based defense methods use a statistically robust quantity, such as median,  instead of mean, in the model aggregation to eliminate the effect of byzantine updates on the aggregated value. Such methods rely on the assumption that the majority of clients are benign, aiming to protect statistical measures like median from being influenced by a minority of poisoned updates. 
The defense methods in the second  category, such as Zeno \cite{xie2019zeno}, draw samples from a clean local dataset and evaluate a model update based the amount of improvement in the loss function. The defenses in the third category, such as FLTrust \cite{cao2020fltrust}, are based on bootstrapping the trust using the server's local model. FLTrust assumes that the server has access to a  clean local dataset. The server computes a model update using its local dataset  similar to  what clients do and compares received updates with its own. 

The existing defense methods have been shown to be vulnerable to specific attacks such as colluding attacks of  "A Little is Enough" \cite{baruch2019little} and Trojan attacks \cite{xie2020dba}. In addition, the  statistics-based defense methods require the majority of clients to be benign. Some methods, such as Trimmed Mean \cite{yin2018byzantine} and Zeno \cite{xie2019zeno}, necessitate knowledge of an upper bound on the number of compromised clients. Zeno is also inherently vulnerable to Trojan attacks due to their stealthy nature. FLTrust has also been shown to be vulnerable to specific attacks that target methods based on cosine similarity \cite{kasyap2022hidden}. A potential drawback of FLTrust is that if the server's model becomes poisoned early on in the training  process, the    future model evaluations become meaningless.

This paper introduces \sys{}, which utilizes uncertainty quantification  to address limitations in the existing literature. \sys{} draws inspirations from the state of the art defense methods described earlier and further incorporates additional essential components  to address the known vulnerabilities of the existing methods. \sys{} employs a  peer-to-peer federated learning setting in which each client has access to a local dataset but also obtains models from other clients. 
% with three main components.
% utilizing variational Bayesian models \cite{wang2022peer}. 
% \sys{} can be grouped in the third defense category described above but with various fundamental differences with FLTrust. 
In contrast with prior work, in \sys{}, the clients train two distinct models:  a social model, designed to capture the benefits of federated operation by aggregating the models trained on diverse datasets across the network, and a \textbf{Local Bayesian Model}, trained exclusively  on each client's local dataset to be utilized as a model evaluation basis with explicit uncertainty quantification.
% one model that is designed to capture the benefits of federated operation by aggregating the models trained on diverse datasets across the network, hence, being referred to as a social model, as well as a new \textbf{Local Bayesian Model} that is trained exclusively on each client's local dataset to be utilized as a model evaluation basis with explicit uncertainty quantification. 
We note that these local models are never aggregated with the received  updates; 
% Although the local datasets are insufficient for learning a good global model, the local models trained on them are immune from the attacks done to other clients. 
Instead, they are used in an \textbf{Uncertainty
Aware Model Evaluation} of the model updates to
subsequently aggregate them with the social model based
on a new secure aggregation rule. Additionally, \sys{} employs an
introspection procedure, in which each client evaluates its own social model using its clean local model. This ensures the system’s ability to revert to a clean state after potential contamination.

% instead, the local models are used in an \textbf{Uncertainty Aware Model Evaluation} of the models shared over the network. The isolation of the local models has two important advantages: the local models capture the local data, hence, they are permanently protected against external stealthy and colluding attacks. Additionally, this ensures the system's ability to revert to a clean state at any point to recover from potential contamination.
% Furthermore, the local model uncertainty in the training process enables a reliable introspection procedure: the clients can aggregate their local model into the social model based on a new secure aggregation rule that capture their trust in their neighbor's social models.  %\sys{}  employs an introspection procedure, in which each client evaluates the social model against local model trained on its clean local model. 
% The model uncertainties, captured and quantified by Bayesian training, allows a client to carefully discriminate between benign versus poisoned model differences. 

% This fact was overlooked in the state of the art model evaluation techniques such as  FLTrust. 

The process of model evaluation and aggregation in \sys{} draws inspiration from the concepts found in social science such as bounded confidence models of opinion dynamics \cite{hegselmann2002opinion}, which aim to simulate how humans  alter their opinions during interactions with others. In this regard, \sys{} seeks to emulate the innate intelligence of humans within the complex domain of AI.
Notably, in \sys{}, each received model update is independently evaluated without the need for comparisons with other updates. This feature allows \sys{} to function without requiring a majority of benign clients or knowledge of the number of compromised clients. 

We investigate and demonstrate the superior  performance of \sys{} under a broad set of byzantine and adversarial attacks. In particular, we consider six colluding and non-colluding  data and model poisoning attacks, namely, Label-Flipping \cite{tolpegin2020data}, Trojan \cite{xie2020dba}, Bit-Flip \cite{xie2018generalized}, General Random \cite{xie2018generalized},  A little is Enough  \cite{baruch2019little}, and Gaussian \cite{pillutla2022robust} Attacks. A little is Enough and Trojan attacks are colluding attacks, while Trojan attack is also a stealthy attack.  
We compare our results 
% with \sysbase{} \cite{wang2022peer}, which is a variational Bayesian federated learning baseline (without any defense method), together with  
with the state of the art defense methods 
% from each of the categories 
described above. Specifically, we  compare \sys{} with 
% three other defense methods that are based on the plain neural network training, namely, 
Trimmed Mean \cite{yin2018byzantine}, and Clipping \cite{bagdasaryan2020backdoor} from the first category, 
% \footnote{We chose Trimmed  Mean and Clipping because Trimmed Mean requires an  upper bound on the number of Byzantine workers, while clipping does not use that information.}
  Zeno \cite{xie2019zeno}, which is the state of the art in the second category, and FLTrust~\cite{cao2020fltrust}, which is the state of the art defense belonging to the third category. We also compare with \sysbase{} \cite{wang2022peer}, as a variational peer-to-peer federated learning  baseline without a defense method to confirm that \sys{}'s robustness does not come at the cost of reliability and/or performance. 
% same category as \sys{}. 
In addition to experimental analysis, we  provide theoretical guarantees for the robustness of \sys{} in a decentralized linear regression setting under non-IID data distribution. Note that theoretical guarantees in non-IID settings are rarely found in the literature.
In summary, our contributions are as follows:
\begin{itemize}[leftmargin=*]
    %\item   We show that the existing state of the art Bayesian federated learning framework (\sysbase{})   is not robust against  data / model poisoning attacks. 
    \item We introduce \sys{}, a new peer-to-peer federated learning algorithm which is robust to a  variety of data and model poisoning attacks.
    % , namely, Label-Flipping, Trojan, Bit-Flip, A Little is Enough, Gaussian, and General Random poisoning attacks. The robustness is achieved by training new local models and utilizing them to evaluate the received updates. 
    \item \sys{} is shown to have the following desirable properties: (i) its robustness does not require the majority of the benign nodes over the compromised ones and is agnostic to the number of byzantine workers;
    % (ii) our algorithm is robust against different types of data and model poisoning attacks including colluding and stealthy attacks.
  % (ii) the robustness does not 
  %    deteriorate the benign performance of the algorithm, and in fact, even improves it; 
     (ii) \sys{} effectively adapts to and derives benefits from the non-IID datasets of benign clients. (iii) \sys{} is robust to numerous  poisoning attacks and shows superior performance compared to the state of the art defense methods.
     % which was not considered by prior defense frameworks. 
    
    \item We theoretically prove the robustness of \sys{} in a decentralized linear regression setting under certain poisoning attacks.  
     We prove that the benign nodes in \sys{} learn the correct model parameter under Label-Flipping attacks. We then generalize our theoretical results to the General Random model poisoning attack. 
    \item Comprehensive, Proof-of-Concept evaluations on benchmark data from image classification (MNIST, FEMNIST, and CIFAR10) demonstrate the superiority of \sys{} over the existing defense methods, namely, Zeno, Trimmed Mean, Clipping, and FLTrust. The robustness is evaluated with respect to all six considered types of poisoning attacks. 
\end{itemize}

%paper organization
% \textbf{Paper Structure:} The remainder of the paper is structured as follows. In Section \ref{sec:prelim}, we provide the related work and the required preliminary materials for this paper.  In Section \ref{problemStatement}, the problem setting and the threat model are stated. We present our proposed algorithm (\sys{}) in Section \ref{sec:SABRE}. A theoretical analysis of the learning and robustness of  \sys{}  is provided in Section \ref{sec:learning}.  Experimental results are presented in Section \ref{sec:experiments}. We conclude in Section \ref{sec:disc}. The proofs of the theorems and some supplementary content are provided in the Appendix.

% \ruisi{I didn't see ppl use paper structure in icml submission. Maybe because it's too short}
	\section{Background}
\label{sec:prelim}
% In this section, we describe the related work and the preliminary information on the variational Bayesian learning method that we consider in this paper \cite{blundell2015weight}. We will also describe the variational federated learning method introduced in \cite{wang2022peer} that uses variational Bayesian learning in a peer-to-peer federated learning setting. We refer to this federated learning algorithm as \sysbase{}.

%There are some prior works that consider personalized peer-to-peer federated learning, in which the nodes learn which other nodes to communicate with to improve their model \cite{zantedeschi2020fully, even2022sample}.  The method used in these works  is slightly similar to the method that we use in this paper. However, these works are only designed to improve generalization of the models to the local datasets and they are not robust against poisoning attacks.  

\subsection{Variational Bayesian Learning}
\normalsize
  Variational Bayesian learning \cite{blundell2015weight} is the learning method used in this paper. In the variational Bayesian learning,
weight uncertainty is introduced in the neural network by learning a parameterized distribution over the weights (instead of just learning the weight values). 
% This distribution is parameterized and those parameters are learned through the training process. 
% At each step of the training, the posterior distribution on the weights is computed  and is fit to the closest considered parameterized distributions. 
More specifically, assume that $\theta$ is the weight vector of the neural network. We denote the parameterized posterior distribution over $\theta$ by $q(\theta|w)$, where $w$ is the parameter vector of the posterior distribution. Also, assume that at round $t$ of the training, we have a prior distribution over $\theta$, denoted by $\mathcal{P}_{t-1}(\theta)$. After observing the data batch $\mathcal{D}_t$, the posterior distribution parameters, $w$, are updated by minimizing the variational free energy loss function:
\begin{align}
\begin{small}
   \mathcal{F}(\mathcal{D}_t, w) \hspace{-0.05cm}= \hspace{-0.05cm}\text{\textbf{KL}}[q(\theta|w)||\mathcal{P}_{t-1}(\theta)]\hspace{-0.05cm}-\hspace{-0.05cm}\mathbb{E}_{q(\theta|w)}[\text{log} \mathcal{P}(\mathcal{D}_t|\theta)],
   \end{small}
    \label{eq:loss}
\end{align}

\normalsize
where $\mathcal{P}(\mathcal{D}_t|\theta)$ is the conditional likelihood of observing $\mathcal{D}_t$ given $\theta$. 

% {\color{red}{TJ: Can you expand this in a way that it's easier to notice and read; also again, refer to PA and Xinghan's}}
In the case of variational Gaussian  posteriors \cite{wang2022peer, al2020federated}, $q(\theta|w)$ is  a Gaussian distribution with mean $\hat{\theta}$ and  covariance matrix $\Sigma$ which is a diagonal matrix with  diagonal entries of $(\text{log}(1+\text{exp}(\rho)))^2$. Therefore, we have $w=(\hat{\theta},\rho)$.   
% Each step of the optimization starts with 
% % We set
% % \begin{align}
% %     \theta=\hat{\theta}+\text{log}(1+\text{exp}(\rho)) \circ \eta
% % \end{align}
% % where $\eta\sim N(0,I)$ is a Gaussian random variable and $\circ$ is pointwise multiplication. 
% % According to \cite{blundell2015weight} one can minimize a simpler loss function $f(\theta, w)= \text{log}q(\theta|w)-\text{log}\mathcal{P}_{t-1}(\theta)\mathcal{P}(\mathcal{D}_t|\theta)$, and then, each step of the optimization will be as follows. 
%  sampling $\eta\sim N(0,I)$ and letting $\theta=\hat{\theta}+\text{log}(1+\text{exp}(\rho)) \circ \eta$. The gradient of $\mathcal{F}(\mathcal{D}_t, w)$ w.r.t. $w$ are calculated 
% % , which is given below:
% % \begin{align}
% %     \bigtriangledown_{\hat{\theta}}f(\theta, w)&= \frac{\partial f(\theta, w)}{\partial \theta}+\frac{\partial f(\theta, w)}{\partial \hat{\theta}}\\ \bigtriangledown_{\rho}f(\theta, w)&= \frac{\partial f(\theta, w)}{\partial \theta} \frac{\eta}{1+\text{exp}(-\rho)}+\frac{\partial f(\theta, w)}{\partial \rho}
% % \end{align}
%  and $\hat{\theta}$ and $\rho$  are updated accordingly. 
% % Note that the term $\frac{\partial f(\theta, w)}{\partial \theta}$ is the gradient found in the  plain neural network training method.
% Assume $w_t$ is the updated  parameter vector. Then $\mathcal{P}_{t}(\theta)=q(\theta|w_t)$ is set as the prior for the next batch of data samples. 

\subsection{Uncertainty Aware Model Averaging}
In a variational Bayesian federated learning setting such as \cite{wang2022peer, al2020federated},  clients incorporate the model  uncertainties in the aggregation process. In particular,  
% \sysbase{} \cite{wang2022peer}, the clients (denoted by $ \mathcal{N}$) are distributed over a decentralized network and  can only communicate with their neighbors (no central server). Each client $i$ will train its model based on the variational Bayesian learning method described in the previous subsection and will shares the parameters of its current belief, $\mathcal{P}^i_{t}(\theta)$, with its neighbors. 
each client aggregates the received model updates as follows.
 \begin{subequations}
 \begin{align}
    &({\Sigma}^{'i}_t)^{-1}=\sum_{j\in \mathcal{N}(i)}T^{ij}(\Sigma^j_t)^{-1}\\
&\hat{\theta}^{i}_t=\Sigma^{'i}_t\sum_{j \in \mathcal{N}(i)} T^{ij}(\Sigma^j_t)^{-1} \hat{\theta}^j_t\\
&{\Sigma}^{i}_t={\Sigma}^{'i}_t
\end{align}
 \label{eq:agg:prelim}
 \end{subequations}
% \begin{align}
% \label{eq:agg:prelim}
%     \mathcal{P}^i_{t}(\theta)=\frac{\text{exp}(\sum_{j \in  \mathcal{N}} T^{ij}\text{log}\mathcal{P}^j_{t}(\theta))}{\sum_{\phi}\text{exp}(\sum_{j \in \mathcal{N}}T^{ij}\text{log}\mathcal{P}^j_{t}(\phi))}, 
% \end{align}
where $w^i_t=(\hat{\theta}^i_t, \Sigma^i_t)$ is the Gaussian posterior belief  parameter, and $T^{ij}$ is the predetermined trust of client $i$ on client $j$. As can be seen in the above equation, the model elements with higher uncertainty (variance) are given smaller weights in the aggregation process.  
% In the case of Gaussian posteriors with parameter $w^i_t=(\hat{\theta}^i_t, \Sigma^i_t)$, the above aggregation rule will be simplified as $\hat{\theta}^{i}_t=\Sigma^{'i}_t\sum_{j \in \mathcal{N}(i)} T^{ij}(\Sigma^j_t)^{-1} \hat{\theta}^j_t$ and ${\Sigma}^{i}_t={\Sigma}^{'i}_t$, where  $({\Sigma}^{'i}_t)^{-1}=\sum_{j\in \mathcal{N}(i)}T^{ij}(\Sigma^j_t)^{-1}$. It can be seen from the above equations that the model elements with larger uncertainties are assigned smaller weights in the aggregation. 
% \begin{align}
% \label{eq:agg:prelim}
%     &({\Sigma}^{'i}_t)^{-1}=\sum_{j\in \mathcal{N}(i)}T^{ij}(\Sigma^j_t)^{-1}\\
% &\hat{\theta}^{i}_t=\Sigma^{'i}_t\sum_{j \in \mathcal{N}(i)} T^{ij}(\Sigma^j_t)^{-1} \hat{\theta}^j_t\\
% &{\Sigma}^{i}_t={\Sigma}^{'i}_t
% \end{align}
	
\section{Problem Statement}
\label{problemStatement}
\subsection{System Model}
We consider a peer-to-peer federated learning setting with  $N$  clients who are distributed over a time-varying  directed graph $G_t=(\mathcal{N},\mathcal{E}_t)$. $\mathcal{N}$ is the set of clients, where  each client corresponds to one node of graph $G_t$. An edge $(i,j)\in \mathcal{E}_t$ indicates that clients $i$ can communicate with client $j$ at time $t$. The adjacency matrix of the graph at time $t$ is denoted by $A_t$.
We denote the set of client $i$ together with its in-neighbors at time $t$ by $\mathcal{N}_t(i)$, and the set of out-neighbors of client $i$ at time $t$ is denoted by $\mathcal{N}_t^o(i)$.
%We denote the adjacency matrix of graph $G_t$ by  $A_t$. 
%The set of neighbors of client $i$ together with client $i$ at time $t$ is denoted by $\mathcal{N}_t(i)$. 
% , where $t$ indicates the $t$'s  data sample that client $i$ has received. The observations are made in an online fashion in batches. That is, each client has a local clock 
% %with time slot length $T_i$ 
% and in each time slot, it will receive a batch of data samples. Note that the clocks of different clients are not necessarily synchronized  with each other, nor do they need to have the same length.  
Each client $i$ has access to a data set $\mathcal{D}^i$ 
% of  size $D^i$ 
consisting of data samples 
% $(x^i_t,y^i_t)_{t=1}^{D^i}$. 
$(x^i_t,y^i_t)$. We assume the dataset of clients are non-IID with $x^i_t\in \mathcal{X}^i$ distributed according to $\mathcal{P}^i$. 
Clients learn the model parameter, $\theta$, according to a decentralized variational learning algorithm with Gaussian variational posteriors which will be explained in section \ref{sec:SABRE}. 

\subsection{Threat Model}
\label{sec:threat}
%\cnr{who is the attacker? Outside or inside your system?}
We consider an adversarial environment where a subset  of the nodes, $\mathcal{N}^c$, are  under  poisoning attacks by  attackers from outside the system.  While our analysis can  be applied to many types of poisoning attacks,  we focus on the following attacks.
 \begin{itemize}[leftmargin=*]
\item Data Poisoning Attacks:
\begin{itemize}[leftmargin=*]
 \item \textbf{Trojan}: A subset of the dataset of the compromised clients is added with a Trojan and labeled with a target label \cite{xie2020dba}.  This is a stealthy and colluding attack.
    \item \textbf{Label-Flipping}: The label of some classes are changed to a target class  \cite{tolpegin2020data}. 
\end{itemize}
  \item Model Poisoning Attacks:
    \begin{itemize}[leftmargin=*]
        \item \textbf{Bit-Flip}: In this attack, some of the bits in the binary representations of the model weights are flipped 
        \cite{xie2018generalized}.
        \item \textbf{General Random}: The attackers randomly choose some of the model weight elements  and  multiply them by a large number \cite{xie2018generalized}.
        \item \textbf{A Little is Enough}: In this attack, the adversaries place their model weights close to a number of  benign clients that are far form the mean to gain the majority power \cite{baruch2019little}. This type of attack is categorized as a colluding attack.
        \item \textbf{Gaussian}: The adversaries add Gaussian noise to the model updates \cite{pillutla2022robust}. 
    \end{itemize}
\end{itemize}
% \textbf{Trojan attacks} \cite{jere2020taxonomy} and \textbf{label-flipping} attacks \cite{tolpegin2020data}.  For theoretical analysis, we consider label-flipping attacks , where the attackers will poison the dataset of the compromised nodes by adding a bias $b$ to their observed labels. We have 
% \begin{align}
% 	(y^i_t)'=y^i_t+b,\  \forall i \in \mathcal{N}^c
% 	\label{DPA}
% \end{align}
% For simplicity, we assume the attackers to different clients agree on a single bias $b$ to add to the data labels. 
% For our experimental evaluations, we also consider Trojan attacks,  where a subset of the dataset of the compromised clients is added with a Trojan and labeled with a target label. 

% We do not make any assumptions on how the attackers will choose which nodes $\mathcal{N}^c$ to launch their attacks to, neither do we assume a specific size for $\mathcal{N}^c$. However, note that the choice of $\mathcal{N}^c$ can affect the learning process and we have  conducted experiments to investigate it.  

	\section{\sys: A Robust  Federated Learning Framework} 
\label{sec:SABRE}
In this section, we describe \sys{}, our novel peer-to-peer  federated learning algorithm.
%We present our algorithm for the linear regression problem.  In this paper, we show that \sysbase{}  algorithm is not robust against the label-flipping and Trojan attacks. Therefore, we propose a modified federated learning algorithm, \sys{}, and will prove its robustness against such attacks. 
\sys{}  uses the variational learning method with Gaussian posteriors   explained in Section \ref{sec:prelim}. All of the steps of the  \sys{} algorithm are presented in  Algorithm \ref{alg:SABRE}.
\sys{} distinguishes itself from other frameworks by incorporating several innovative components, which are explained in the following. The most important component of \sys{} is the introduction of \textbf{Local and Social Models}.  In \sys{}, clients learn local  likelihood functions (also referred to as models or beliefs)  on the model parameters. In particular, clients train two models; a local model, and a social model. The local model is trained only using the local dataset based on the variational Bayesian learning method, while the social model is trained according to the federated learning algorithm that will be described later. The  social and local models are Gaussian likelihood functions with parameters denoted by $\bar{w}^i_t=(\bar{\theta}^i_t, \bar{\Sigma}^i_t)$ and $\hat{w}^i_t=(\hat{\theta}^i_t, \hat{\Sigma}^i_t)$, respectively, where $\bar{\theta}^i_t$ and  $\hat{\theta}^i_t$ are the means  of the social and local Gaussian belief of client $i$ on $\theta$ and $\bar{\Sigma}^i_t$ and $\hat{\Sigma}^i_t$ are their covariance matrices. The learning happens with the social model, while the local model has two critical applications: (1) Identifying  Compromised Clients, and (2) Introspection and Model Overwriting:  

\textbf{(1) Identifying  Compromised Clients:}
It is well established in the literature that  if a number of clients are under attack in a peer-to-peer federated learning algorithm,  the models of the benign clients can also become poisoned (due to aggregating their models with the malicious model updates). Therefore, these models can not be used as a ground truth to evaluate other clients' models and identify poisoned ones.  However, local models, trained solely on local datasets of benign clients, remain entirely clean.  
Therefore, they can be utilized as a reliable ground truth to identify compromised clients. Nonetheless, the local models suffer from low accuracy due to the limited size of the local datasets.
To address this limitation, Bayesian models prove advantageous by providing uncertainties in their model training. These uncertainties are incorporated into the process of evaluating received model updates from different clients. This enhances the evaluation process significantly, as clients primarily rely on elements of their models with higher certainty to flag malicious users. Consequently, this approach minimizes false positives, ensuring that benign clients are not erroneously flagged as malicious.  This is facilitated by the  novel \textbf{Uncertainty Aware Model Aggregation} method used in \sys{}, which is another important component of our method and is described in the following.

 Unlike \sysbase{} \cite{wang2022peer} that considers  predetermined time-invariant trust weights in Eq.\eqref{eq:agg:prelim},  we consider time-dependent trust weights, $T^{ij}_t$, which are defined to robustify the algorithm against  poisoning attacks. We define the trust weights in \sys{}, referred to as the bounded confidence trust weights,  such that  each client only aggregates the opinion of those with similar opinion to it, where two models are deemed similar if their element-wise distance is less than  a confidence bound. The confidence bound is determined by the uncertainty of the client over its own model, which is provided by the  Bayesian models. The trust weights are formally defined below.
\begin{definition}[Bounded Confidence Trust Weights]
\label{def:trust}
For each client $i$, the set $I(\bar{w}^{\mathcal{N}_t(i)}_t,\hat{w}^i_t)$, which is called the  confidence set of client $i$ at time $t$,  is defined as follows. 
% \vspace{-0.1cm}
\begin{align}
I(\bar{w}^{\mathcal{N}_t(i)}_t,\hat{w}^i_t)=\nonumber\{j \in \mathcal{N}_t&(i):  |(\hat{\theta}^i_t)_k-(\bar{\theta}^j_t)_k|\\&\leq \kappa\sqrt{(\hat{\Sigma}^i_t)_{k,k}}, \ \forall k \in \mathcal{K}\}
\end{align}
where $\mathcal{K}=\{1, \cdots K\}$ and $K$ is the dimension of the model parameter, and  $\kappa$ is a hyper parameter.  We define the trust weights as follows. 
\begin{align}
    T^{ij}_t=\frac{1}{|I(\bar{w}^{\mathcal{N}_t(i)}_t,\hat{w}^i_t)|}\mathbf{1}(j \in I(\bar{w}^{\mathcal{N}_t(i)}_t,\hat{w}^i_t)) 
    \end{align}
    %\vspace{-0.2cm}
    \label{BCW}
\end{definition}
 %Notice that this set will change through time due to the local and social beliefs changing. 
\vspace{-0.3cm}
 The hyperparameter $\kappa$ determines how strict  the aggregation rule is. If $\kappa$ is too large, poisoned  updates might  be aggregated and if $\kappa$ is too small, both benign and poisoned updates  might not be aggregated. We will see through our experiments that $\kappa$ can be easily tuned to have a robust algorithm that also performs well in benign settings. 

\textbf{(2) Introspection and Model Overwriting:} The local models are also used in an introspection process, in which a client will evaluate its  own social model.  
% In order to help the network learn the correct  model, each client needs to make sure that its social model is clean by checking if it is too far from its local model. 
This is done  based on the bounded confidence measure described in definition \ref{def:trust} such that if a hypothetical client with the social belief of  client $i$ would not be in the bounded  confidence set of client $i$, then it will overwrite its social belief with its local belief. Notice that the model overwriting is done in the beginning rounds of the algorithm where the accuracy of the local model is still improving (before overfitting happens).

\begin{algorithm}[ht]
\small
\caption{\sys{}  Algorithm executed by client $i$}
\label{alg:SABRE}
\begin{algorithmic}
\STATE Input:    ${\Sigma}^i_0$, $\Sigma^{thr}$, $\kappa$, and $T_{max}$.  
\STATE Initialize $t=1$,  $\bar{w}^i_0=\hat{w}^i_0=(0,
\Sigma^i_0)$.
% $\bar{w}^i_0=(\bar{\theta}^i_0,\bar{\Sigma}^i_0) =(0,
% \Sigma^i_0)$, $\hat{w}^i_0=(\hat{\theta}^i_0,\hat{\Sigma}^i_0) =(0,
% \Sigma^i_0)$.
%\STATE Set $q(\theta|w)$ to be a Gaussian distribution on $\theta$ with parameter $w$. 
% \STATE Set $\mathcal{P}_0(\theta)=q(\theta|w^i_0)$.
%\STATE Initialize $\bar{\theta}^i=0$,
%$\bar{\Sigma}^i={\Sigma}^i_0$.
%\STATE Start the local clock and set $t=1$.
%\STATE Start the internal clock.
\WHILE {$\bar{\Sigma}^i_{t-1}> \Sigma^{thr}$ and $t<T_{max}$}
\STATE Receive  {$\mathcal{D}^i_t$} and compute $\hat{w}^{i}_t=(\hat{\theta}^{i}_t,\hat{\Sigma}^i_t)$ and $\bar{w}^{i}_t= (\bar{\theta}^{i}_t,\bar{\Sigma}^i_t)$,  by minimizing the variational free energy loss function \eqref{eq:loss},
starting from  $\hat{w}^i_{t-1}$ and $\bar{w}^i_{t-1}$, respectively. 

\STATE Share $\bar{w}^{i}_t$ with $\mathcal{N}^o_t(i)$ and receive {$\bar{w}^{j}_t$ for $j \in \mathcal{N}_t(i)$}.
\STATE Set 
    \begin{align*}
        I(\bar{w}^{\mathcal{N}_t(i)}_t,\hat{w}^i_t)=\{j \in \mathcal{N}_t(i) |&(\hat{\theta}^i_t)_k-(\bar{\theta}^j_t)_k|\\&\leq \kappa\sqrt{(\hat{\Sigma}^i_t)_{k,k}}, \ \forall k \in \mathcal{K}\}
    \end{align*}
\STATE Set   $T^{ij}_t=\frac{1}{|I(\bar{w}^{\mathcal{N}_t(i)}_t,\hat{w}^i_t)|}\mathbf{1}(j \in I(\bar{w}^{\mathcal{N}_t(i)}_t,\hat{w}^i_t))$. 
\STATE 
%\vspace{-0.3cm}
\begin{subequations}
\label{eq:SoBeAgg_alg}
    \begin{align}
(\bar{\Sigma}^{'i}_t)^{-1}=\sum_{j\in \mathcal{N}_t(i)}T^{ij}_t(\bar{\Sigma}^j_t)^{-1}\\
\bar{\theta}^{i}_t=\bar{\Sigma}^{'i}_t\sum_{j \in \mathcal{N}_t(i)} T^{ij}_t(\bar{\Sigma}^j_t)^{-1} \bar{\theta}^j_t
\end{align}
\end{subequations}
\STATE Set  $\bar{\Sigma}^{i}_t=\bar{\Sigma}^{'i}_t$.
\IF {$\exists k\in \mathcal{K} : \ |(\hat{\theta}^i_t)_k-(\bar{\theta}^i_t)_k|> \kappa \sqrt{(\hat{\Sigma}^i_t)_{k,k}}$}
\STATE Set $(\bar{\theta}^i_t)_k=(\hat{\theta}^i_t)_k$.
\ENDIF
%\STATE Share $\bar{\theta}^{i}$ and $\bar{\Sigma}^{i}$ with $\mathcal{N}(i)$.

\STATE $t=t+1$

\ENDWHILE
\end{algorithmic}
\end{algorithm}

	\section{Learning and Robustness Analysis}
\label{sec:learning}
%\cnr{should we combine the learning and robustness in one section? not sure what is ICML culture}

% \cnr{the analysis below does not say anything about attackers}
% {\textcolor{red}{Nasimeh: I do the learning analysis for the benign setting and then add another theorem for the learning under attack}}
In order to provide a thorough theoretical analysis of the learning in \sys{},  in this section, we consider a special case where the clients' models constitute of a single linear layer with parameter  $\theta^*\in\mathbb{R}^K$ (decentralized linear regression). We assume that the labels $y^i_t$ are generated according to the linear equation of $y^i_t=\langle\theta^*,x^i_t\rangle+\eta^i_t$, where $\langle\rangle$ denotes the inner product and we assume $\eta^i_t\sim N(0,\Sigma^i)$ is a Gaussian  noise. 
In order to simulate the non-IID datasets of clients,  we assume $x^i_t\in \mathcal{X}^i$ and  $\mathcal{X}^i\coloneqq \{x \in \mathbb{R}_+^K: x_k=0, \text{for } k \not\in \mathcal{K}^i\}$, where $\mathcal{K}^i\in \mathcal{K}$ is a subset of model parameter indices that user $i$ makes local observations on. The dataset of each client can be deficient for learning $\theta^*$. That is, we can have $\mathcal{K}^i\neq  \mathcal{K}$, for all or some $i \in \mathcal{N}$. Note that this model can represent both vertical ($\mathcal{K}^i\neq \mathcal{K}$) and horizontal ($\mathcal{K}^i= \mathcal{K}$) federated learning settings. 
%Also, notice that the above setting includes a wide range of cases for clients' datasets, from IID when we have $\mathcal{K}^i=\mathcal{K}^j$ and $\mathcal{P}_i=\mathcal{P}_j$, to orthogonal when we have $\mathcal{K}^i \cap \mathcal{K}^j =\emptyset$.
For decentralized linear regression problem, the variational Bayesian learning updates will be simplified to a Kalman filter update on the parameters of the beliefs as described in Appendix equations \eqref{eq:PrBeUp_alg} and \eqref{eq:SoBeUp_alg}. 
% Throughout this section,  we provide learning and robustness guarantees for this special case of \sys{}.

In order to analyze the learning in \sys{}, we need to make the following connectivity assumption on the communication graph. 
% We will show that by some form of connectivity assumption on the communication network, using \sys{}, the agents will learn the true model parameter $\theta^*$  with  any desired precision  if their datasets are sufficiently large.  
% In order to state the  connectivity assumption on the communication graph, we define a joint clock with time slot length being the maximum of the local clocks' time slot length.  We define $A^{J}_t$ to be the adjacency matrix of the communication graph at cycle $t$ of the joint clock. That is, we set $(A^{J}_t)_{ij}=1$ if  client $j$ can communicate with client $i$ throughout cycle $t$.
\begin{definition}[Relaxed Connectivity
  Constraint]\label{def:rel_con}
If there exists a set $\{s_1, s_2, \cdots, s_t, \cdots\}$  with integers $s_t>0$, such that a graph with adjacency matrix $\bar{A}_t=\prod_{l=s_{t}}^{s_{t-1}}A_l$ is  strongly connected, 
then we say that the communication graph $A_t$ satisfies  the relaxed connectivity constraint.   
%  We define $\bar{A}_t=\prod_{l=s_{t}}^{s_{t-1}}A_l$ for some   set $\{s_1, s_2, \cdots, s_t, \cdots\}$ and  integers $s_t>0$. If there exists integers $s_t$, where  a graph with adjacency matrix $\bar{A}_t$ is strongly connected, 
% then we say that the communication graph satisfies  the relaxed connectivity constraint.   
\end{definition}

Note that under the relaxed connectivity constraint, the communication graph over a cycle does not have to be connected and at some times it can be a disconnected graph. The intuitive explanation is that in order for the  learning to happen correctly, it is sufficient that the message of a client  is received by others after finite cycles and  although there might not be  a path between two clients at each cycle $t$, it is sufficient that a  path is formed between them in near future.  We also note that the relaxed connectivity constraint is similar to the B-strong connectivity condition introduced in \cite{nedic2015nonasymptotic}. 

% For \sysbase{} algorithm, we assume that $w^{ij}\geq \delta$ for all $i,j\in \mathcal{N}$. That is, all nodes trust the updates that are received. This assumption does not affect the learning in this algorithm since if we have $w^{ij}=0$, then we can assume that there is no link between nodes $i$ and $j$. 

% \begin{theorem}[Learning by \sysbase{} Algorithm]\label{thm:learn_\sysbase{}}
% If agents learn according to \sysbase{} algorithm and the communication graph  satisfies the relaxed connectivity constraint, and if no agent is compromised, i.e., $\mathcal{N}^c=\emptyset$, then each agent $i$ learns the model parameter $\theta$ with mean square error that is decreasing proportional to $\frac{1}{t}$, where $t$ is the number of agent $i$'s local cycles.
% \end{theorem}
% \begin{proof}
% See appendix \ref{proofs}.
% \end{proof}

In the next theorem, we show that in \sys{}, clients learn the true model parameter in a benign setting. 
\begin{theorem}[Learning by \sys{}]\label{thm:learn_bcp2pfl}
If agents learn according to \sys{}  algorithm and the communication graph  satisfies the relaxed connectivity constraint,  and if no agent is compromised, i.e., $\mathcal{N}^c=\emptyset$, then each agent $i$ learns the model parameter $\theta^*$ with mean square error that is decreasing proportional to $\frac{1}{t}$.
\end{theorem}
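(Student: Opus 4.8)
The plan is to rewrite \sys{} in \emph{information form} and reduce the benign case to a time-varying linear consensus with additive inputs. Writing $\Omega^i_t := (\bar\Sigma^i_t)^{-1}$ and $\xi^i_t := \Omega^i_t \bar\theta^i_t$, the social aggregation (cf.\ \eqref{eq:agg:prelim}) becomes the row-stochastic averaging $\Omega^{'i}_t = \sum_{j}T^{ij}_t\,\Omega^j_t$ and $\xi^{'i}_t=\sum_j T^{ij}_t\,\xi^j_t$, while the local Kalman measurement update contributes an additive injection $(\Sigma^i)^{-1}\,x^i_s (x^i_s)^{\!\top}$ to $\Omega^i$ and $(\Sigma^i)^{-1}\,x^i_s y^i_s$ to $\xi^i$ in each batch. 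Because $x^i_s$ is supported on $\mathcal{K}^i$, the \emph{local} information matrix $(\hat\Sigma^i_t)^{-1}$ accumulates only in its $\mathcal{K}^i\times\mathcal{K}^i$ block: I would first establish, by the strong law together with Gaussian tail bounds for the accumulated empirical covariance, that $(\hat\Sigma^i_t)_{k,k}=\Theta(1/t)$ and $(\hat\theta^i_t)_k\to\theta^*_k$ with error standard deviation $\Theta(1/\sqrt t)$ for $k\in\mathcal{K}^i$, whereas $(\hat\Sigma^i_t)_{k,k}$ stays at its prior scale $\Theta(1)$ for $k\notin\mathcal{K}^i$.

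The second step, which I expect to be \textbf{the main obstacle}, is to show that on a high-probability event the bounded-confidence filtering is \emph{inactive} in the benign case, so that \sys{} coincides with plain information-form consensus over the full neighborhood $\mathcal{N}_t(i)$. For $k\notin\mathcal{K}^i$ the threshold $\kappa\sqrt{(\hat\Sigma^i_t)_{k,k}}$ is of constant (prior) order and the membership test of Definition~\ref{def:trust} passes trivially. The delicate coordinates are $k\in\mathcal{K}^i$: there both the local error $(\hat\theta^i_t)_k-\theta^*_k$ and any benign neighbor's social error $(\bar\theta^j_t)_k-\theta^*_k$ are of order $1/\sqrt t$, \emph{the same order as the threshold}. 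I would decompose $(\hat\theta^i_t)_k-(\bar\theta^j_t)_k$ into these two estimation errors, bound each by a Gaussian concentration inequality, and show that for $\kappa$ larger than a fixed multiple of the ratio of the two standard deviations the test holds for every $k$ and every $j\in\mathcal{N}_t(i)$ with probability $1-\delta$; a maximal inequality for the normalized Kalman error (or a Borel--Cantelli argument) then upgrades this to hold for all large $t$ simultaneously. The same estimate shows the self-overwrite step (the \textbf{if} block of Algorithm~\ref{alg:SABRE}) does not fire for large $t$, and when it does fire in early rounds it only replaces $(\bar\theta^i_t)_k$ by the equally consistent local value. Conditioning on this good event decouples the random trust weights from the beliefs, leaving a \emph{linear} recursion.

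Finally I would invoke the consensus machinery. Under the relaxed connectivity constraint of Definition~\ref{def:rel_con}, the backward products of the row-stochastic matrices $[T^{ij}_t]$ contract to a rank-one (consensus) operator geometrically over the strongly connected windows $\bar A_t$, exactly as in the $B$-strong-connectivity analysis of \cite{nedic2015nonasymptotic}. Tracking the Perron-weighted average information $\sum_i \pi_i\,\Omega^i_t$, each cycle injects $\Theta(1)$ of information, so after $t$ cycles $\Omega^i_t\succeq c\,t\,\mathbf{I}$ provided the network is \emph{collectively identifiable}, $\bigcup_i\mathcal{K}^i=\mathcal{K}$ (otherwise some coordinate of $\theta^*$ is never observed anywhere); hence $\bar\Sigma^i_t = (\Omega^i_t)^{-1}=O(1/t)$. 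Propagating the information vector identically yields $\bar\theta^i_t=\theta^*+(\Omega^i_t)^{-1}\zeta^i_t$, where $\zeta^i_t$ is the consensus-aggregated observation noise whose covariance equals $\Omega^i_t$ to leading order; therefore $\mathbb{E}\lVert\bar\theta^i_t-\theta^*\rVert^2 = \operatorname{tr}\big((\Omega^i_t)^{-1}\big)=O(1/t)$, which is the claimed rate. The ``with high probability'' in the statement is precisely the good event of the second step, and the constant in $O(1/t)$ depends on $N$, the mixing time of the windows, and the smallest eigenvalue of the collective information rate.
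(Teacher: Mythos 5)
Your proposal is correct and shares the overall architecture of the paper's proof: both arguments (i) track the inverse covariance and use the rank-one information injections $x^i_t (x^i_t)^{\top}/\Sigma^i$ (the paper derives this via Sherman--Morrison in Lemma \ref{lm:sigma_p2pfl}) to get the $1/t$ covariance decay under the relaxed connectivity constraint, and (ii) reduce the \sys{}-specific difficulty to showing that, among benign agents, the bounded-confidence test of Definition \ref{def:trust} passes with high probability so that the time-varying trust matrix $T_t$ inherits the connectivity of $A_t$ --- the paper's version of your $\kappa$-versus-ratio-of-standard-deviations condition is the estimate $|(\hat\theta^i_t)_k-(\bar\theta^j_t)_k|<2\sqrt{(\hat\Sigma^i_t)_{k,k}}+2\sqrt{(\bar\Sigma^j_t)_{k,k}}$ with $\kappa\approx 4$ when the two variances are comparable. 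Where you genuinely diverge is the mean-convergence/MSE step: the paper (Lemma \ref{lm:estimation_p2pfl}) expands $\bar\theta_t$ as $\tilde I\theta$ minus a backward product $\bar T_t D(G_t)\cdots\bar T_1 D(G_1)\tilde I\theta$ plus cumulative noise, proves the product is a contraction using the connectivity windows, and kills the noise by the weak law of large numbers; you instead stay in information coordinates and read off $\bar\theta^i_t=\theta^*+(\Omega^i_t)^{-1}\zeta^i_t$ with $\mathrm{Var}(\zeta^i_t)\preceq\Omega^i_t$ (since the squared mixing weights are dominated by the weights themselves), which gives $\mathbb{E}\|\bar\theta^i_t-\theta^*\|^2\le\mathrm{tr}\big((\Omega^i_t)^{-1}\big)=O(1/t)$ in one stroke. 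Your route is tighter in two respects: it couples the bias and noise analysis to the covariance decay so the $1/t$ MSE rate falls out directly (the paper obtains convergence of the estimates and the $1/t$ rate for the belief variance separately, and identifies the two), and your Borel--Cantelli/maximal-inequality upgrade makes the good event hold for all large $t$ simultaneously, plus you handle the coordinates $k\notin\mathcal{K}^i$ explicitly, both of which the paper's proof passes over with pointwise high-probability statements. The paper's expansion, on the other hand, is the workhorse it reuses verbatim for the attack analysis in Theorem \ref{thm:p2pfl_dp} (the bias vector $\mathbf{c}_t$ lives in exactly that decomposition), so it buys uniformity across the paper's later theorems.
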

% \begin{proof}
% See appendix \ref{proofs}.
% \end{proof}
We note that the learning rate in \sys{}  algorithm is of the same order, $\frac{1}{t}$, as the learning rate in \sysbase{}. It means that modifications made to add robustness to the learning process have not compromised its performance in a benign setting. We also need to mention that the actual learning rate in \sys{}  is even higher than \sysbase{} as is observed through our experiments. 

We also note that the mean square error of the model parameter estimation decreasing proportional to $\frac{1}{t}$ indicates that the mean square error of the label predictions will also decrease proportional to  $\frac{1}{t}$. Thus, if $\frac{1}{t}<\delta$, for some small and arbitrarily chosen $\delta$, we have $\mathbb{E}[(y^i_t-\hat{y}^i_t)^2]<\xi \delta$, where $\hat{y}^i_t$ is the predicted label and $\xi$ is a constant.

% We will prove that  the baseline variational Bayesian federated learning algorithm, \sysbase{}, is not robust against the considered Label-Flipping  attacks.  We also analyze the learning in \sys{}  in both benign and under such poisoning attacks.

% \begin{proof}
% See appendix \ref{proofs}.
% \end{proof}
We analyze the robustness of \sys{} under  a special case of the Label-Flipping data poisoning attack described in Section \ref{sec:threat}, in which the attackers poison the dataset of the compromised nodes by adding a bias $b$ to their data sample labels, $(y^i_t)'=y^i_t+b,\  \forall i \in \mathcal{N}^c$. Without loss of generality, we assume the attackers to different clients agree on a single bias $b$ to add to the data labels. 
 The second attack that we consider is the General Random model poisoning attack described in Section \ref{sec:threat}. 
 % We will also generalize our results to other poisoning attacks by making an additional assumption.  
 All of the proofs of the theorems in this section can be found in Appendix \ref{proofs}. 
 
In order to study the robustness of \sys{}, we need to state the following assumptions. 
%Before proving the robustness of \sys{} against the considered Label-Flipping attack, we first state the three assumptions that we need to make to ensure its robustness.  

% \begin{figure*}[t!]
%   \begin{center}
%      \subfloat[Label-Flipping Attack]{\label{fig:mnist:label}\includegraphics[width=0.2\textwidth]{}} 
%     \subfloat[A Little is Enough Attack]{\label{fig:mnist:little}\includegraphics[width=0.2\textwidth]{}} 
%      \subfloat[Bit-Flip Attack]{\label{fig:mnist:bit}\includegraphics[width=0.2\textwidth]{}} 
%     \subfloat[General Random Attack]{\label{fig:mnist:general}\includegraphics[width=0.2\textwidth]{}}
%       \subfloat[Gaussian Attack]{\label{fig:mnist:gaussian}\includegraphics[width=0.2\textwidth]{}}
%   \end{center}
%   \caption{Test accuracy plot of \sys{} compared with the baseline \sysbase{}, and  Zeno, Trimmed mean, and Clipping defense methods under different data and model poisoning attacks evaluated on MNIST dataset.  }
%   \label{fig:mnist}
% \end{figure*}

\begin{assumption}[Sufficiency]
\label{ass:suff}
The collection of the datasets of benign clients is sufficient for learning the model parameter $\theta^*$. That is,  $\cup_{i \in \mathcal{N}/\mathcal{N}^c}\mathcal{K}^i=\mathcal{K}$.
\end{assumption}
\begin{assumption}[Relaxed Connectivity]
\label{ass:rel_con}
The communication graph of the benign nodes satisfies the relaxed connectivity constraint. 
\end{assumption}
\begin{assumption}[Joint Learning]
For every pair of benign node $i$ and compromised node $j$ that can communicate with each other at some time, we have $\mathcal{K}^i \cap \mathcal{K}^j \neq \emptyset$.
\label{ass:Jlearning}
\end{assumption}
The Sufficiency and Relaxed Connectivity assumptions are common assumptions needed for learning.  The Joint Learning assumption is made to ensure benign users can detect the compromised ones. In order for this detection to happen, a benign client has to be learning at least one common element of the model parameter with a compromised client to be able to evaluate its updates and detect it. 
% We will later relax this assumption for the General Random attacks. 

\begin{theorem}[Robustness of \sys{}, Label-Flipping Attack]
In \sys{},  if nodes $\mathcal{N}^c$ are compromised by Label-Flipping attack, and if assumptions \ref{ass:suff}, \ref{ass:rel_con}, and \ref{ass:Jlearning} hold, then  the estimation of the benign users converge to $\theta^*$ with mean square error that is decreasing proportional to $\frac{1}{t}$. 
% Therefore,  \sys{}  is robust against the Label-Flipping attack.
\label{thm:bcp2pfl_robust}
\end{theorem}
% \begin{proof}
% See appendix \ref{proofs}.
% \end{proof}

\begin{figure*}[t]
  \begin{center}
    \subfloat[Label-Flipping Attack]{\label{fig:label}\includegraphics[width=0.5\textwidth]{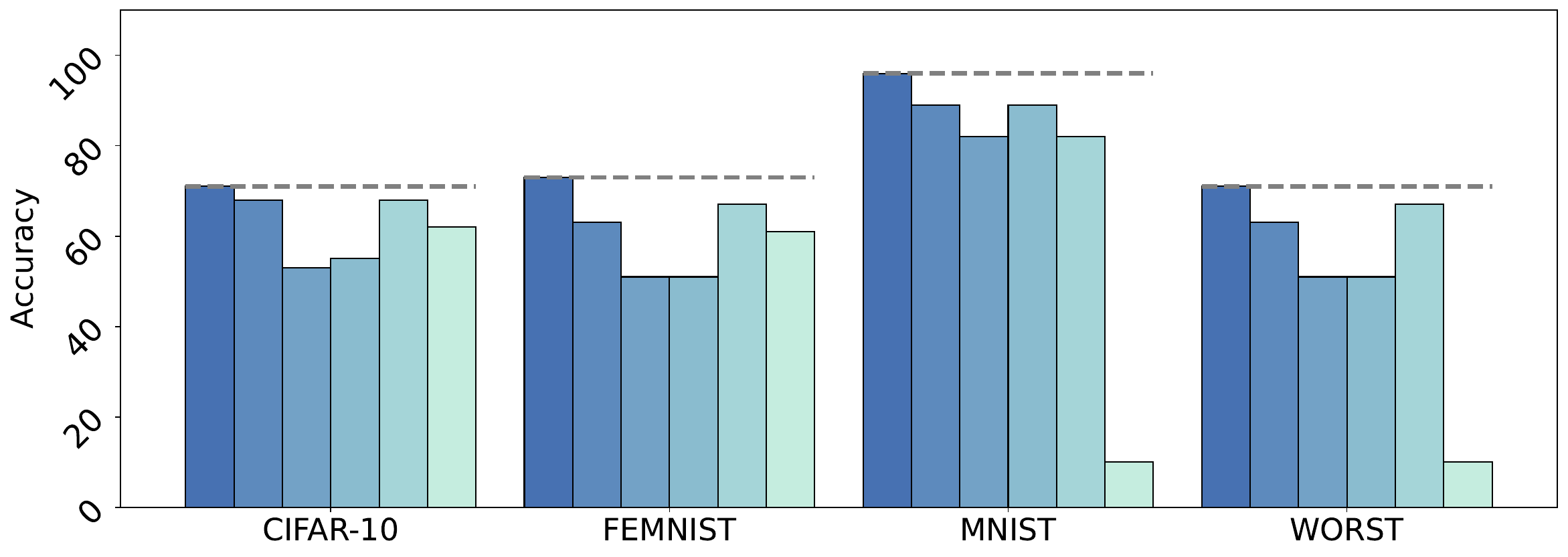}} 
     \subfloat[A Little is Enough Attack]{\label{fig:little}\includegraphics[width=0.5\textwidth]{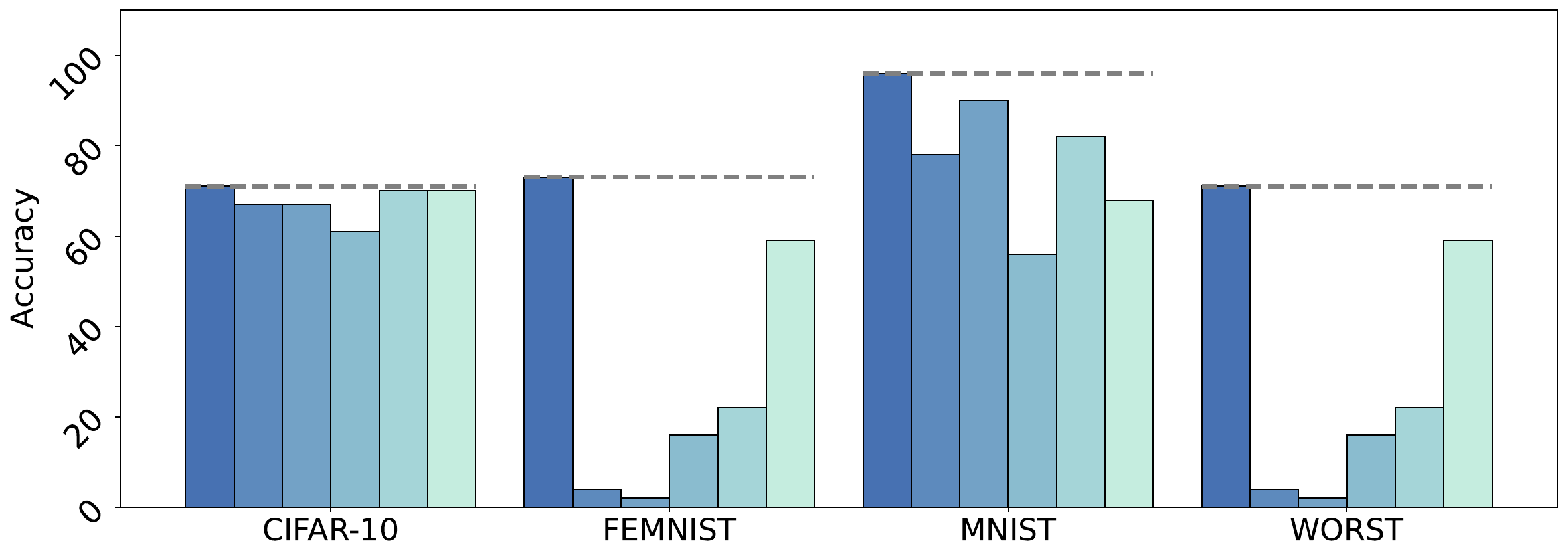}} \\
     \subfloat[Bit-Flip Attack]{\label{fig:bit}\includegraphics[width=0.5\textwidth]{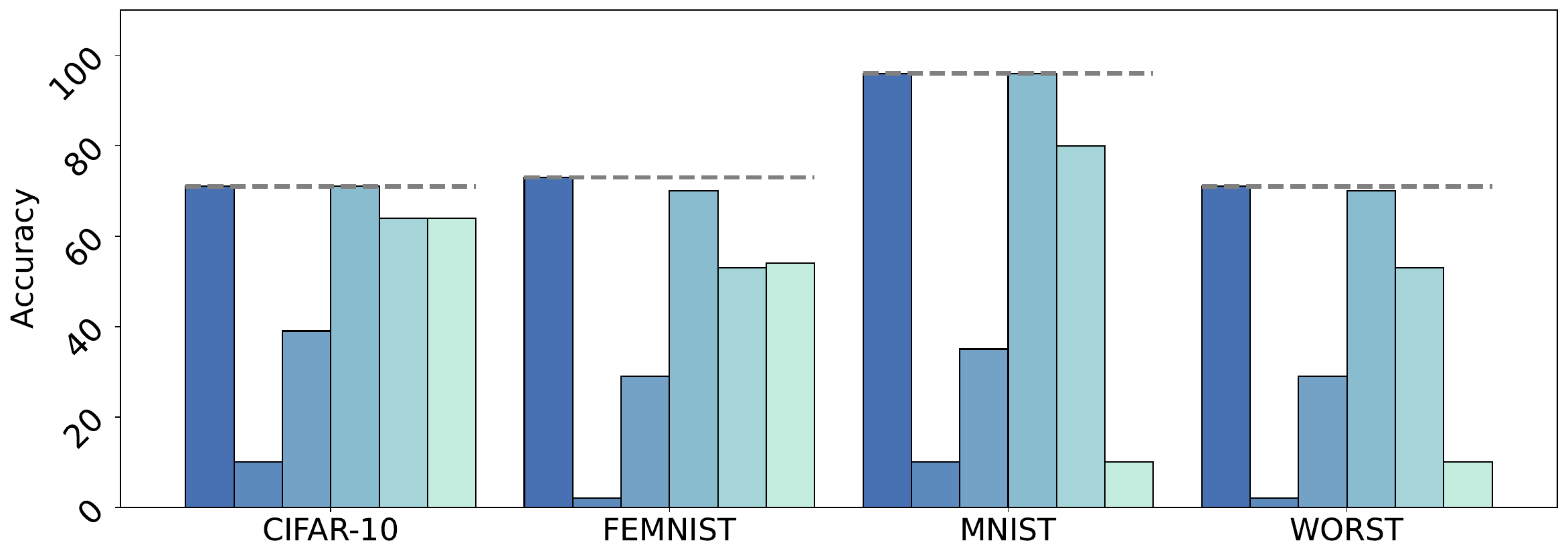}} 
    \subfloat[General Random Attack]{\label{fig:general}\includegraphics[width=0.5\textwidth]{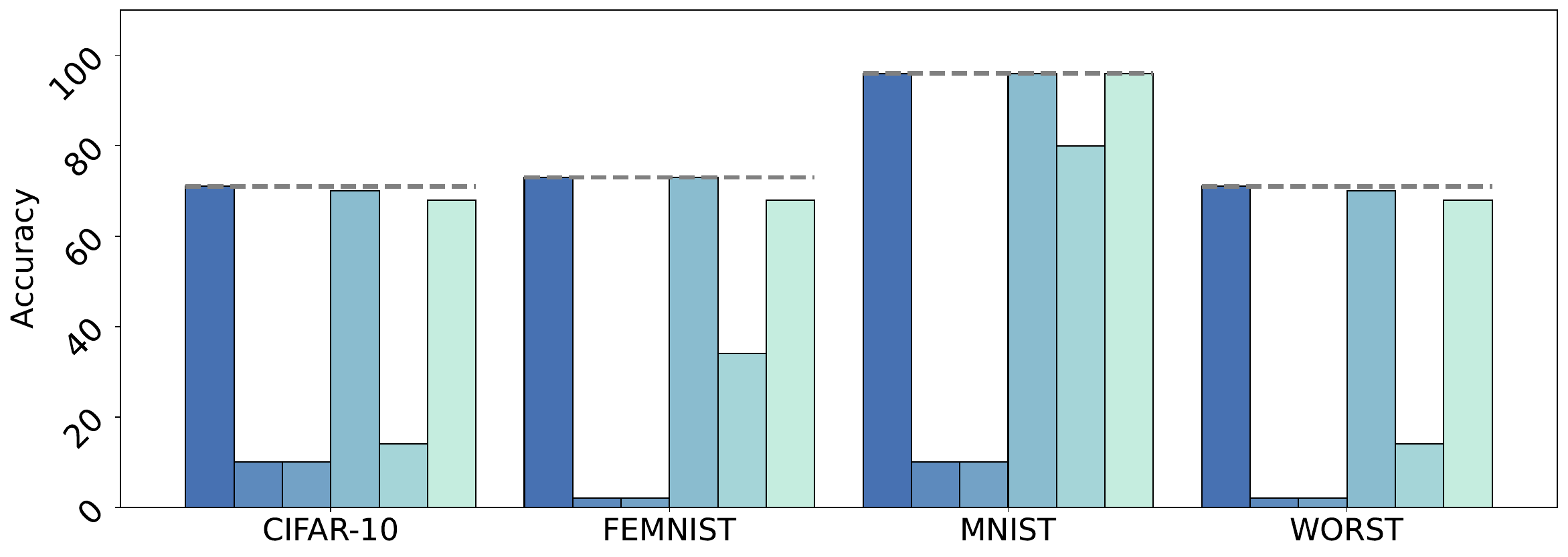}}\\
     \subfloat[Gaussian Attack]{\label{fig:gaussian}\includegraphics[width=0.5\textwidth]{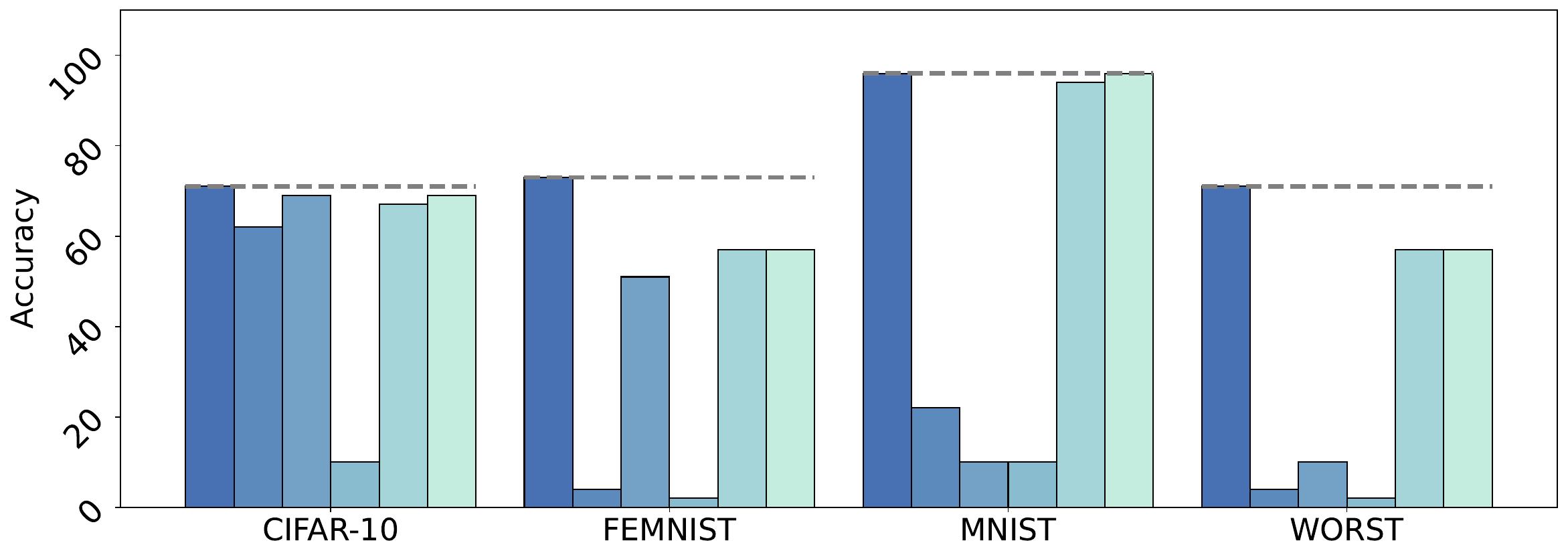}}
\subfloat{\label{fig:legend}\includegraphics[width=0.5\textwidth]{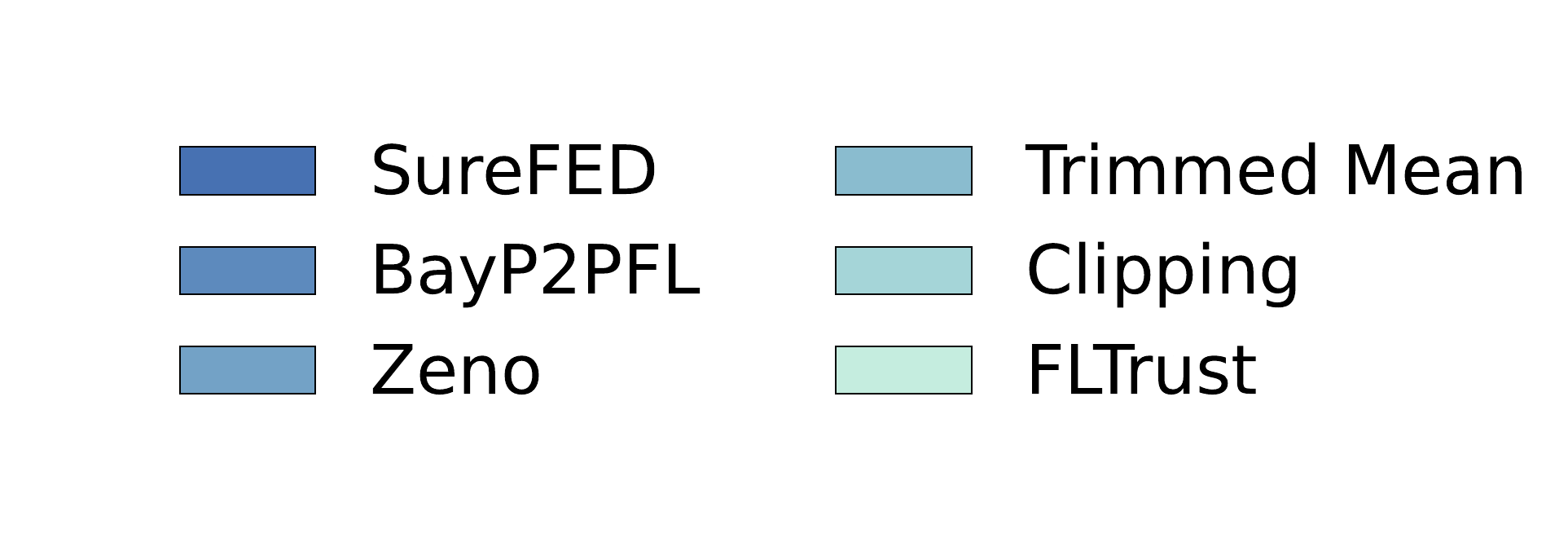}}
  \end{center}
  \vspace{-0.3cm}
  \caption{Final model accuracy of \sys{} and the other baselines  under different data and model poisoning attacks evaluated on CIFAR10, FEMNIST, and MNIST datasets. The dashed line shows the final accuracy in a benign setting. WORST plots correspond to the worst model accuracy of the methods across the three dataset.  \sys{} is the only framework that shows consistent robustness against all of the attacks and with all of the three datasets. }
  \label{fig:barplot_ns}
\end{figure*}

We need to note here that in order for the benign users to learn the true model parameter under  Label-Flipping  attack, we do not need to make any assumptions on the number of compromised nodes in each neighborhood nor do we need the majority of the benign nodes over the compromised ones. 
As long as  the above three assumptions hold, the benign nodes can learn the true model parameter even if all but one of their neighborhood are compromised.

% So far in this section, we have provided theoretical analysis of the performance of \sys{} under the considered Label-Flipping attack. 
In the next theorem, we will generalize our results to the General Random model poisoning attacks. This generalization is done by taking into account the fact that the models that are widely used in practice are huge, and this will make it almost impossible for the attackers to remain undetected in \sys{}.

 \begin{theorem}[Robustness of \sys{}, General Random Attack]
In \sys{},  if nodes $\mathcal{N}^c$ are compromised by the General Random model poisoning attack, and if  assumptions  \ref{ass:suff} and \ref{ass:rel_con} hold,  then  with probability  of almost 1, the estimations of the benign users converge to $\theta^*$ with mean square error that is decreasing proportional to $\frac{1}{t}$.  
% Therefore, under these assumptions, \sys{}  algorithm is robust against the  poisoning attacks.
\label{thm:bcp2pfl_robust_gen}
\end{theorem}
Note that we do not need the Joint Learning assumption in the above theorem. This theorem offers a probabilistic guarantee on the robustness of \sys{} with a probability that is almost 1 when the size of the model is large (see the proof in Appendix for more detail). 

\section{Experiments}
% \section{Experiments in Decentralized Non-IID Data Image Classification}
\label{sec:experiments}
% \ruisi{1. Can we call this section Experiments / Results on Decentralized Non-IID Data? I want to make it a line :) }
In this section, we present our experimental results for  a decentralized image classification setting. We consider a network of 50 clients where 40\% of them are under data and model poisoning attacks. 
% \subsection{Experiment Setup}
% Our experiments are done with Python 3.9.7 and benchmarked on Linux Ubuntu 20.04.5 platform. Our workflow is built upon PyTorch~\cite{pytorch} version 1.9.0 and trained on four NVIDIA TITAN Xp GPUs each with 12 GB RAM, and 48 Intel(R) Xeon(R) CPUs with 128 GB RAM.

\textbf{Datasets:}
 We evaluate the performance of \sys{} on three image classification datasets, namely, MNIST and FEMNIST from LEAF~\cite{caldas2018leaf}, and CIFAR10 \cite{krizhevsky2009learning}. The details of these datasets  are summarized in Tab.~\ref{tab:dataset} in the Appendix. 

%\nas{how do we divide CIFAR-10? Also, complete the above table}

\textbf{Baselines:}
 % \sys{} is desined to make \sysbase{} robust against data poisoning attacks and hence, \sysbase{} is used as the main baseline algorithm. 
We compare the performance of \sys{} with the state of the art defense methods for federated learning in all of the three categories that were described in the introduction. Specifically, we compare with  Trimmed Mean \cite{yin2018byzantine}, and Clipping \cite{bagdasaryan2020backdoor} from the first, Zeno \cite{zhao2016bounded} form the second, and FLTrust \cite{cao2020fltrust} from the third category.   
We also included \sysbase{} as a baseline to show the performance of a peer-to-peer federated learning using Bayeisan models without any defense mehtods. 
% As opposed to \sys{} and \sysbase{} that use variational Bayesian learning, Zeno, Trimmed Mean, Clipping, and FLTrust are  based on plain neural network training.  
% Trimmed Mean and Clipping are statistics-based defense methods, while Zeno  works by evaluating the model updates on a clean dataset. FLTrust works by model comparison to the server's  model, which is trained with a clean dataset.

% In the following, we summarize the three baselines that we  use in this paper:
% %We compare \sys{} performance with the following frameworks. Among them, P2PFL trains Bayesian neural networks in a decentralized manner, Zeno and FedAvg are train in a centralized manner with deep neural networks.

% \begin{itemize}
%     \item \sysbase{}: a decentralized peer-to-peer federated learning system trained on Bayesian neural networks. 
%     \item Zeno: a centralized federated learning system trained on DNN models that excludes faulty clients to enable robust model aggregation. 
% \end{itemize}

\begin{figure*}[!ht]
  \begin{center}
    \subfloat[\scriptsize Label-Flipping Attack]{\label{fig:femnist:label}\includegraphics[width=0.2\textwidth]{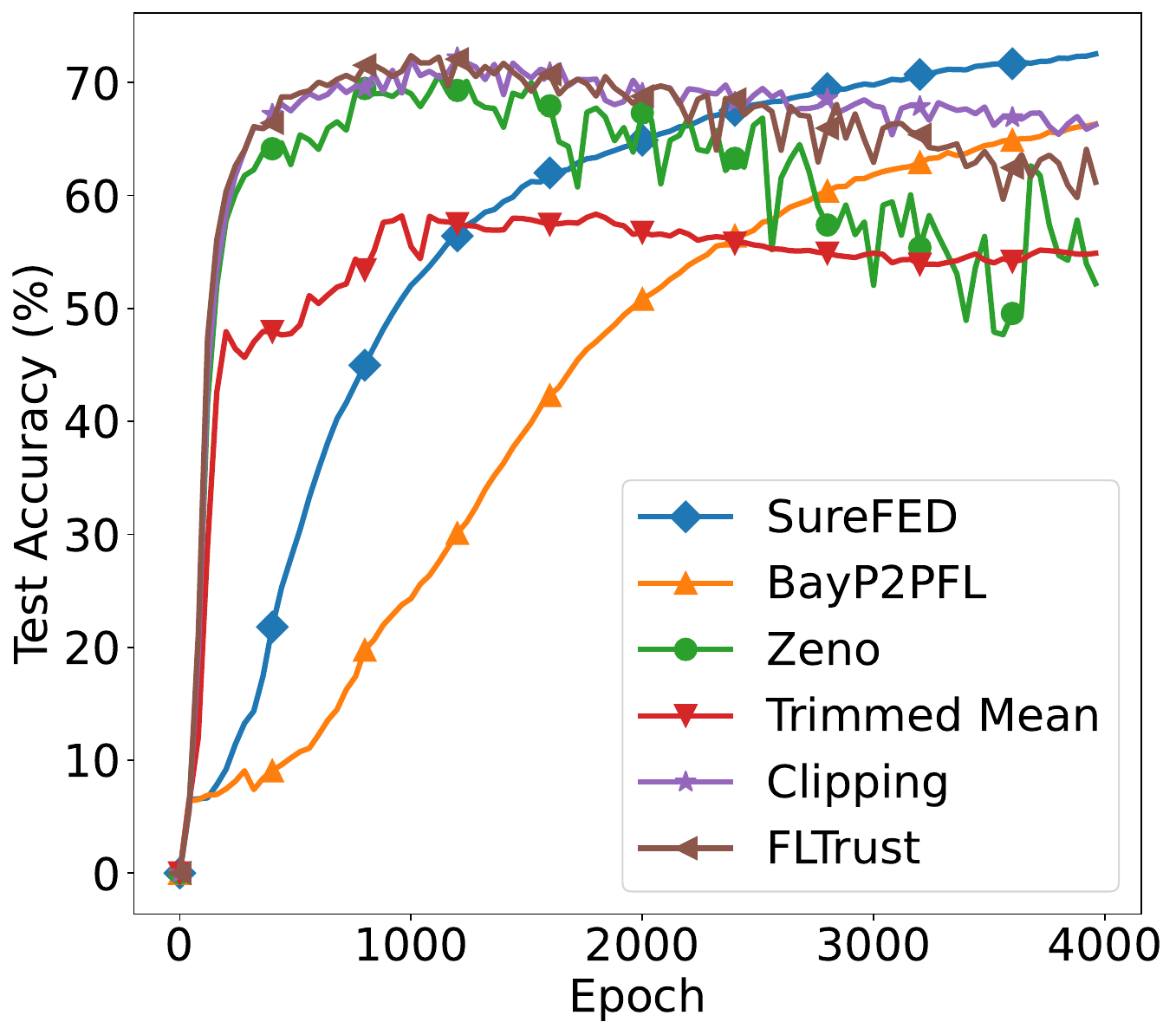}} 
     \subfloat[\scriptsize A Little is Enough Attack]{\label{fig:fmnist:little}\includegraphics[width=0.2\textwidth]{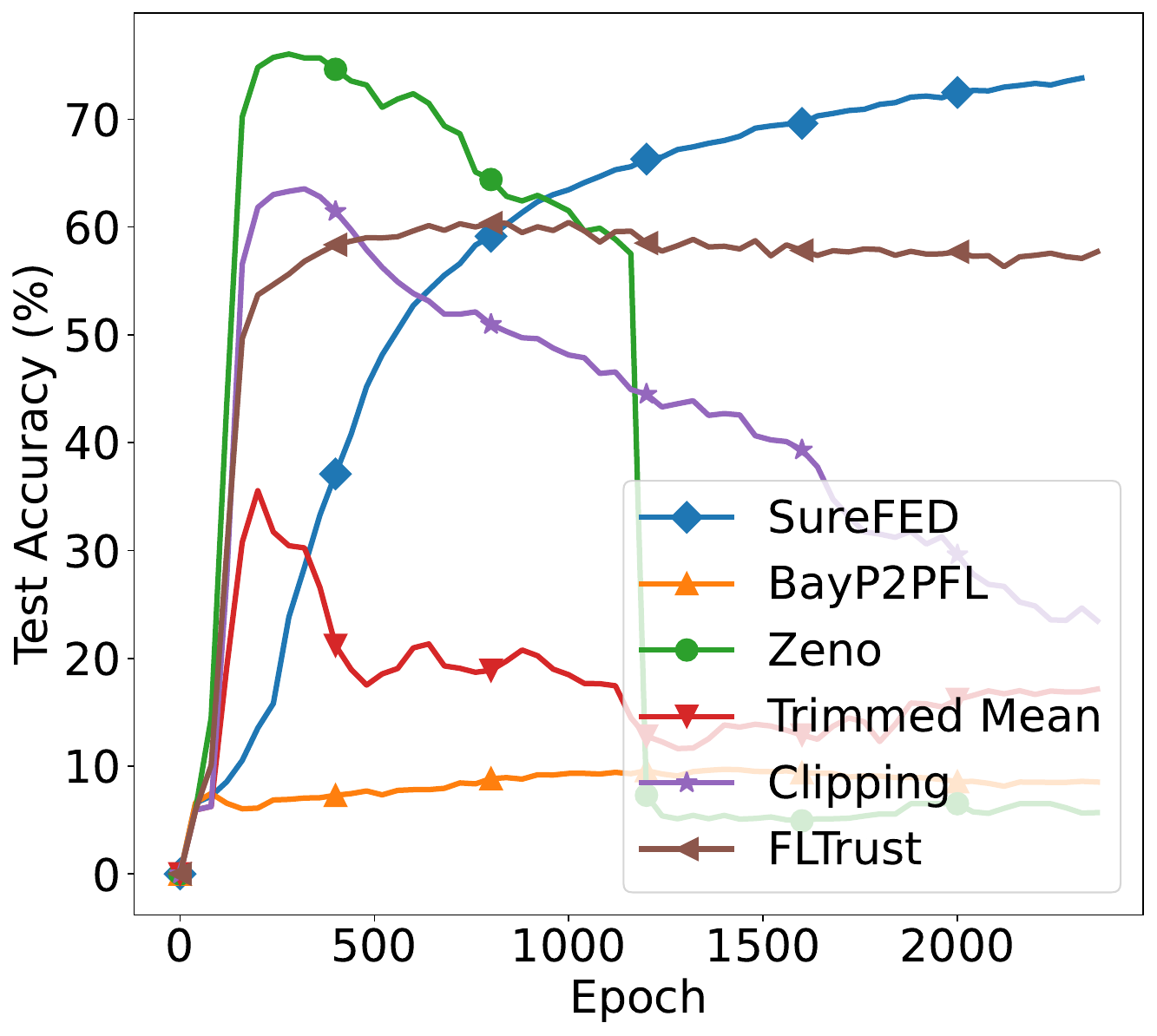}} 
     \subfloat[\scriptsize Bit-Flip Attack]{\label{fig:feminist:bit}\includegraphics[width=0.2\textwidth]{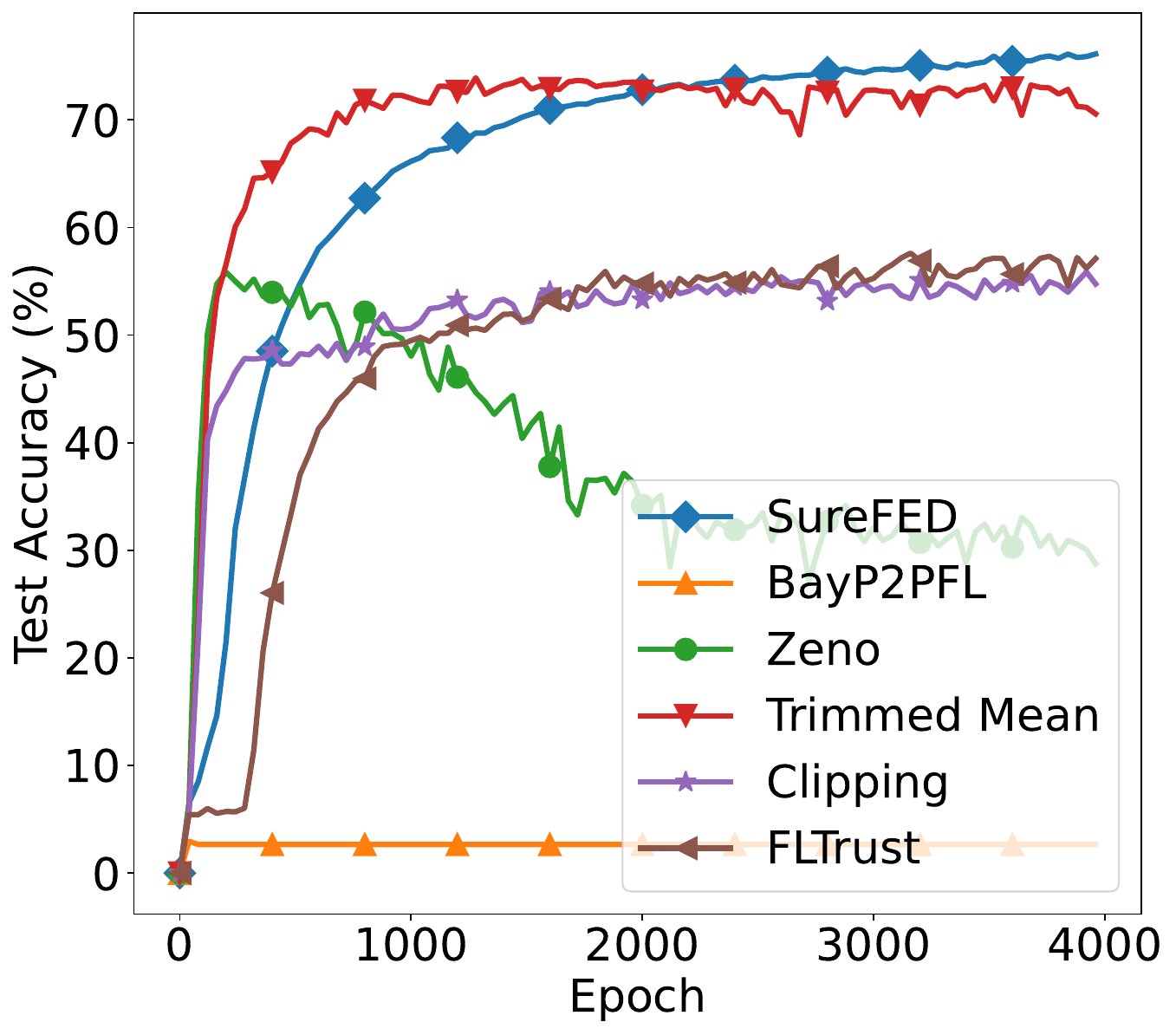}} 
    \subfloat[\scriptsize General Random Attack]{\label{fig:femnist:general}\includegraphics[width=0.2\textwidth]{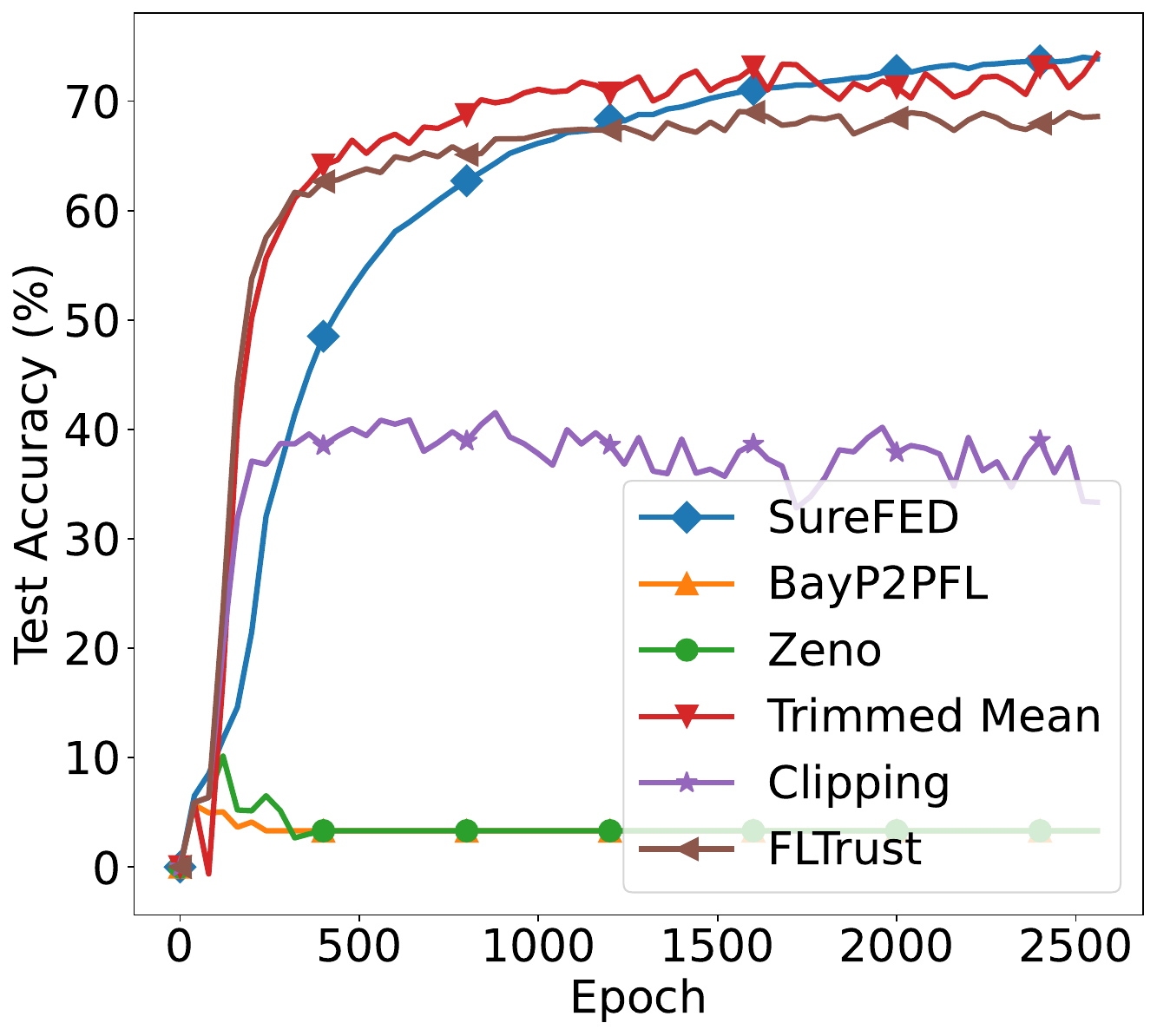}}
     \subfloat[\scriptsize Gaussian Attack]{\label{fig:femnist:gaussian}\includegraphics[width=0.2\textwidth]{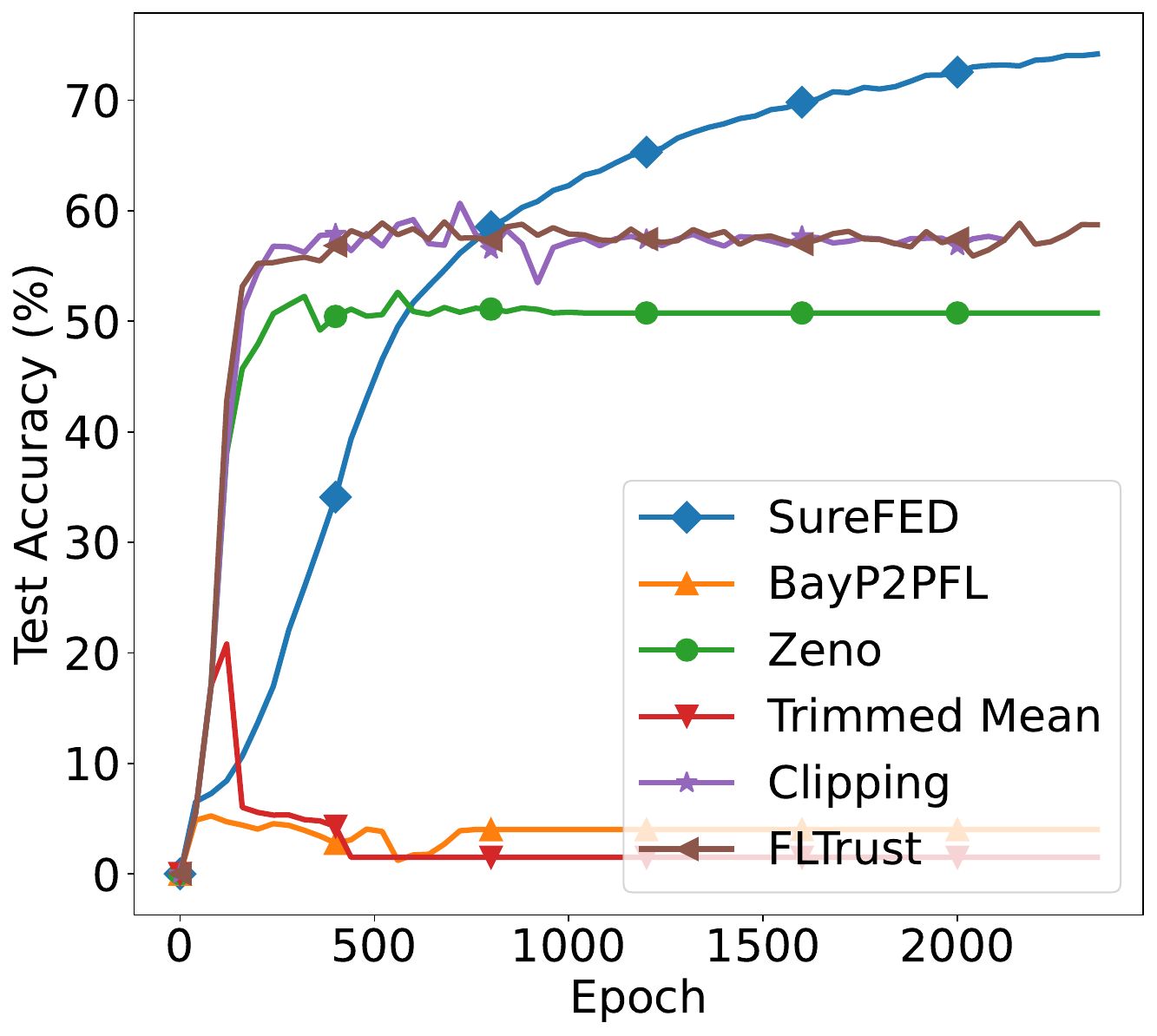}}
  \end{center}
  \caption{Accuracy plot of \sys{} compared with \sysbase{}, Zeno, Trimmed Mean, Clipping, and FLTrust defense methods under different data and model poisoning attacks evaluated on FEMNIST dataset. }
  \label{fig:femnist}
\end{figure*}

\textbf{Attack Details:} We consider six types of colluding and non-colluding data and model poisoning attacks that were described in  Section \ref{sec:threat}. 
% We conduct our experiments on \textbf{Trojan} attack  \cite{xie2020dba}, \textbf{Label-Flipping} attack \cite{tolpegin2020data},  \textbf{Bit-Flip} attack \cite{xie2018generalized}, \textbf{General Random} attack \cite{xie2018generalized}, \textbf{Gaussian} attack \cite{pillutla2022robust}, and \textbf{A little is Enough} attack \cite{baruch2019little}. 
For the Label-Flipping, Bit-Flip, and A Little is Enough attacks, we followed the experimental settings from Karimireddy~\textit{et.al.}~\cite{karimireddy2020learning}. For the Trojan attack, we followed the settings from DBA~\cite{xie2020dba}. For the General Random attack, we followed the settings from Xie~\textit{et.al.}~\cite{xie2018generalized}. The Gaussian attack is based on the settings of \cite{pillutla2022robust}.

%For Label-Flipping attacks, we consider a simpler approach than the one considered in \cite{tolpegin2020data} as follows.  We flip the label of 40\% of training samples in the local client for MNIST dataset and 60\% training samples for FMNIST dataset. Specifically, in MNIST dataset, the poisoned labels are converted to $(L+5)\%10$, where $L$ is the benign labels and 10 is the total class number; in FEMNIST dataset, the poisoned labels are converted to $(L+32)\%62$, where $L$ is the benign labels and 62 is the total class number.

%For Trojan attacks, we add Trojan to 20\% of the training samples in the local clients' datasets, where  the Trojan target label is set to 0. Other Trojan settings follows ~\cite{xie2020dba}.

%\nas{Ruisi, please add other attack details here}

\textbf{Evaluation Metrics:}
For the non-stealthy attacks of Label-Flipping, Bit-Flip, General Random, Gaussian, and A Little is Enough attacks that reduce the model accuracy, we use \textbf{Test Accuracy} measured on the benign test dataset to measure the robustness of different frameworks. The higher the test accuracy, the more robust the framework is against the considered attacks.  
% while the test accuracy has to be high in Trojan attacks to ensure stealthiness of such attacks. 

Trojan attack is a stealthy attack that should not affect the test accuracy to remain stealthy. For Trojan attack, we evaluate \textbf{Main Task Accuracy (MA)} (which is the same as test accuracy on clean test dataset) to measure the stealthiness of the attack. The main task accuracy needs to remain high under Trojan attack to ensure the stealthiness of the attack. We also evaluate  \textbf{Backdoor Accuracy (BA)}, which is the percentage of Trojaned test samples that are successfully labeled with the Trojan target label (it is also referred to as attack success rate).  The lower the backdoor accuracy, the more robust the framework is to Trojan attack. 

% We conduct a number of experiments to evaluate the performance of \sys{} and investigate the effect of different factors on its performance. We first evaluate the benign performance of \sys{} and compare it with the baseline \sysbase{}. We do not consider any attacks for this experiment and the goal is to show that the benign performance of \sys{} can compete with the baseline. Next, we fix the number of compromised clients  to be 10 (out of 50 total clients), and chose them uniformly at random. We also assume that the communication graph is a complete graph. We implement the mentioned five types of attacks in \sys{} and the four other baselines that we have considered. We further investigate the effect of the number of adversaries on the performance of \sys{} and other baselines. We also investigate the effect of the communication graph on the learning process by dropping some of the edges of the communication graph at random. We do this in both static and dynamic manners to show that the learning and our defense method are robust to changes in the communication graph as long as it satisfies the relaxed connectivity constraint (defined in Def. \ref{def:rel_con}). We also investigate how the choice of the compromised clients from the whole set of clients affects the attack and defense performance. We perform most of  our experiments on both MNIST and FEMNIST datasets, and showcase the generalizability of \sys{} by conducting a few experiments on CIFAR-10 dataset. 

\subsection{Results}

\noindent\textbf{Performance under non-stealthy attacks}: 
  In Fig.\ref{fig:barplot_ns}, we show the final model accuracy of \sys{} and other baselines under Label-Flipping, A Little is Enough, Bit-Flip,  General Random, and Gaussian  poisoning attacks for all of the three datasets of MNIST, FEMNIST, and CIFAR10. 
   Note that these attacks are non-stealthy and a poisoned model will have a lower test accuracy. Tables with the final model accuracy numbers are provided in the Appendix (tables \ref{tab:mnist:table}, \ref{tab:femnist:table}, and \ref{tab:cifar10:table}). In Fig.\ref{fig:femnist}, we also show the plot of test accuracy w.r.t. the training epochs  for FEMNIST dataset.
 The test accuracy plots of MNIST and CIFAR10 can be found in Appendix figures \ref{fig:femnist} and \ref{fig:cifar10}.
  %We also show the attack success rate for the Trojan attack. 
  As can be seen from the  plots in Fig.\ref{fig:barplot_ns}, \sys{} is consistently robust against all of the five poisoning attacks. Its accuracy surpasses that of the other five baselines, matching the benign final model accuracy of 96\% for MNIST, 73\% for FEMNIST, and 71\% for CIFAR10 (benign model accuracy is computed by training \sys{} with no attackers).  
  % The Trojan attack success rate has also decreased to around 30\% through the training  process in \sys{}.   
 In contrast, the other baselines exhibit varying degrees of robustness against certain attacks,  with lower accuracy than \sys{}, and all of the baselines experience significant failures for particular attacks and datasets. In Fig.\ref{fig:worst}, we show the lowest model accuracy observed for each method across different datasets and attacks, highlighting the worst-case scenario for each framework. 
 Notably, \sys{} demonstrates robustness across all examined datasets and attacks, whereas the other methods exhibit poor performance in at least one dataset and attack, with an average accuracy of 6\%.

 % The other baselines either show some robustness against some attacks but with accuracy lower than \sys{} or significantly fail. 
 % Among the other baselines, Trimmed Mean shows comparable behavior with \sys{} in two of the attacks (Bit-Flip, and General Random) but its robustness is significantly broken under A little is Enough and Gaussian attacks.
  % and it provides  no robustness against the stealthy Trojan attacks (attack success rate is $\simeq 95 \%$). 
% Clipping shows poorer performance compared to \sys{} under all of the attacks with a significant performance degradation under general random and A Little is Enough attacks. Similarly, Zeno shows significant  performance degradation under Bit-Flip, General Random, A little is Enough, and Gaussian attacks. 
  % It is expected that Zeno does not provide robustness against Trojan attacks due to the defense being based on model evaluation a benign dataset (the benign performance of the Trojaned models should not be degraded). 
  % The performance of \sysbase{} is also added as a baseline for the variational Bayesian federated learning to show their vulnerability to all the considered attacks. 

\begin{figure}[!ht]
  \centering
  \includegraphics[width=0.95\columnwidth]
  {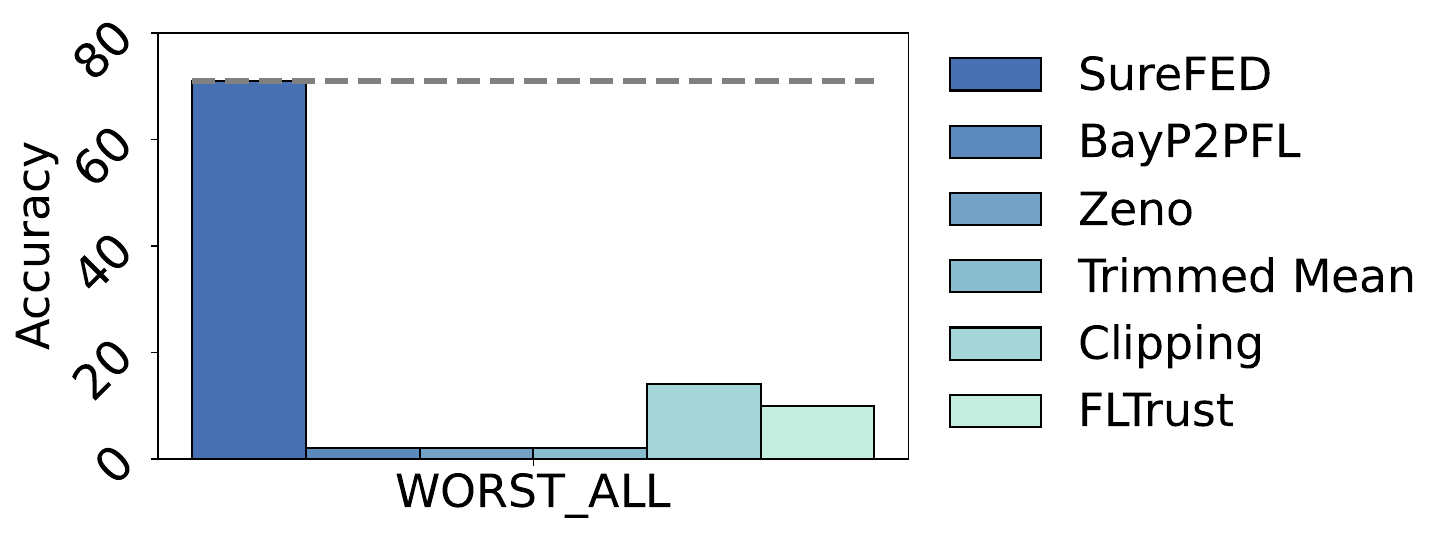}
  \caption{The worst final model accuracy of \sys{} and other baselines across different  poisoning attacks and datasets of CIFAR10, FEMNIST and MNIST. \sys{} is the only method that shows consistent robustness against all of the attacks and  and with all of the three datasets.}
 
  \label{fig:worst}
\end{figure}

\noindent\textbf{Performance under Trojan attack (stealthy attack)}:
For Trojan attack, we report the final Main Task Accuracy and Backdoor Accuracy of \sys{} and the other frameworks in Fig.\ref{fig:trojan}. As is expected from a stealthy Trojan attack, the main task accuracy needs to remain high for all of the frameworks and this is confirmed in Fig.\ref{fig:trojan}. The backdoor accuracy, however, shows the attack success rate and the lower this number is for a framework, the more robust it is to Trojan attacks. As can be seen  in Fig.\ref{fig:trojan}, the backdoor accuracy of \sys{} is 24\% and 23\% for MNIST and FEMNIST datasets, respectively, while the other frameworks show 100\% backdoor accuracy for  MNIST and around 80\% for FEMNIST dataset. 
\begin{figure}[!ht]
    \centering
\includegraphics[width=\columnwidth]{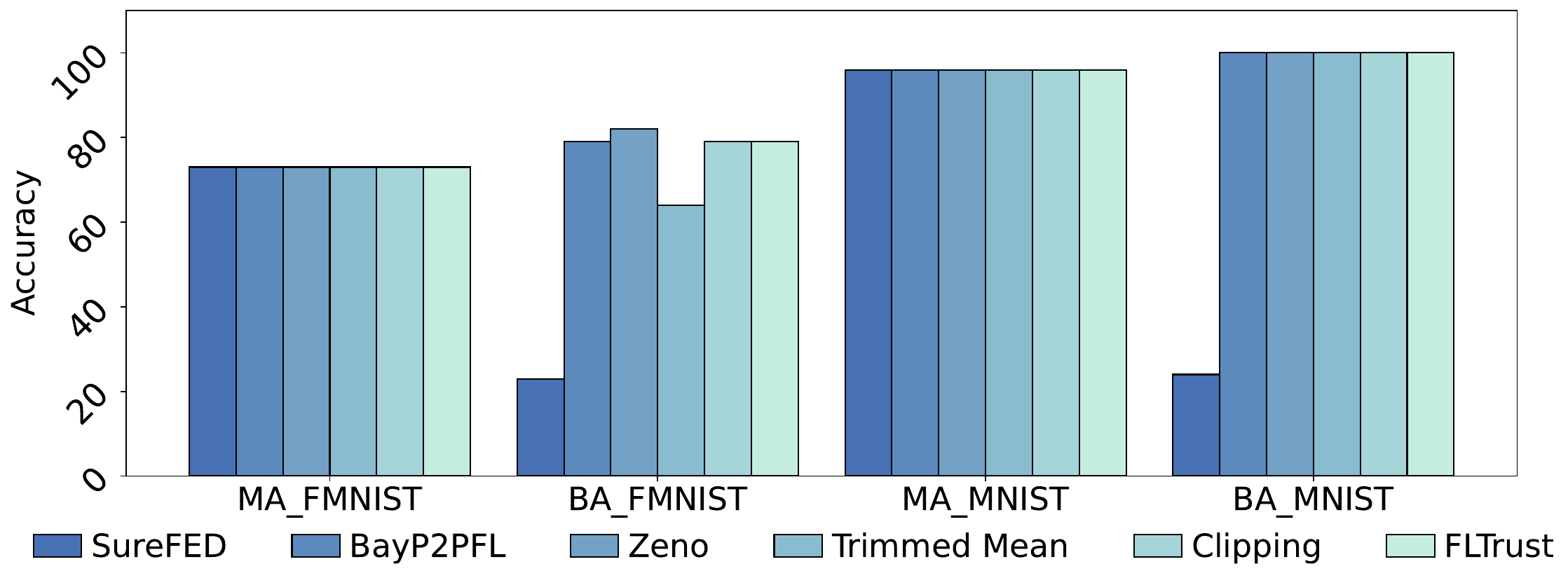}
    \caption{Main Task Accuracy (MA) and Backdoor Accuracy (BA) of \sys{} and the other baselines under Trojan  attack.  \textbf{High MA} indicates the success of  the Trojan attack in maintaining its stealthiness (MA of all baselines should be high). \textbf{Low BA} indicates success in defending against the Trojan attack. \sys{} is the only method with low BA, indicating its robustness against Trojan attacks. }
    \label{fig:trojan}
\end{figure}

\subsection{Ablation Studies}
We have done three types of ablation studies on \sys{}. The first study is on  the percentage of compromised clients in the network. As explained before, \sys{} does not require the majority of benign clients over the compromised ones to be robust against data and model poisoning attacks. This result is confirmed in Fig.\ref{fig:mnist:maj} where we show the plot of final model accuracy of \sys{} and the other frameworks w.r.t. the percentage of compromised clients. 
% For FEMNIST dataset,   we increase the percentage of compromised clients from $0\%$ to $50\%$ of the total clients in $10\%$ steps. For MNIST dataset, we change the percentage of compromised clients from 20\% to 60\% and then to 80\%. 
% In Fig. \ref{fig:advnum:MNIST}, we show the results. 
It can be seen that  \sys{} shows a robust behavior for all of the adversary percentages. Also note that \sys{} is agnostic to the number of adversaries. 
% The slight final accuracy degradation in the 80\% poisoned clients case for MNIST dataset  is due to the limited size of the collective dataset of the remaining benign clients. 
% \begin{table}[h]
%   \centering
%     \small\addtolength{\tabcolsep}{-2pt}
%   \begin{tabular}{|c|c|c|c|}
%     \hline
%     \backslashbox{\tiny{Algorithm}}{\tiny{Compromised Clients}}  & 40\%  & 50\% & 60\% \\
%     \hline
%    \textbf{ \sys{}}  & \textbf{96\%}  & \textbf{96\%} & \textbf{90\%}  \\  \hline
%     BayP2PFL & 89\% & 86\% & 79\% \\ \hline
%     Zeno & 82\% &77 \% & 69\%  \\ \hline
%     Trimmed Mean & 89\% & 86\% & 67\%  \\  \hline
%     Clipping & 82\% & 75\% & 69\% \\ \hline
%     FLTrust & 10\% & 10\% & 10\% \\ \hline
%   \end{tabular}
%   \caption{Final Model Accuracy of \sys{} and the other baselines for MNIST dataset under Label-Flipping  attack and varying number of compromised clients percentages. }
%   \label{tab:mnist:maj}
% \end{table}

\begin{figure}[H]
    \begin{minipage}[c]{0.6\columnwidth}
\includegraphics[width=\textwidth]{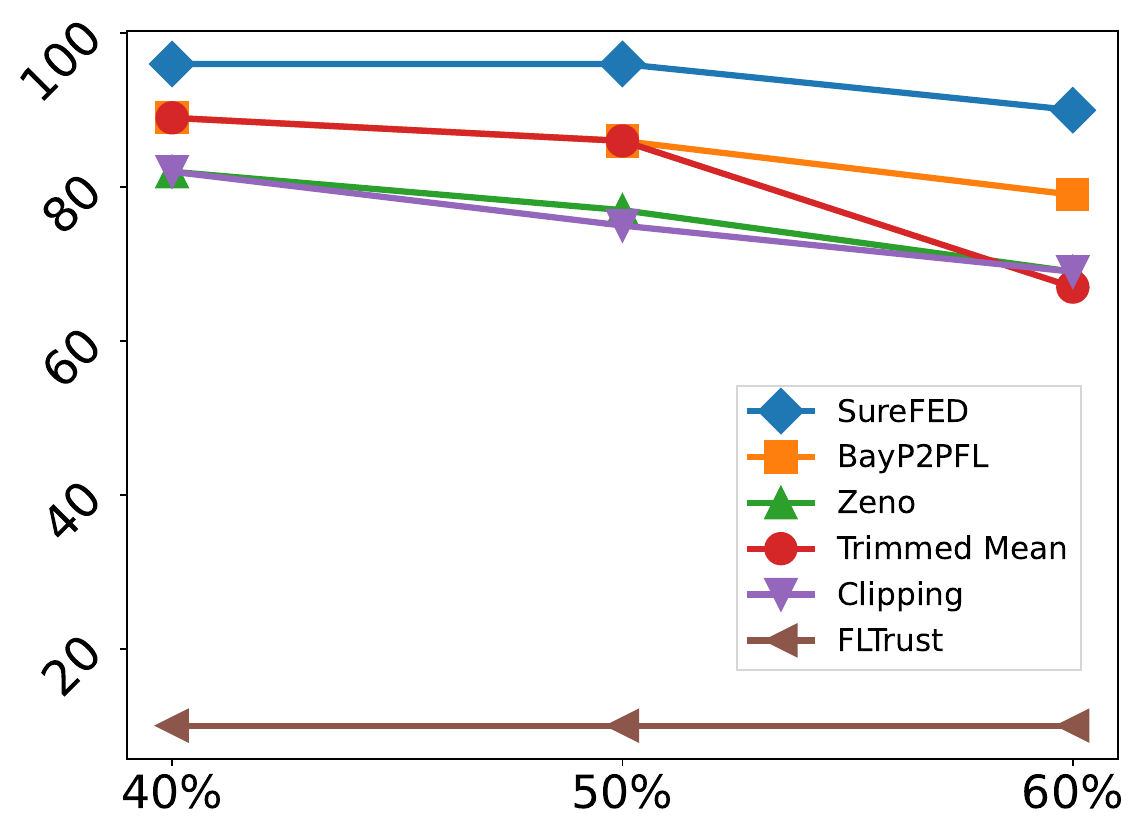}
    \end{minipage}
    \hfill
    \begin{minipage}[c]{0.39\columnwidth}
        \caption{Final Model Accuracy of \sys{} and the other baselines for MNIST dataset under Label-Flipping  attack and varying number of compromised clients percentages. }
        \label{fig:mnist:maj}
    \end{minipage}
     \vspace{-10pt}
\end{figure}

The second ablation study is on the communication network between the clients. It was stated in Section \ref{sec:learning} that for the learning and robustness in \sys{} to happen, we need the communication graph to satisfy the relaxed connectivity constraint defined in Def.\ref{def:rel_con}. Such graphs could be incomplete and time-varying.   In Table \ref{tab:cifar10:graph}, we provide the  results of the final model accuracy  with incomplete and time varying graphs and under Label-Flipping attack. This result  confirms that \sys{} is robust against the considered attack even when the communication graph is incomplete and time-varying. 
% benign and Label-Flipping data poisoning attacks with incomplete and time varying graphs. 
% For the time-varying graph, we randomly choose 20$\%$  of the edges and drop them for 100 epochs. Then these edges will appear again and another set of edges will be dropped for 100 epochs, and so on. As we see in Fig. \ref{fig:BC_Base_LF_graph}, in both incomplete and time-varying graphs, the model has been successfully learned under both benign and Label-Flipping data poisoning attacks. 

\begin{table}[h]
  \centering
  \small\addtolength{\tabcolsep}{-2pt}
  \begin{tabular}{|c|c|c|c|}
    \hline
    Graph  & Complete & Incomplete & Time Varying \\
    \hline
   \textbf{ \sys{}' Accuracy}  & \textbf{73\%}  & \textbf{73\%} & \textbf{73\%} \\  \hline
  \end{tabular}
  \caption{\sys{}'s  Model Accuracy under Label-Flipping attack with FEMNIST dataset and  different communication  graphs. The incomplete graph is constructed by randomly  dropping 20$\%$ of edges. The dropped edges are renewed every 100 epochs to construct the time-varying graph.}
  % , and randomly dropping 20$\%$ of the edges  every 100 epochs, respectively. }
  % The  time varying graph is constructed by randomly dropping 20$\%$ of the edges  every 100 epochs. } 
  % Higher accuracy indicates more robustness against the corresponding attack.}
  \label{tab:cifar10:graph}
\end{table}

Our third ablation study is on the choice of clients that are under attack. We consider two extremes, where in one, the clients with the best dataset quality are under attack and in the other, the worst ones are under attack. The quality of the clients' datasets is evaluated by their best local model accuracy. We compare the two extremes with a case where  we randomly choose clients to be under attack. For these experiments, we consider Label-Flipping attack and FEMNIST dataset.  In Table  \ref{tab:clientchoose} we see that the performance of \sys{} is not affected by how the clients are chosen to be under attack.

\begin{table}[h]
  \centering
  \small\addtolength{\tabcolsep}{-2pt}
  \begin{tabular}{|c|c|c|c|}
    \hline
    Compromised Clients  & Best  & Worst & Random \\
    \hline
   \textbf{ \sys{}' Accuracy}  & \textbf{73\%}  & \textbf{73\%} & \textbf{73\%} \\  \hline
  \end{tabular}
  \caption{\sys{}'s Model Accuracy under Label-Flipping attack and FEMNIST dataset, where different sets of clients are under attack. Best and worst refer to the clients with the best and worst local model accuracy, respectively. } 
  % Higher accuracy indicates more robustness against the corresponding attack.}
  \label{tab:clientchoose}
\end{table}

	\section{Conclusion}
\label{sec:disc}
In this work, we presented \sys{}, which is a novel robust federated learning framework based on uncertainty quantification. \sys{} was designed to address the vulnerabilities of the existing defense methods by effectively using the local information of clients to concurrently train clean local models alongside social models. The local models can be used as a ground truth to  evaluate the received model updates in an uncertainty-aware manner, and also to conduct introspection, making sure the learning is done correctly. These crucial components empower \sys{} to withstand various data and model poisoning attacks, including colluding A Little is Enough attack and stealthy Trojan attacks. \sys{} demonstrates superior performance compared to the state of the art defense methods for federated learning, achieving model accuracies that match the benign  training accuracy, while being agnostic to the number of adversaries and not requiring the majority of benign clients over the compromised ones.

	\newpage
	% \input{Impact}
	% \bibliographystyle{plain}
	% \bibliographystyle{unsrt}
	% \bibliography{ref.bib}

\begin{thebibliography}{10}

\bibitem{al2020federated}
Maruan Al-Shedivat, Jennifer Gillenwater, Eric Xing, and Afshin Rostamizadeh.
\newblock Federated learning via posterior averaging: A new perspective and practical algorithms.
\newblock {\em arXiv preprint arXiv:2010.05273}, 2020.

\bibitem{bagdasaryan2020backdoor}
Eugene Bagdasaryan, Andreas Veit, Yiqing Hua, Deborah Estrin, and Vitaly Shmatikov.
\newblock How to backdoor federated learning.
\newblock In {\em International Conference on Artificial Intelligence and Statistics}, pages 2938--2948. PMLR, 2020.

\bibitem{baruch2019little}
Gilad Baruch, Moran Baruch, and Yoav Goldberg.
\newblock A little is enough: Circumventing defenses for distributed learning.
\newblock {\em Advances in Neural Information Processing Systems}, 32, 2019.

\bibitem{bhagoji2019analyzing}
Arjun~Nitin Bhagoji, Supriyo Chakraborty, Prateek Mittal, and Seraphin Calo.
\newblock Analyzing federated learning through an adversarial lens.
\newblock In {\em International Conference on Machine Learning}, pages 634--643. PMLR, 2019.

\bibitem{blanchard2017machine}
Peva Blanchard, El~Mahdi El~Mhamdi, Rachid Guerraoui, and Julien Stainer.
\newblock Machine learning with adversaries: Byzantine tolerant gradient descent.
\newblock {\em Advances in Neural Information Processing Systems}, 30, 2017.

\bibitem{blundell2015weight}
Charles Blundell, Julien Cornebise, Koray Kavukcuoglu, and Daan Wierstra.
\newblock Weight uncertainty in neural network.
\newblock In {\em International conference on machine learning}, pages 1613--1622. PMLR, 2015.

\bibitem{boussetta2021aksel}
Amine Boussetta, El~Mahdi El~Mhamdi, Rachid Guerraoui, Alexandre David~Olivier Maurer, and S{\'e}bastien Louis~Alexandre Rouault.
\newblock Aksel: Fast byzantine sgd.
\newblock In {\em Proceedings of the 24th International Conference on Principles of Distributed Systems (OPODIS 2020)}, number CONF. Schloss Dagstuhl--Leibniz-Zentrum f{\"u}r Informatik, 2021.

\bibitem{caldas2018leaf}
Sebastian Caldas, Sai Meher~Karthik Duddu, Peter Wu, Tian Li, Jakub Kone{\v{c}}n{\`y}, H~Brendan McMahan, Virginia Smith, and Ameet Talwalkar.
\newblock Leaf: A benchmark for federated settings.
\newblock {\em arXiv preprint arXiv:1812.01097}, 2018.

\bibitem{cao2020fltrust}
Xiaoyu Cao, Minghong Fang, Jia Liu, and Neil~Zhenqiang Gong.
\newblock Fltrust: Byzantine-robust federated learning via trust bootstrapping.
\newblock 2021.

\bibitem{chen2017distributed}
Yudong Chen, Lili Su, and Jiaming Xu.
\newblock Distributed statistical machine learning in adversarial settings: Byzantine gradient descent.
\newblock {\em Proceedings of the ACM on Measurement and Analysis of Computing Systems}, 1(2):1--25, 2017.

\bibitem{ghodsi2023zprobe}
Zahra Ghodsi, Mojan Javaheripi, Nojan Sheybani, Xinqiao Zhang, Ke~Huang, and Farinaz Koushanfar.
\newblock zprobe: Zero peek robustness checks for federated learning.
\newblock In {\em Proceedings of the IEEE/CVF International Conference on Computer Vision}, pages 4860--4870, 2023.

\bibitem{guerraoui2018hidden}
Rachid Guerraoui, S{\'e}bastien Rouault, et~al.
\newblock The hidden vulnerability of distributed learning in byzantium.
\newblock In {\em International Conference on Machine Learning}, pages 3521--3530. PMLR, 2018.

\bibitem{hager1989updating}
William~W Hager.
\newblock Updating the inverse of a matrix.
\newblock {\em SIAM review}, 31(2):221--239, 1989.

\bibitem{he2016deep}
Kaiming He, Xiangyu Zhang, Shaoqing Ren, and Jian Sun.
\newblock Deep residual learning for image recognition.
\newblock In {\em Proceedings of the IEEE conference on computer vision and pattern recognition}, pages 770--778, 2016.

\bibitem{hegselmann2002opinion}
Rainer Hegselmann, Ulrich Krause, et~al.
\newblock Opinion dynamics and bounded confidence models, analysis, and simulation.
\newblock {\em Journal of artificial societies and social simulation}, 5(3), 2002.

\bibitem{jere2020taxonomy}
Malhar~S Jere, Tyler Farnan, and Farinaz Koushanfar.
\newblock A taxonomy of attacks on federated learning.
\newblock {\em IEEE Security \& Privacy}, 19(2):20--28, 2020.

\bibitem{karimireddy2020learning}
Sai~Praneeth Karimireddy, Lie He, and Martin Jaggi.
\newblock {Learning from History for Byzantine Robust Optimization}.
\newblock In {\em ICML 2021 - Proceedings of International Conference on Machine Learning}, 2021.

\bibitem{kasyap2022hidden}
Harsh Kasyap and Somanath Tripathy.
\newblock Hidden vulnerabilities in cosine similarity based poisoning defense.
\newblock In {\em 2022 56th Annual Conference on Information Sciences and Systems (CISS)}, pages 263--268. IEEE, 2022.

\bibitem{krause2000discrete}
U~Krause.
\newblock A discrete nonlinear and non-autonomous model of consensus.
\newblock In {\em Communications in Difference Equations: Proceedings of the Fourth International Conference on Difference Equations}, page 227. CRC Press, 2000.

\bibitem{krishnan2022bayesiantorch}
Ranganath Krishnan, Pi~Esposito, and Mahesh Subedar.
\newblock Bayesian-torch: Bayesian neural network layers for uncertainty estimation, January 2022.

\bibitem{krizhevsky2009learning}
Alex Krizhevsky, Geoffrey Hinton, et~al.
\newblock Learning multiple layers of features from tiny images.
\newblock 2009.

\bibitem{lalitha2019peer}
Anusha Lalitha, Osman~Cihan Kilinc, Tara Javidi, and Farinaz Koushanfar.
\newblock Peer-to-peer federated learning on graphs.
\newblock {\em arXiv preprint arXiv:1901.11173}, 2019.

\bibitem{lyu2020threats}
Lingjuan Lyu, Han Yu, and Qiang Yang.
\newblock Threats to federated learning: A survey.
\newblock {\em arXiv preprint arXiv:2003.02133}, 2020.

\bibitem{mcmahan2017communication}
Brendan McMahan, Eider Moore, Daniel Ramage, Seth Hampson, and Blaise~Aguera y~Arcas.
\newblock Communication-efficient learning of deep networks from decentralized data.
\newblock In {\em Artificial intelligence and statistics}, pages 1273--1282. PMLR, 2017.

\bibitem{nedic2015nonasymptotic}
Angelia Nedi{\'c}, Alex Olshevsky, and C{\'e}sar~A Uribe.
\newblock Nonasymptotic convergence rates for cooperative learning over time-varying directed graphs.
\newblock In {\em 2015 American Control Conference (ACC)}, pages 5884--5889. IEEE, 2015.

\bibitem{pillutla2022robust}
Krishna Pillutla, Sham~M Kakade, and Zaid Harchaoui.
\newblock Robust aggregation for federated learning.
\newblock {\em IEEE Transactions on Signal Processing}, 70:1142--1154, 2022.

\bibitem{roy2019braintorrent}
Abhijit~Guha Roy, Shayan Siddiqui, Sebastian P{\"o}lsterl, Nassir Navab, and Christian Wachinger.
\newblock Braintorrent: A peer-to-peer environment for decentralized federated learning.
\newblock {\em arXiv preprint arXiv:1905.06731}, 2019.

\bibitem{seneta2006non}
Eugene Seneta.
\newblock {\em Non-negative matrices and Markov chains}.
\newblock Springer Science \& Business Media, 2006.

\bibitem{severi2022network}
Giorgio Severi, Matthew Jagielski, G{\"o}kberk Yar, Yuxuan Wang, Alina Oprea, and Cristina Nita-Rotaru.
\newblock Network-level adversaries in federated learning.
\newblock {\em arXiv preprint arXiv:2208.12911}, 2022.

\bibitem{sun2019can}
Ziteng Sun, Peter Kairouz, Ananda~Theertha Suresh, and H~Brendan McMahan.
\newblock Can you really backdoor federated learning?
\newblock {\em arXiv preprint arXiv:1911.07963}, 2019.

\bibitem{tolpegin2020data}
Vale Tolpegin, Stacey Truex, Mehmet~Emre Gursoy, and Ling Liu.
\newblock Data poisoning attacks against federated learning systems.
\newblock In {\em European Symposium on Research in Computer Security}, pages 480--501. Springer, 2020.

\bibitem{pytorch}
{Torch Contributors}.
\newblock {PyTorch}.
\newblock \url{https://pytorch.org/}.
\newblock {Last Access on December 26, 2022}.

\bibitem{wang2021non}
Han Wang, Luis Mu{\~n}oz-Gonz{\'a}lez, David Eklund, and Shahid Raza.
\newblock Non-iid data re-balancing at iot edge with peer-to-peer federated learning for anomaly detection.
\newblock In {\em Proceedings of the 14th ACM Conference on Security and Privacy in Wireless and Mobile Networks}, pages 153--163, 2021.

\bibitem{wang2022peer}
Xinghan Wang, Anusha Lalitha, Tara Javidi, and Farinaz Koushanfar.
\newblock Peer-to-peer variational federated learning over arbitrary graphs.
\newblock {\em IEEE Journal on Selected Areas in Information Theory}, 2022.

\bibitem{wink2021approach}
Tobias Wink and Zoltan Nochta.
\newblock An approach for peer-to-peer federated learning.
\newblock In {\em 2021 51st Annual IEEE/IFIP International Conference on Dependable Systems and Networks Workshops (DSN-W)}, pages 150--157. IEEE, 2021.

\bibitem{xie2020dba}
Chulin Xie, Keli Huang, Pin-Yu Chen, and Bo~Li.
\newblock Dba: Distributed backdoor attacks against federated learning.
\newblock In {\em International Conference on Learning Representations}, 2020.

\bibitem{xie2018generalized}
Cong Xie, Oluwasanmi Koyejo, and Indranil Gupta.
\newblock Generalized byzantine-tolerant sgd.
\newblock {\em arXiv preprint arXiv:1802.10116}, 2018.

\bibitem{xie2019zeno}
Cong Xie, Sanmi Koyejo, and Indranil Gupta.
\newblock Zeno: Distributed stochastic gradient descent with suspicion-based fault-tolerance.
\newblock In {\em International Conference on Machine Learning}, pages 6893--6901. PMLR, 2019.

\bibitem{xie2020zeno++}
Cong Xie, Sanmi Koyejo, and Indranil Gupta.
\newblock Zeno++: Robust fully asynchronous sgd.
\newblock In {\em International Conference on Machine Learning}, pages 10495--10503. PMLR, 2020.

\bibitem{Yar2023Backdoor}
Gokberk Yar, Cristina Nita-Rotaru, and Alina Oprea.
\newblock Backdoor attacks in peer-to-peer federated learning.
\newblock {\em arXiv preprint arXiv:2301.09732}, 2023.

\bibitem{yin2018byzantine}
Dong Yin, Yudong Chen, Ramchandran Kannan, and Peter Bartlett.
\newblock Byzantine-robust distributed learning: Towards optimal statistical rates.
\newblock In {\em International Conference on Machine Learning}, pages 5650--5659. PMLR, 2018.

\bibitem{zhao2016bounded}
Yiyi Zhao, Libin Zhang, Mingfeng Tang, and Gang Kou.
\newblock Bounded confidence opinion dynamics with opinion leaders and environmental noises.
\newblock {\em Computers \& Operations Research}, 74:205--213, 2016.

\end{thebibliography}

	% \appendix
	\onecolumn
	
\section*{Appendix}

\subsection{Experiment Setup}
Our experiments are done with Python 3.9.7 and benchmarked on Linux Ubuntu 20.04.5 platform. Our workflow is built upon PyTorch~\cite{pytorch} version 1.9.0 and trained on four NVIDIA TITAN Xp GPUs each with 12 GB RAM, and 48 Intel(R) Xeon(R) CPUs with 128 GB RAM.

\subsection{Dataset Details}
In Table~\ref{tab:dataset}, from left to right, we show the label number, average training and test dataset size per client.
\begin{table}[ht]
  \centering
  \vspace{-5pt}
  \begin{tabular}{|c|c|c|c|}
    \hline
    Name  & Label & Avg. Training Data  & Avg. Test Data\\
    \hline
    MNIST & 10 &101 &12\\ \hline
    FEMNIST & 62 & 197 & 22 \\\hline
     CIFAR-10 & 10 & 2560 & 400 \\\hline
  \end{tabular}
  \caption{Dataset Statistics\label{tab:dataset}}
  \vspace{-5pt}
\end{table}

MNIST dataset consists of images of handwritten digits, and FEMNIST consists of handwritten digits and lower/upper case alphabets by different writers. The datasets are divided between clients such that each client only observes one writer's images to create a non-IID setup. CIFAR-10 is the biggest benchmark studied in the secure FL literature~\cite{ghodsi2023zprobe}. It consists of 60k 32x32 color images in 10 classes (50k for training and 10k for testing), with 6000 images per class. The dataset is divided randomly and evenly, where each client only observes part of the dataset.

\subsection{Model Training Details}
The architecture of the Bayesian neural network (BNN) and deep neural network (DNN) models used in this paper are summarized in Tab.~\ref{tab:modelarchi}. For fair comparison, both models consist of two convolution layers and two linear layers. The output dimension, label\_num, in layer Linear2 is set to 10 in MNIST dataset and 62 in FEMNIST dataset. The BNN is implemented according to the variational Bayesian learning model in \cite{blundell2015weight} and the federated version in \cite{wang2022peer}.
For CIFAR-10, We use the ResNet-20~\cite{he2016deep} as the backbone model for classification. The ResNet-20 is converted to the Bayesian neural network using Bayesian-Torch~\cite{krishnan2022bayesiantorch}.

%\nas{Ruisi, can you add details of CIFAR-10. }

We set the maximum number of epochs for training to 8000. The batch size is set to 5, and each client trains on 5 batches in one epoch  and then  sends its model update to its peers for aggregation. For BNN models, we use an AdaM optimizer with learning rate set to  0.001. Other BNN implementation details are the same as \sysbase{}~\cite{wang2022peer}. For Trimmed Mean and Clipping, we followed similar settings as Karimireddy~\textit{et.al.}~\cite{karimireddy2020learning} 
and set the learning rate to 0.01 with momentum set to 0.9. For Zeno, we followed similar settings as Xie~\textit{et.al.}~\cite{xie2019zeno} and set the learning rate to 0.1.
%For DNN models, we use a SGD optimizer with a learning rate set to 0.01. Other DNN implementation details follows the default settings in Zeno~\cite{xie2019zeno}.
Note that, we choose SGD over AdaM optimizer to train DNN models due to its better convergence and accuracy.

For \sys, we set the hyperparameters, $\kappa$, to 2, and stop training the local model when its accuracy starts to drop in order to avoid overfitting local training data. 
For Zeno and Trimmed Mean, by default, we exclude the number of clients that have been compromised from the aggregation. \sys{},  Clipping, and FLTrust are agnostic to the number of compromised clients in the network, whereas Zeno and Trimmed Mean require this information during the training. 

% \nas{Ruisi, can you add some details or references for other baselines here?}

\begin{table}[!ht]
  \centering
  \begin{tabular}{|c|c|c|c|}
    \hline
    Layers  & Patch Size/Stride & Output & Activation\\
    \hline
    BNN Conv1 / DNN Conv1  & 5$\times$5/1 & 6$\times$28$\times$28 & ReLU\\\hline
    Max Pooling1 & 2$\times$2/2 & 64$\times$1$\times$1 & -\\\hline
    BNN Conv2 / DNN Conv2& 5$\times$5/1 &16$\times$14$\times$14 &  ReLU\\\hline
    Max Pooling2 & 2$\times$2/2 & 64$\times$1$\times$1 & -\\\hline
    BNN Linear1 / DNN Linear1& 784 & 120 & ReLU\\\hline
    BNN Linear2 / DNN Linear2  & 120 & label\_num & Softmax\\\hline
  \end{tabular}
  \caption{Model Architecture\label{tab:modelarchi}}
\end{table}

%\begin{table}[!ht]
%  \centering
%  \begin{tabular}{|c|c|c|c|}
%    \hline
%    Layers  & Patch Size/Stride & Output & Activation\\
%    \hline
%    BNN Conv1  & \multirow{2}*{5$\times$5/1} & \multirow{2}*%{6$\times$28$\times$28} & \multirow{2}*{ReLU}\\
%    DNN Conv1 & & &\\\hline
%    Max Pooling1 & 2$\times$2/2 & 64$\times$1$\times$1 & -\\\hline
%    BNN Conv2 & \multirow{2}{*}{5$\times$5/1} & \multirow{2}{*}{16$\times$14$\times$14} &\multirow{2}{*}{ ReLU}\\
%    DNN Conv2 & & & \\\hline
%    Max Pooling2 & 2$\times$2/2 & 64$\times$1$\times$1 & -\\\hline
%    BNN Linear1& \multirow{2}{*}{784} & \multirow{2}{*}{120} & \multirow{2}{*}{ReLU}\\
%    DNN Linear1 & & & \\\hline
%    BNN Linear2  & \multirow{2}{*}{120} & \multirow{2}{*}{label\_num} & \multirow{2}{*}{Softmax}\\
%    DNN Linear2 & & & \\\hline
%  \end{tabular}
%  \caption{Model Architecture\label{tab:modelarchi}}
%\end{table}

% \begin{figure}[h]
%  \centering
%  %\vspace{-0.1cm}
%  \includegraphics[width=6cm]{}
% \caption{Accuracy of  \sys{}  in decentralized image classification of FEMNIST dataset with different sets of clients being under Label-Flipping attack.}
%     \label{fig:clientchoose}
% \end{figure}

\subsection{Experiment Results}
In tables \ref{tab:mnist:table}, \ref{tab:femnist:table}, and \ref{tab:cifar10:table} we present the final model accuracy of \sys{} and the other baselines under the five considered non-stealthy attacks and with the three datasets of MNIST, FEMNIST, and CIFAR10. The main task accuracy and backdoor accuracy of \sys{} and the other baselines under stealthy Trojan  attack are provided in Table \ref{tab:trojan}. The plot of test accuracy w.r.t. the epochs of all methods is provided in figures \ref{fig:mnist} and \ref{fig:cifar10}.
% \begin{figure*}[t]
%   \begin{center}
%     \subfloat[Label-Flipping]{\label{fig:femnist:label}\includegraphics[width=0.2\textwidth]{Figures_develop/fmnist_label_flip.pdf}} 
%      \subfloat[A Little is Enough]{\label{fig:femnist:little}\includegraphics[width=0.2\textwidth]{Figures_develop/fmnist_alittle.pdf}} 
%      \subfloat[Bit-Flip ]{\label{fig:feminist:bit}\includegraphics[width=0.2\textwidth]{Figures_develop/fmnist_bitflip.pdf}} 
%     \subfloat[General Random ]{\label{fig:femnist:general}\includegraphics[width=0.2\textwidth]{Figures_develop/fmnist_random_attack.pdf}}
%      \subfloat[Gaussian ]{\label{fig:femnist:gaussian}\includegraphics[width=0.2\textwidth]{Figures_develop/fmnist_gaussian.pdf}}
%   \end{center}
%   \caption{Accuracy plot of \sys{} compared with the baseline \sysbase{}, and  Zeno, Trimmed mean, and Clipping defense methods under different data / model poisoning attacks evaluated on FEMNIST dataset. }
%   \label{fig:femnist}
% \end{figure*}

\begin{figure*}[!ht]
  \begin{center}
     \subfloat[\scriptsize Label-Flipping Attack]{\label{fig:mnist:label}\includegraphics[width=0.2\textwidth]{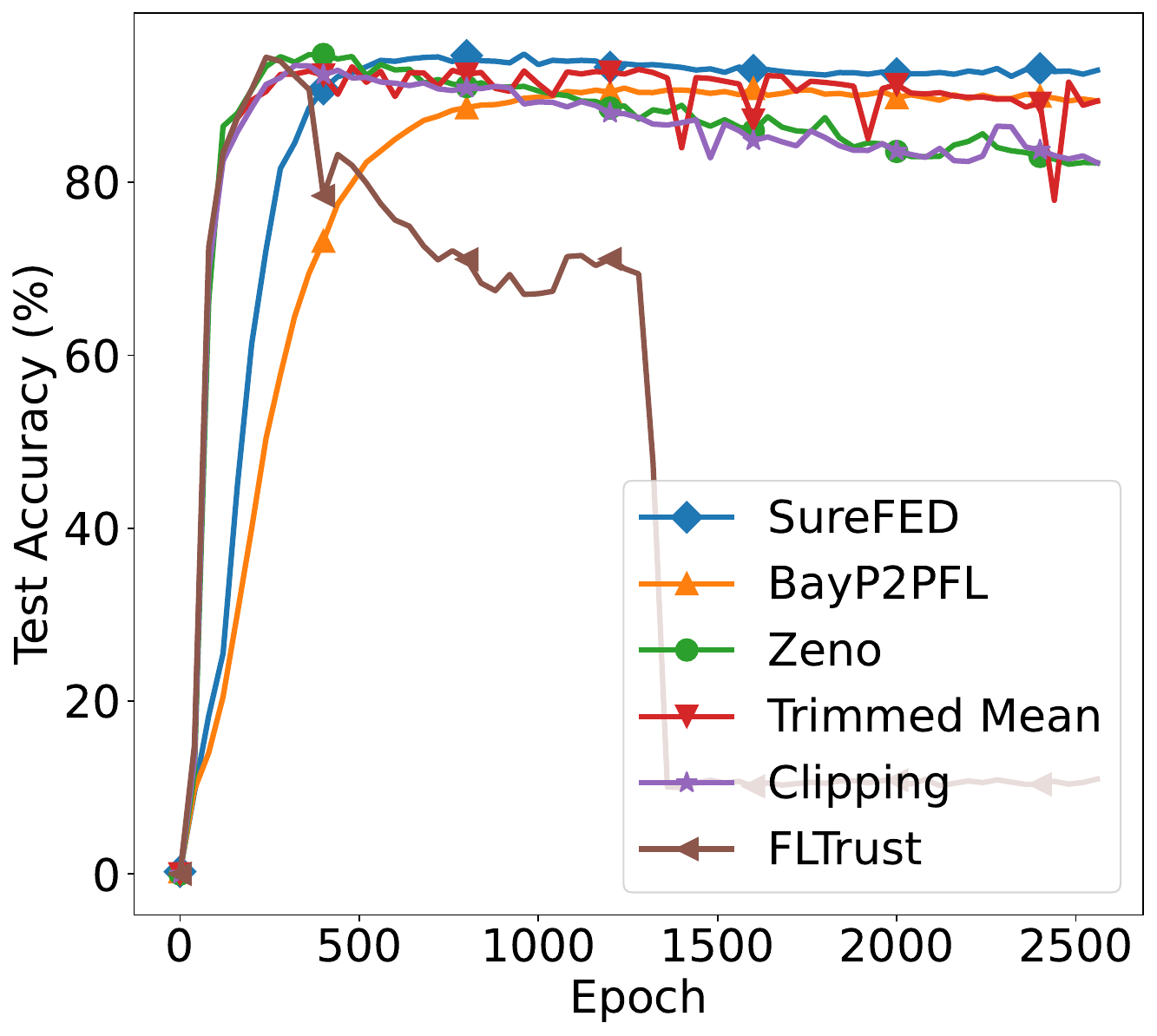}} 
    \subfloat[\scriptsize A Little is Enough Attack]{\label{fig:mnist:little}\includegraphics[width=0.2\textwidth]{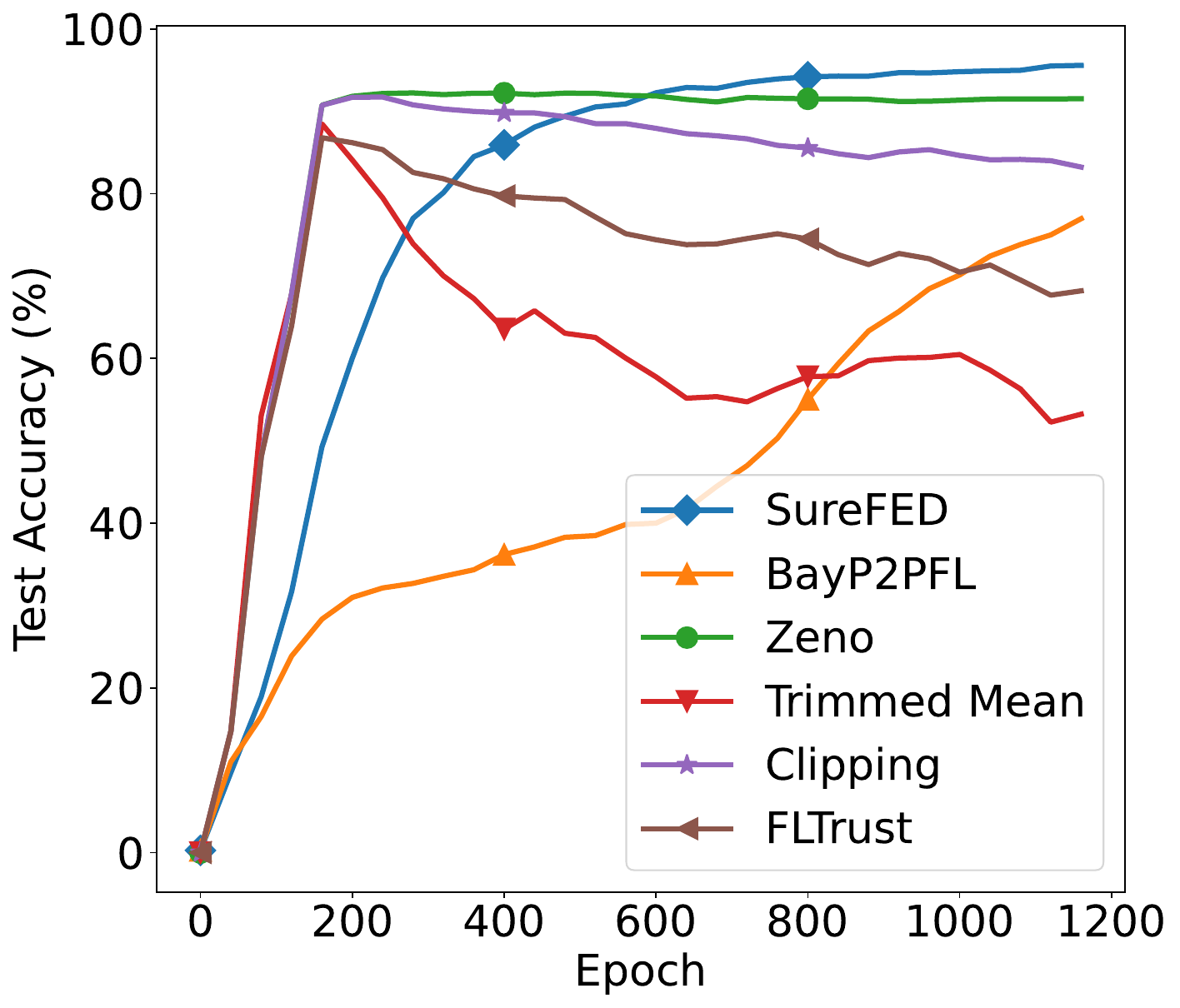}} 
     \subfloat[\scriptsize Bit-Flip Attack]{\label{fig:mnist:bit}\includegraphics[width=0.2\textwidth]{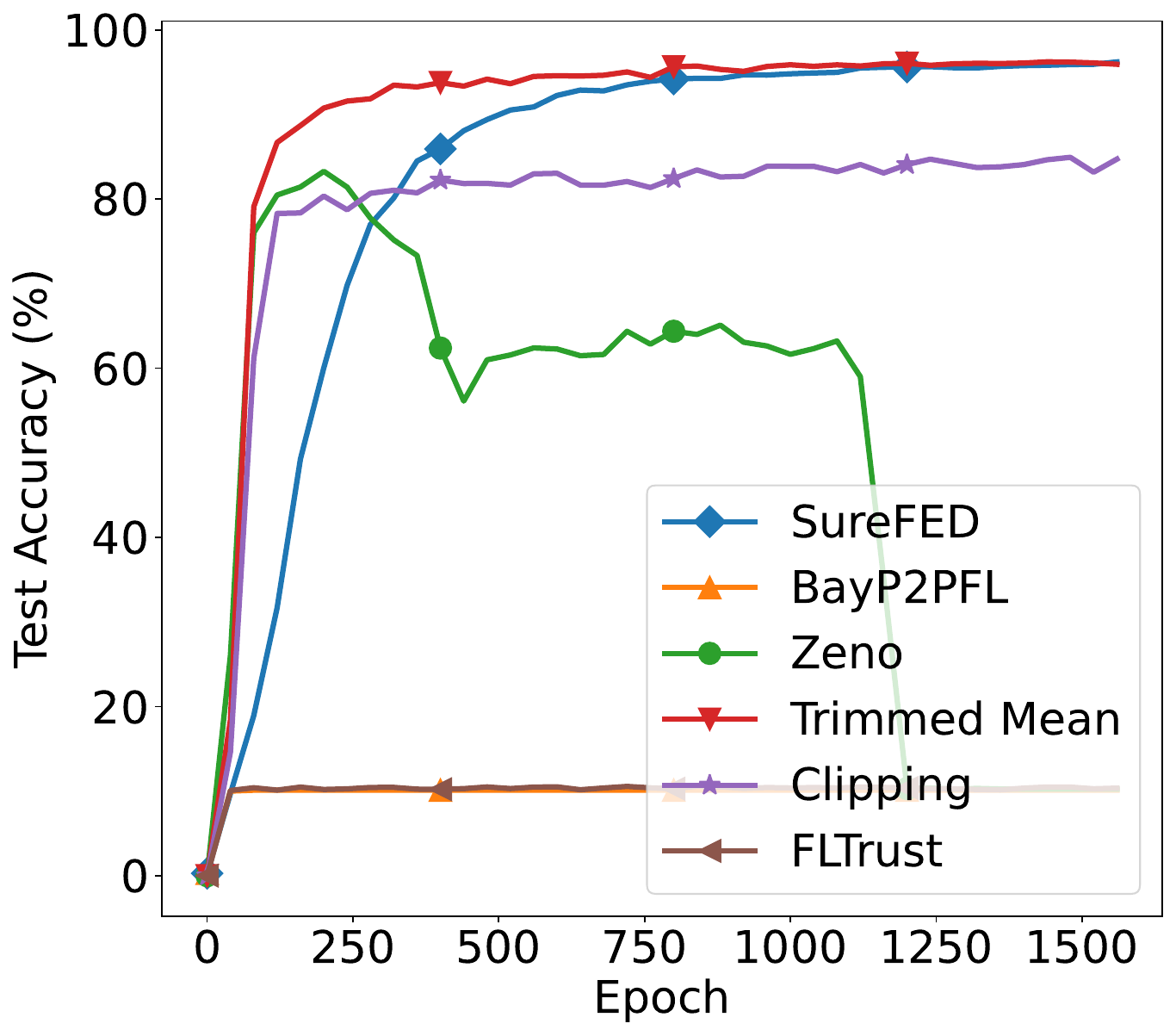}} 
    \subfloat[\scriptsize General Random Attack]{\label{fig:mnist:general}\includegraphics[width=0.2\textwidth]{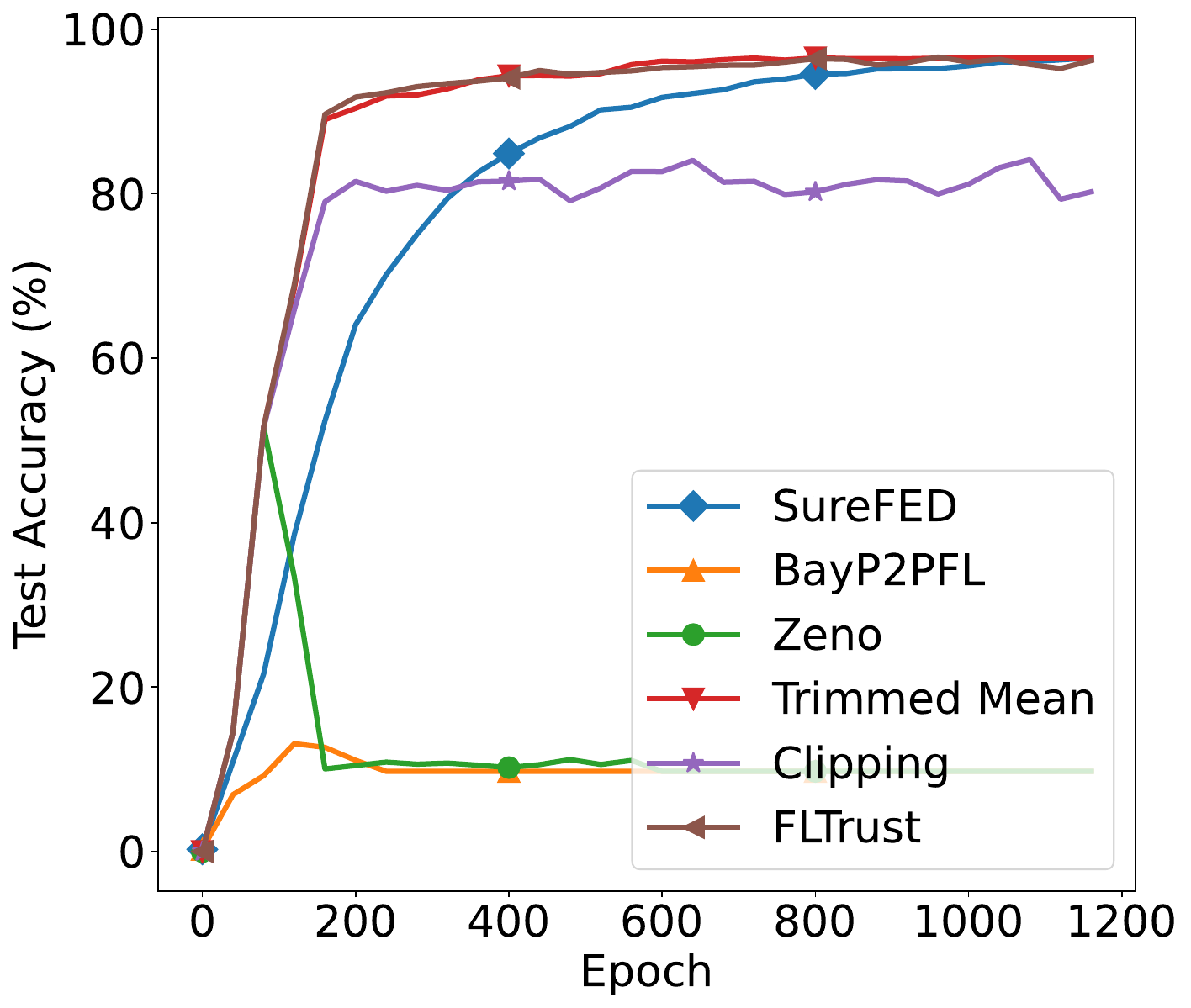}}
      \subfloat[\scriptsize Gaussian Attack]{\label{fig:mnist:gaussian}\includegraphics[width=0.2\textwidth]{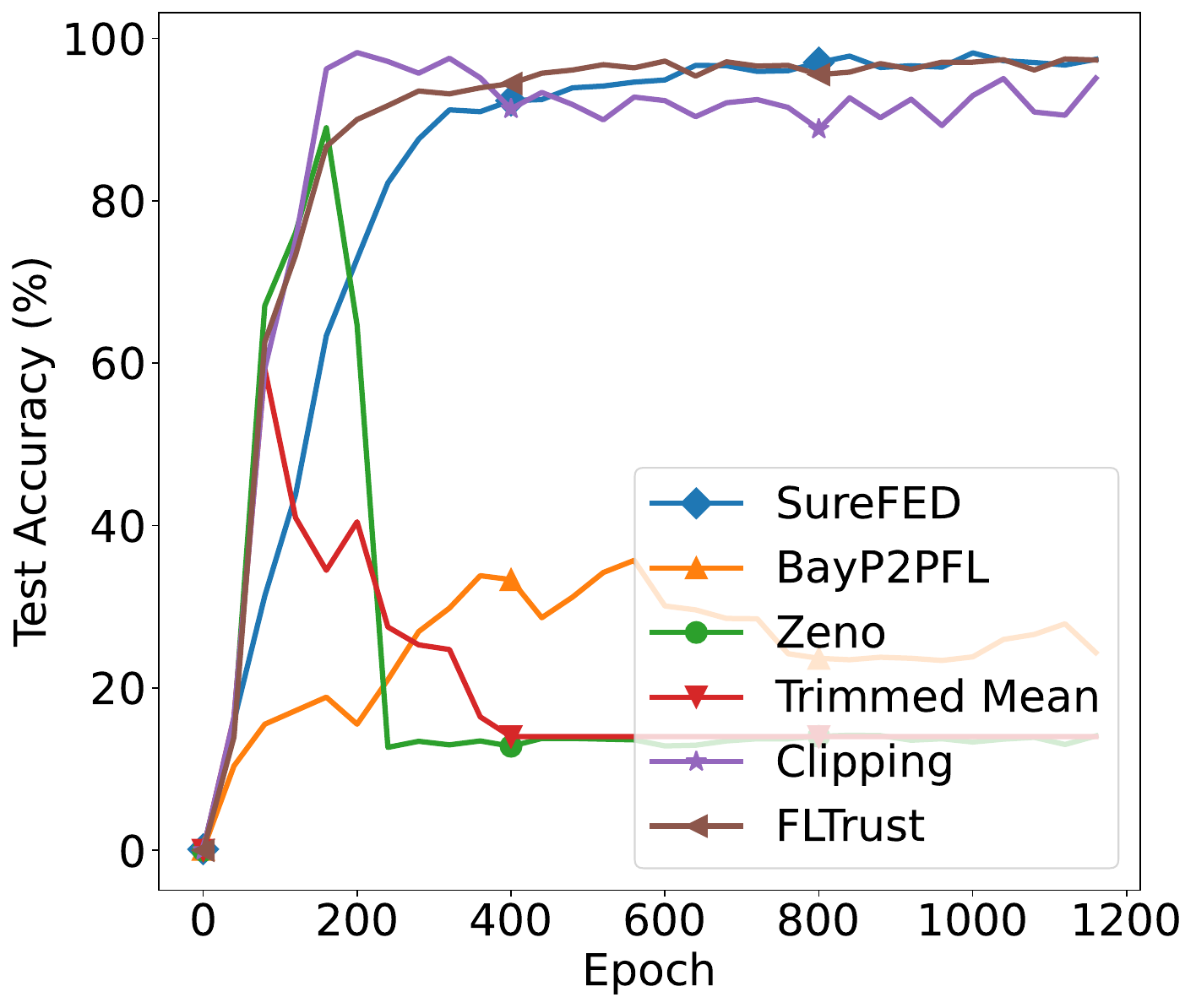}}
  \end{center}
  \caption{Accuracy plot of \sys{} compared with \sysbase{}, Zeno, Trimmed Mean, Clipping, and FLTrust defense methods under different data and model poisoning attacks evaluated on MNIST dataset.  }
  \label{fig:mnist}
\end{figure*}

\begin{figure*}[!ht]
  \begin{center}
    \subfloat[\scriptsize Label-Flipping Attack ]{\label{fig:cifar10:label}\includegraphics[width=0.2\textwidth]{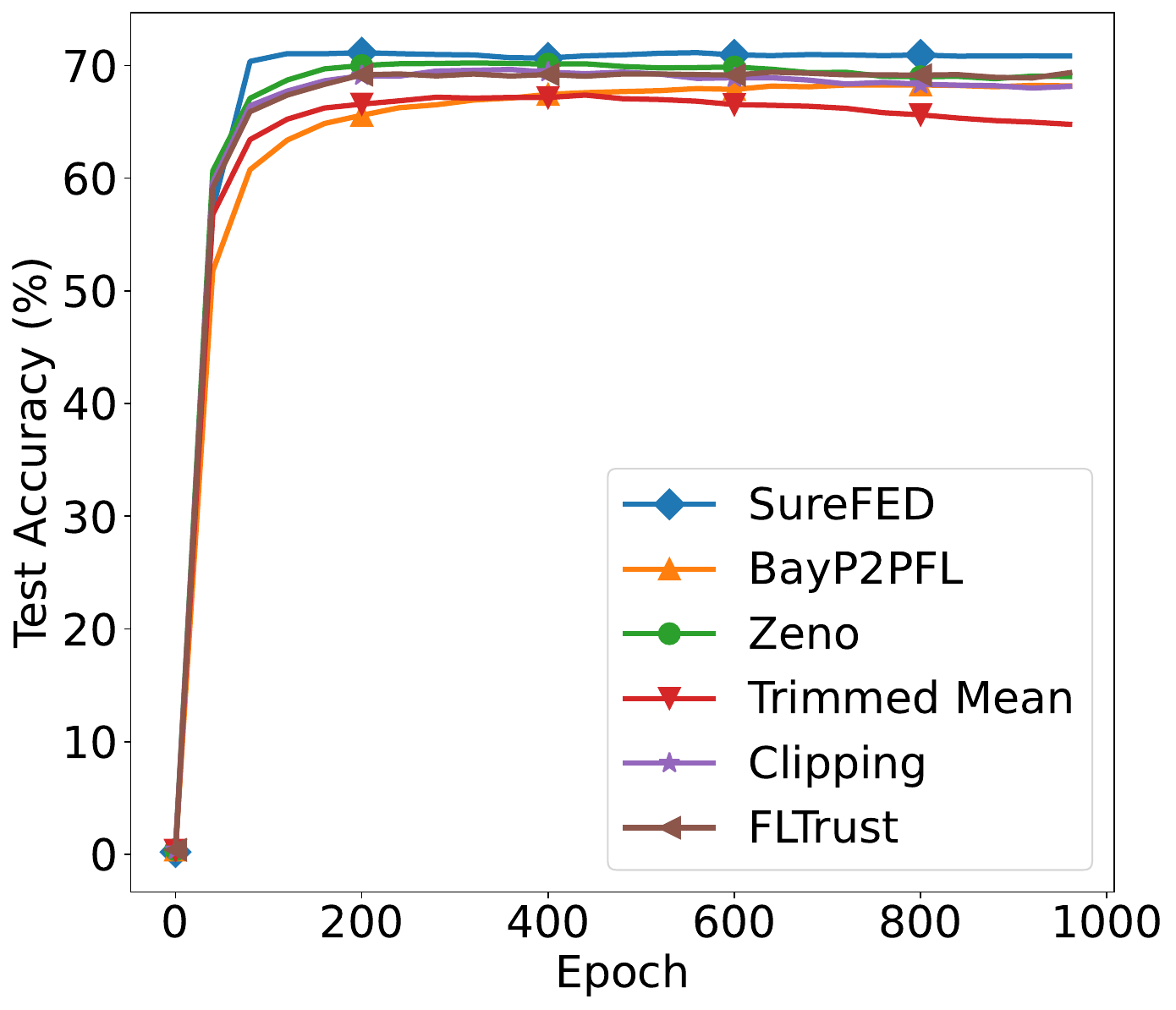}}
     \subfloat[\scriptsize A Little is Enough Attack ]{\label{fig:cifar10:little}\includegraphics[width=0.2\textwidth]{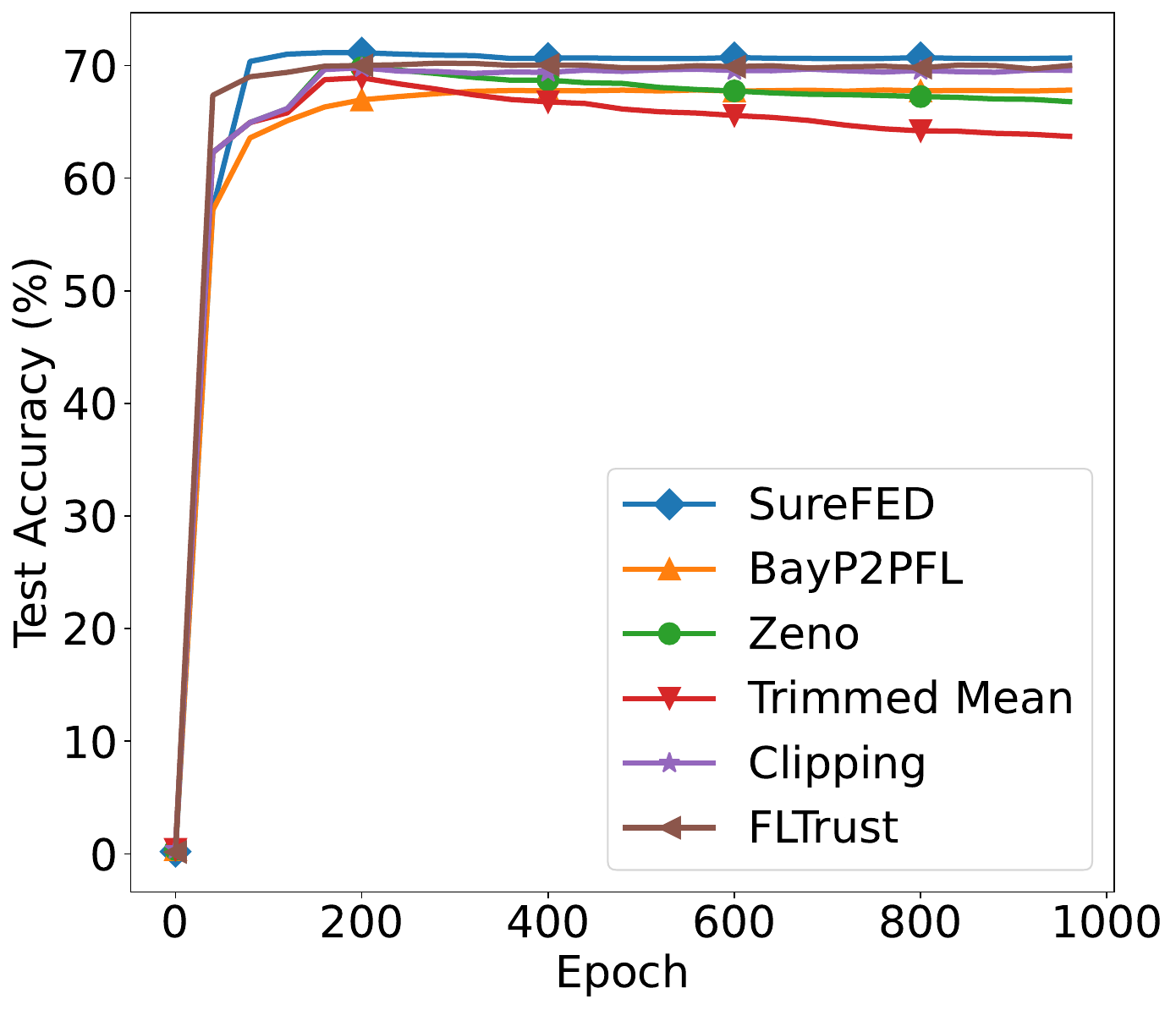}} 
     \subfloat[\scriptsize Bit-Flip Attack]{\label{fig:cifar10:bit}\includegraphics[width=0.2\textwidth]{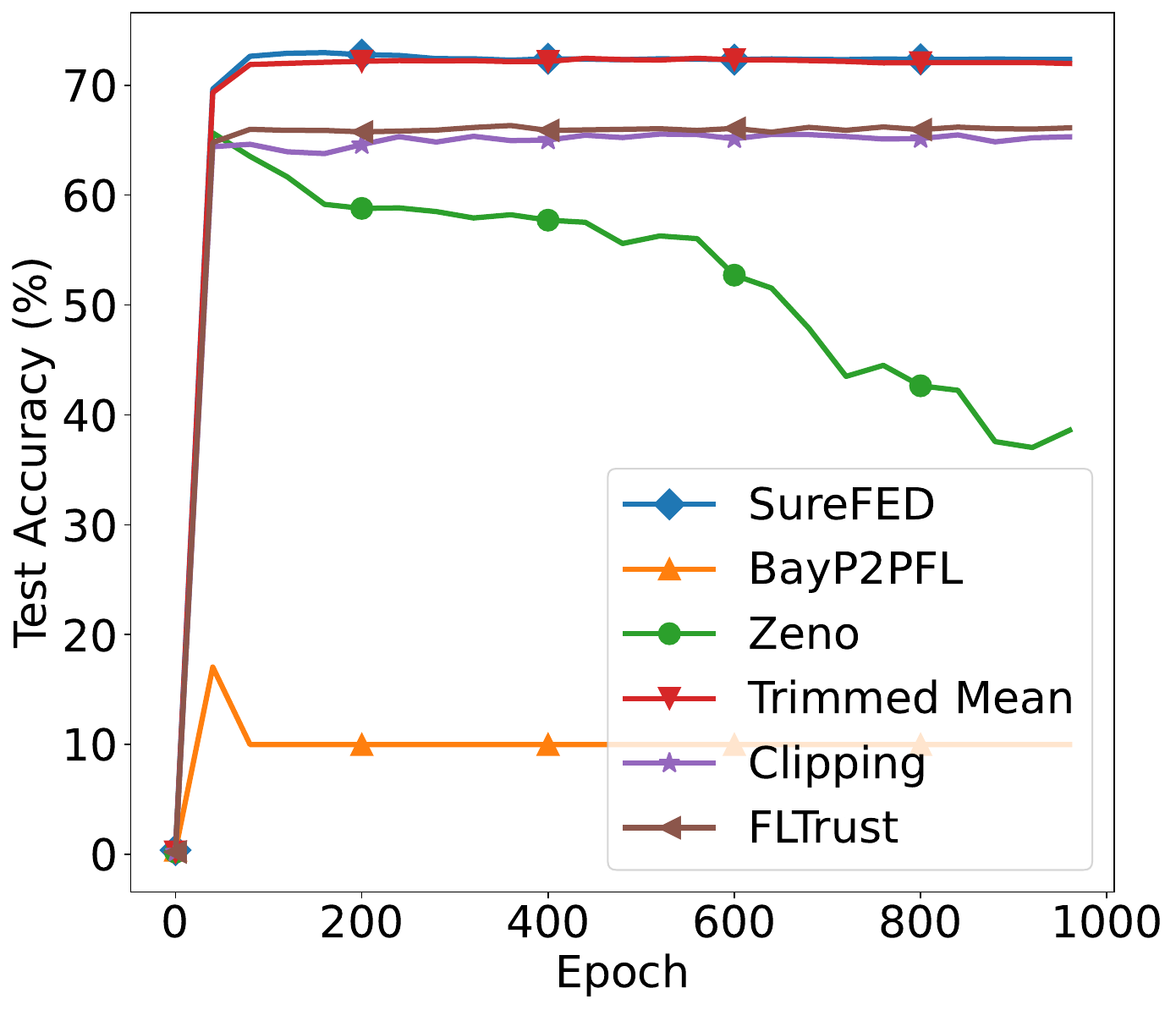}} 
    \subfloat[\scriptsize General Random Attack]{\label{fig:cifar10:general}\includegraphics[width=0.2\textwidth]{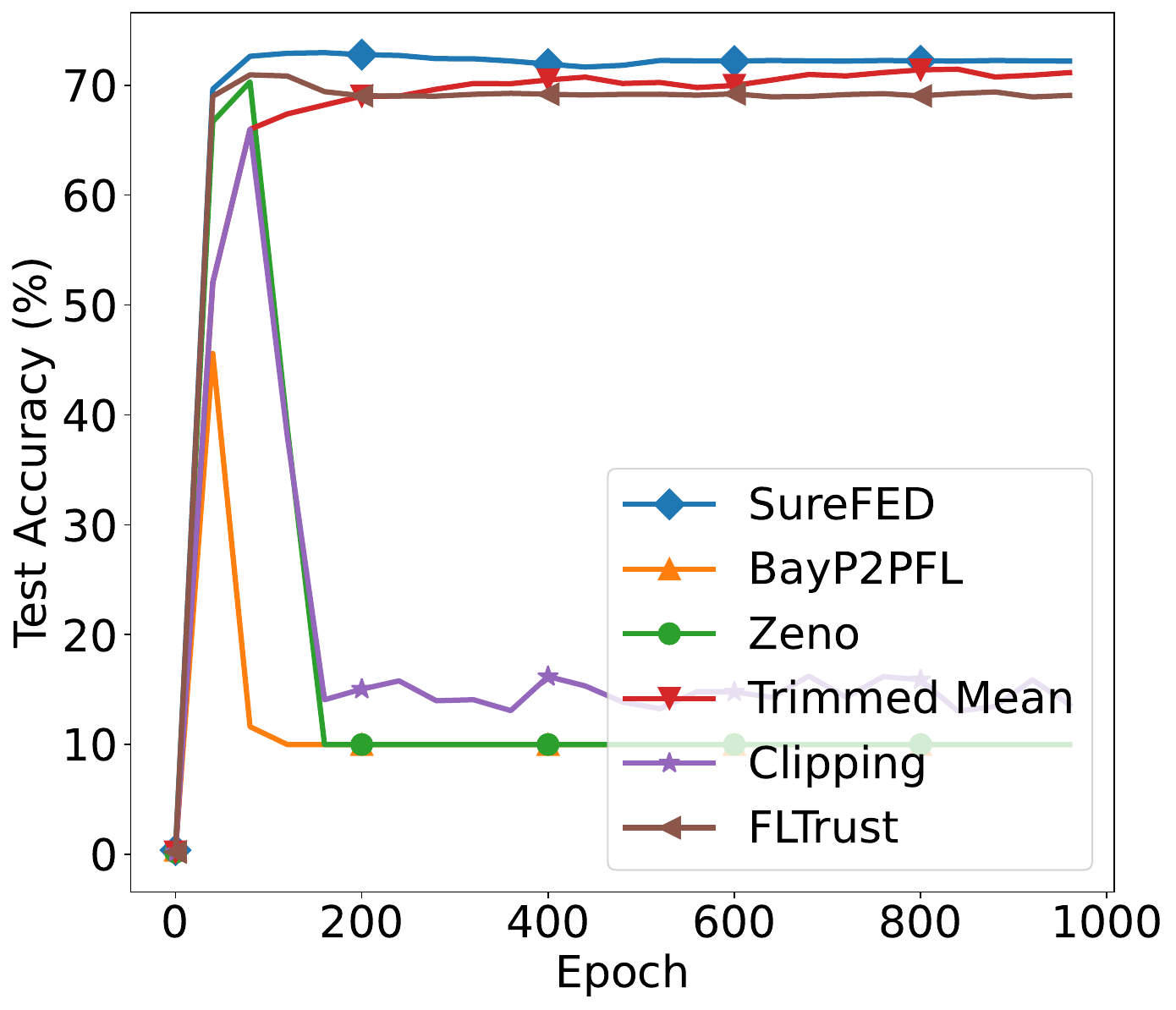}}
     \subfloat[\scriptsize Gaussian Attack]{\label{fig:cifar10:gaussian}\includegraphics[width=0.2\textwidth]{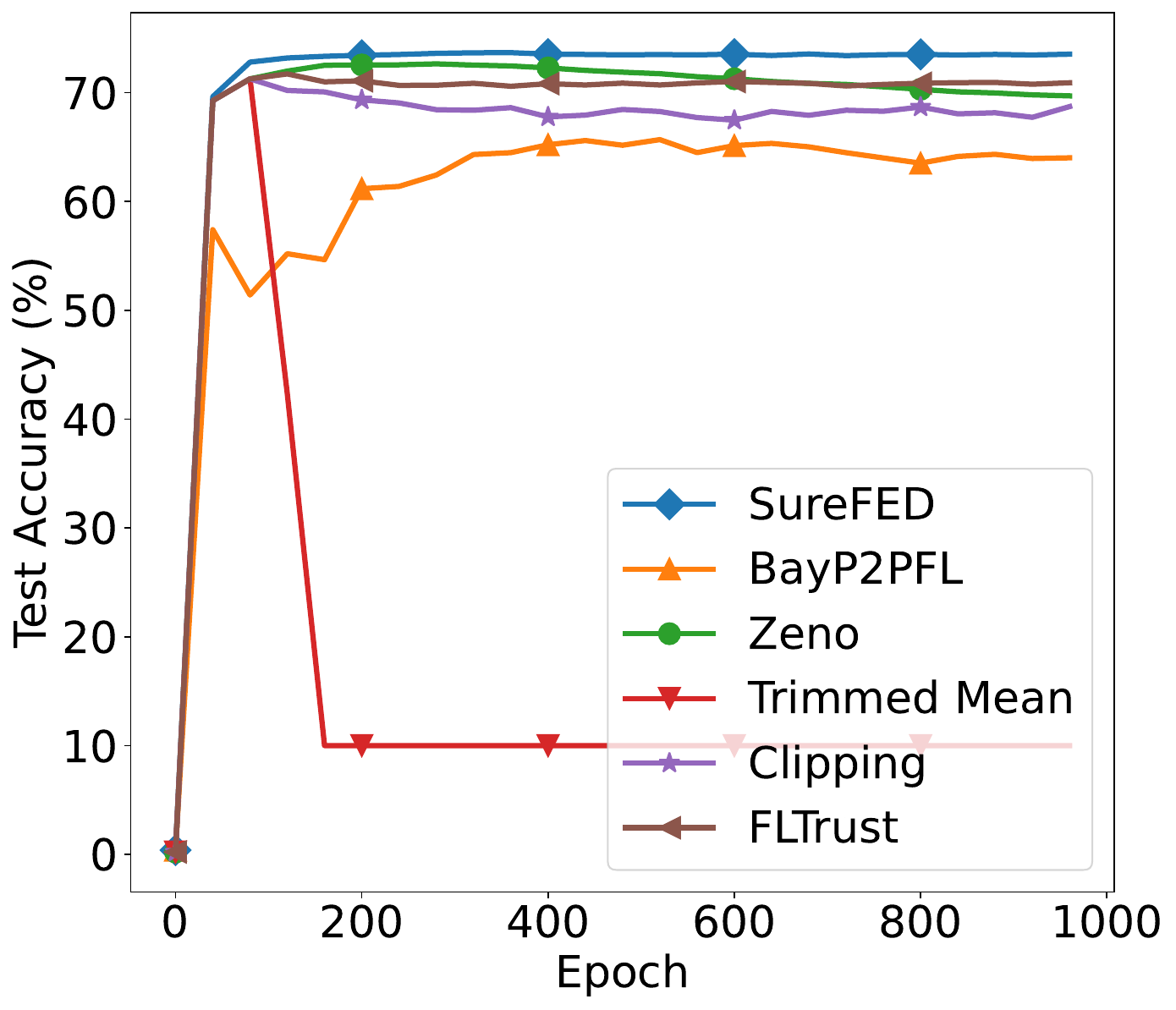}}
  \end{center}
  \caption{Accuracy plot of \sys{} compared with \sysbase{}, Zeno, Trimmed Mean, Clipping, and FLTrust defense methods under different data and model poisoning attacks evaluated on CIFAR10 dataset. }
  \label{fig:cifar10}
\end{figure*}

\begin{table}[!ht]
  \centering
  \small\addtolength{\tabcolsep}{-2pt}
  \begin{tabular}{|c|c|c|c|c|c|}
    \hline
    \backslashbox{{Algorithm}}{{Attack}}  & Label-Flipping & A Little is Enough & Bit-Flip & General Random & Gaussian\\
    \hline
    \textbf{\sys{}}  & \textbf{96\%} &  \textbf{96\%} & \textbf{96\%} & \textbf{96\%} &  \textbf{96\%} \\  \hline
    BayP2PFL & 89\%  & 78\% & 10\%  & 10\%& 22\% \\ \hline
    Zeno & 82\% & 90\% & 35\% & 10\% & 10\% \\ \hline
    Trimmed Mean & 89\% & 56\% & \textbf{96}\% & \textbf{96}\% & 10\% \\  \hline
    Clipping & 82\%  & 82\% & 80\%& 80\% & 94\% \\ \hline
     FLTrust & 10\% & 68\% & 10\% &96 \% & 96\% \\ \hline
     Clean & 96\% & 96\% & 96\% & 96\% & 96\%\\\hline
  \end{tabular}
  \caption{Final Model Accuracy for \sys{} and the other baselines under different attacks  for MNIST dataset. }
  %Higher accuracy indicates more robustness against the corresponding attack.}
  \label{tab:mnist:table}
\end{table}

\begin{table}[!ht]
  \centering
   \small\addtolength{\tabcolsep}{-2pt}
  \begin{tabular}{|c|c|c|c|c|c|}
    \hline
    \backslashbox{Algorithm}{Attack}  & Label-Flipping & A Little is Enough & Bit-Flip & General Random & Gaussian\\
    \hline
    \textbf{\sys{} } & \textbf{73\%}  & \textbf{73\%} & \textbf{73\%} & \textbf{73\%} & \textbf{73\%} \\  \hline
    BayP2PFL & 63\% & 4\% & 2\% & 2\% & 4\% \\ \hline
    Zeno & 51\% & 2\% & 29\% & 2\% &  51\% \\ \hline
    Trimmed Mean & 51\% & 16\% & 70\% & 73\% & 2\% \\  \hline
    Clipping & 67\% & 22\% & 53\% & 34\% & 57\% \\ \hline
      FLTrust & 61\% & 59\% & 54\% & 68\% & 57\% \\ \hline
      Clean & 73\% & 73\% & 73\% & 73\% & 73\% \\\hline
  \end{tabular}
  \caption{Final Model Accuracy for \sys{} and the other baselines under different attacks for FEMNIST dataset.} 
  % Higher accuracy indicates more robustness against the corresponding attack.}
  \label{tab:femnist:table}
\end{table}

\begin{table}[!ht]
  \centering
  \small\addtolength{\tabcolsep}{-2pt}
  \begin{tabular}{|c|c|c|c|c|c|}
    \hline
    \backslashbox{Algorithm}{Attack}  & Label-Flipping & A Little is Enough & Bit-Flip & General Random & Gaussian\\
    \hline
   \textbf{ \sys{}}  & \textbf{71\%}  & \textbf{71\%} & \textbf{71\%} & \textbf{71\%} & \textbf{71\%} \\  \hline
    BayP2PFL & 68\% & 67\% & 10\% & 10\% & 62\% \\ \hline
    Zeno & 53\% & 67\% & 39\% &  10\% & 69\% \\ \hline
    Trimmed Mean & 55\% & 61\% & 71\% & 70\% & 10\% \\  \hline
    Clipping & 68\% & 70\% & 64\% & 14\% & 67\% \\ \hline
    FLTrust & 62\% & 70\% & 64\% &68 \% & 69\% \\ \hline
    Clean & 71\% & 71\% & 71\% & 71\% & 71\%\\\hline
  \end{tabular}
  \caption{Final Model Accuracy for \sys{} and the other baselines under different attacks for CIFAR10 dataset.} 
  % Higher accuracy indicates more robustness against the corresponding attack.}
  \label{tab:cifar10:table}
\end{table}

% \begin{figure*}
%     \centering
%      \subfloat[Trojan Attack, Main Task Accuracy, MNIST]{\label{fig:mnist:tr:acc}\includegraphics[width=0.25\textwidth]{}} 
%     \subfloat[Trojan Attack, Backdoor Accuracy, MNIST]{\label{fig:mnist:asr}\includegraphics[width=0.25\textwidth]{}}
%         \subfloat[Trojan Attack, Main Task Accuracy, FEMNIST]{\label{fig:femnist:bit}\includegraphics[width=0.24\textwidth]{}} 
%     \subfloat[Trojan Attack, Backdoor Accuracy, FEMNIST]{\label{fig:femnist:asr}\includegraphics[width=0.24\textwidth]{}}
%     \caption{Main task and backdoor accuracy plot of \sys{} and other baselines for MNIST and FEMNIST datasets.}
%     \label{fig:trojan}
% \end{figure*}

\begin{table*}[!ht]
\begin{subtable}{0.5\textwidth}
  \centering
  \small\addtolength{\tabcolsep}{-2pt}
  \begin{tabular}{|c|c|c|}
    \hline
    \small{Algorithm} & \small{Main Task Accuracy} & \small{Backdoor Accuracy} \\
    \hline
    \small{\textbf{\sys{} }} &  96\% & \textbf{24\%}  \\  \hline
    \small{BayP2PFL} & 96\% & 100\%\\ \hline
   \small{Zeno} &  96\%  & 100\% \\ \hline
    \small{Trimmed Mean} & 96\%  & 100\%\\  \hline
    \small{Clipping} & 96\%  & 100\%\\ \hline
    \small{FLTrust} & 96\%  & 100\%\\  \hline
  \end{tabular}
  \caption{MNIST Dataset}
  \end{subtable}
  \begin{subtable}{0.5\textwidth}
  \small\addtolength{\tabcolsep}{-2pt}
   \begin{tabular}{|c|c|c|}
    \hline
    \small{Algorithm} & \small{Main Task Accuracy} & \small{Backdoor Accuracy}   \\
    \hline
    \small{\textbf{\sys{}}}  &  73\% &  \textbf{23\%}  \\  \hline
    \small{BayP2PFL} & 73\%  & 79\% \\ \hline
    \small{Zeno} & 73\%  &  82\% \\ \hline
    \small{Trimmed Mean} & 60\%  & 64\% \\  \hline
    \small{Clipping} & 73\%  &  79\% \\ \hline
    \small{FLTrust} & 73\%  &  79\% \\ \hline
  \end{tabular}
  \caption{FEMNIST}
  \end{subtable}
  \caption{Main Task and Backdoor Accuracy of \sys{} and the other baselines under Trojan  attack.  \textbf{High Main Task Accuracy} indicates the success of  the Trojan attack in maintaining its stealthiness (main task accuracy of all baselines should be high). \textbf{Low Backdoor Accuracy} indicates the success of the algorithm in defending against the Trojan attack. }
  \label{tab:trojan}
\end{table*}

\subsection{ \sys{} for Decentralized Linear Regression} 
\label{sec:bc_p2p_fl}
In this section, we describe \sys{} for the case where the models have a single linear layer.  We also assume that the labels are generated according to the linear equation of $y^i_t=\langle\theta^*,x^i_t\rangle+\eta^i_t$, where $\langle\rangle$ denotes the inner product and we assume $\eta^i_t\sim N(0,\Sigma^i)$ is a Gaussian  noise. In this setting, the local belief updates are done according to  Bayes rule and there is no need to fit the posteriors to a Gaussian distribution  due to the posteriors themselves  being Gaussian distributions. 
Therefore, the  belief updates that are done locally are simplified to Kalman filter updates on the parameters of the Gaussian distributions. In particular,  after receiving a data sample $(x^i_t,y^i_t)$, the belief parameters are updated as follows. 
 \begin{subequations}
	\label{eq:PrBeUp_alg}
\begin{align}
\hat{\theta}^{i}_t&=\hat{\theta}^i_{t-1}+\frac{\hat{\Sigma}^i_{t-1}x^i_t}{{x^i_t}^T\hat{\Sigma}^i_tx^i_t+\Sigma^i}(y^i_t-{x^i_t}^T\hat{\theta}^i_{t-1}) \label{eq:PrEsUp_alg}\\
	\hat{\Sigma}^{i}_t&=\hat{\Sigma}^i_{t-1}-\frac{\hat{\Sigma}^i_{t-1}x^i_t{x^i_t}^T\hat{\Sigma}^i_{t-1}}{{x^i_t}^T\hat{\Sigma}^i_{t-1}x^i_t+\Sigma^i}
	\label{eq:PrCoUp_alg}
\end{align}
\end{subequations} 
%\vspace{-0.3cm}
\begin{subequations}
    \begin{align}
		\bar{\theta}^{i}_t&=\bar{\theta}^i_{t-1}+\frac{\bar{\Sigma}^i_{t-1}x^i_t}{{x^i_t}^T\bar{\Sigma}^i_{t-1}x^i_t+\Sigma^i}(y^i_t-{x^i_t}^T\bar{\theta}^i_{t-1})\\
		\bar{\Sigma}^{i}_t&=\bar{\Sigma}^i_{t-1}-\frac{\bar{\Sigma}^i_{t-1}x^i_t{x^i_t}^T\bar{\Sigma}^i_{t-1}}{{x^i_t}^T\bar{\Sigma}^i_{t-1}x^i_t+\Sigma^i} 
	\end{align}
	\label{eq:SoBeUp_alg}
\end{subequations}

\subsection{Proofs} \label{proofs}
%\begin{proof}[Proof of Theorem \ref{thm:learn_p2pfl}]
\begin{proof}[Proof of Theorem \ref{thm:learn_bcp2pfl}]
In order to prove the theorem, we show that the social estimations of agents will converge to $\theta$ and the  social covariance matrices converge to 0. We first assume that the trust weights are fixed throughout the algorithm (similar to \sysbase{} \cite{wang2022peer}) and we will later incorporate the bounded confidence trust weights used in \sys{}.
  \begin{lemma}\label{lm:sigma_p2pfl}
        %We have the following for the covariance matrix $\hat{hat{\Sigma}}^i_t$.
        The covariance matrix of the social beliefs $\bar{\Sigma}^i_t$  decrease with rate proportional to $\frac{1}{t}$, where $t$ is the number of local data observations.
    \end{lemma}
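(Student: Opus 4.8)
The plan is to carry out the entire analysis in the \emph{information (precision) domain}, where both operations of a local cycle become linear. First I would apply the matrix inversion lemma \cite{hager1989updating} to the social measurement update \eqref{eq:SoBeUp_alg}: a single observation $(x^i_t,y^i_t)$ turns the covariance recursion into the additive precision recursion
\begin{align}
(\bar{\Sigma}^i_t)^{-1}=(\bar{\Sigma}^i_{t-1})^{-1}+\frac{x^i_t (x^i_t)^T}{\Sigma^i},
\end{align}
so a whole batch injects the positive semidefinite information $M^i_t=\sum_{s}\frac{x^i_s (x^i_s)^T}{\Sigma^i}$. The aggregation step \eqref{eq:SoBeAgg_alg} already reads $(\bar{\Sigma}^{'i}_t)^{-1}=\sum_{j}T^{ij}_t(\bar{\Sigma}^j_t)^{-1}$, and since the bounded-confidence weights of Definition \ref{def:trust} sum to one over $j$, this is a convex combination of precision matrices. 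Writing $P^i_t=(\bar{\Sigma}^i_t)^{-1}$ and collecting the nodes, one cycle is $P_t=T_t(P_{t-1}+M_t)$ with $T_t$ row-stochastic, which unrolls to
\begin{align}
P^i_t=\sum_{j}\Phi(t,1)^{ij}P^j_0+\sum_{s=1}^{t}\sum_{j}\Phi(t,s)^{ij}M^j_s,\qquad \Phi(t,s)=T_tT_{t-1}\cdots T_s .
\end{align}
Every $\Phi(t,s)$ is row-stochastic, so the lemma reduces to sandwiching this accumulation of $t$ information injections in the positive semidefinite order.

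For the upper bound on the precision — equivalently, the statement that $\bar{\Sigma}^i_t$ does not shrink faster than $1/t$ — I would assume the regressors have bounded second moment, so $M^j_s\preceq C\,I$ for a constant $C$. Because each inner sum over $j$ is a convex combination it is also $\preceq C\,I$, and summing $t$ of them gives $P^i_t\preceq P^{\max}_0+C\,t\,I$. Hence the smallest eigenvalue of $\bar{\Sigma}^i_t$ is at least a constant times $1/t$.

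For the lower bound $P^i_t\succeq c\,t\,I$, which forces $\bar{\Sigma}^i_t\preceq \frac{1}{c t}I$, I would invoke the relaxed connectivity constraint (Definition \ref{def:rel_con}) together with sufficiency. Node $i$'s own measurements only populate the coordinates in $\mathcal{K}^i$, so information in the remaining directions must arrive through mixing. Since every $T_l$ has a positive diagonal and its nonzero entries are at least $1/N$, and the composite graph with adjacency $\bar{A}_t$ is strongly connected, the products of these row-stochastic matrices become \emph{scrambling}: every entry $\Phi(t,s)^{ij}$ is bounded below by some $\delta>0$ once $t-s$ exceeds a fixed number of cycles (a Wolfowitz/ergodic-coefficient argument, cf. \cite{seneta2006non,nedic2015nonasymptotic}). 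Consequently each node's injected information reaches $i$ with uniformly positive weight, and by sufficiency $\cup_j \mathcal{K}^j=\mathcal{K}$ every coordinate is observed by some node; a law-of-large-numbers step then shows that a positive-definite amount of information accumulates per unit time in every direction with high probability, giving $P^i_t\succeq c\,t\,I$. Combining the two bounds yields $\frac{1}{C t}I\preceq\bar{\Sigma}^i_t\preceq\frac{1}{c t}I$ for large $t$.

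The hard part will be the lower bound, for two reasons. First, the scrambling estimate on $\Phi(t,s)$ must be made uniform in $t$ and $s$ under the \emph{relaxed} (rather than instantaneous) connectivity, which is exactly where the cycle decomposition $\bar{A}_t=\prod_{l}A^J_l$ of Definition \ref{def:rel_con} is needed. Second — and this is specific to \sys{} rather than \sysbase{} — the trust weights $T^{ij}_t$ are \emph{state dependent}: they are determined by the proximity of the means $\bar{\theta}^j_t$ to the local estimate, which themselves depend on past covariances. I would handle this by first proving the lemma for fixed row-stochastic weights, as announced at the start of the surrounding proof, where the argument above is self-contained, and then arguing that in the benign regime the benign nodes eventually lie within each other's confidence sets, so the induced matrices remain row-stochastic with the connectivity needed for the ergodic bound; the detailed decoupling of this feedback is deferred to the proof of Theorem \ref{thm:learn_bcp2pfl}.
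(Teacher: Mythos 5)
Your proposal follows essentially the same route as the paper's proof: pass to the precision (information) domain via the Sherman--Morrison / matrix-inversion lemma so that an observation adds $x^i_t(x^i_t)^T/\Sigma^i$ to the precision, view the aggregation as a row-stochastic mixing of precisions, unroll the resulting linear recursion into products of trust matrices, and invoke relaxed connectivity plus sufficiency to conclude that information accumulates linearly in $t$, with the state-dependent bounded-confidence weights deferred exactly as the paper does. Your treatment is somewhat more careful than the paper's (a two-sided PSD sandwich with a scrambling/ergodic-coefficient bound on $\Phi(t,s)$, versus the paper's one-sided trace-growth argument), but this is a refinement of the same argument, not a different one.
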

    \begin{proof}
    % For simplicity, we use the notation  $\hat{\hat{\Sigma}}^{i}_t$ to indicate the covariance matrix of the social belief of client $i$ after $t$ rounds local data observations (or $t$ local clock cycles) and before the $t_{th}$ aggregation (pre-aggregation). We also denote $\bar{\Sigma}^{i}_t$ to indicate the covariance matrix of the social belief of client $i$ at the end of the cycle $t$ (post-aggregation). 
    % Also, we first assume that all clients are synchronized based on  a global clock. We will later explain how the proof can be extended to our asynchronous setting. 
    According to equation \eqref{eq:SoBeUp_alg}, and 
    using Sherman–Morrison formula \cite{hager1989updating}, we can write
     \begin{align*}
    	(\bar{\Sigma}^{i}_t)^{-1}&=(\bar{\Sigma}^i_{t-1}-\frac{\bar{\Sigma}^i_{t-1}x^i_t{x^i_t}^T\bar{\Sigma}^i_{t-1}}{{x^i_t}^T\bar{\Sigma}^i_{t-1}x^i_t+\Sigma^i})^{-1}\\&=(\bar{\Sigma}^{i}_{t-1})^{-1}+\frac{(\bar{\Sigma}^{i}_{t-1})^{-1}\frac{\bar{\Sigma}^i_{t-1}x^i_t{x^i_t}^T\bar{\Sigma}^i_{t-1}}{{x^i_t}^T\bar{\Sigma}^i_{t-1}x^i_t+\Sigma^i}(\bar{\Sigma}^{i}_{t-1})^{-1}}{1-\frac{{x^i_t}^T\bar{\Sigma}^i_{t-1}(\bar{\Sigma}^{i}_{t-1})^{-1}\bar{\Sigma}^i_{t-1}x^i_t}{{x^i_t}^T\bar{\Sigma}^i_{t-1}x^i_t+\Sigma^i}}\\
    	&=(\bar{\Sigma}^{i}_{t-1})^{-1}+\frac{x^i_t{x^i_t}^T}{\Sigma^i} \numberthis \label{eq:Sher_Mor}
    \end{align*}
Notice that the above equation indicates that if  the covariance matrix $\bar{\Sigma}^{i}_{t-1}$ is invertible, then $\bar{\Sigma}^{i}_t$ is also invertible. Since we start with an invertible covariance matrix, all of the covariance matrices are invertible. 
    We denote $Z^i_t=(\bar{\Sigma}^{i}_{t})^{-1}$. Then using equation \eqref{eq:SoBeAgg_alg} and \eqref{eq:Sher_Mor}, we have
    \begin{align}
        Z^i_t=\sum_{j \in \mathcal{N}_t(i)}T^{ij}_t(Z^j_{t-1}+\frac{x^j_t{x^j_t}^T}{\Sigma^j})
        \label{CoIn}
    \end{align}
    If we denote $T_t=(T^{ij}_t)_{i \in \mathcal{N},j\in \mathcal{N}_t(i)}$ (the $ij_{th}$ element of $T_t$ is $T^{ij}_t$ if $j \in \mathcal{N}_t(i)$),  $(Z_t)_{sl}=((Z^i_t)_{sl})_{i \in \mathcal{N}}$, and $(X_t)_{sl}=(\frac{(x^i_t{x^i_t}^T)_{sl}}{\Sigma^i})_{i \in \mathcal{N}}$, we can write
    \begin{align}
       (Z_t)_{sl}=T_t(Z_{t-1})_{sl}+T_t(X_t)_{sl}
    \end{align}
    Therefore, we can write
     \begin{align}
       (Z_t)_{sl}=T_{t:t-\tau(t)}(Z_{\tau(t)-1})_{sl}+\sum_{\tau=t-\tau(t)}^tT_{t:\tau}(X_{\tau})_{sl}
    \end{align}
    where $\tau(t)$ is chosen such that $T_{t:t-\tau(t)}>0$ (all elements are positive). We know such $\tau(t)$ exists due to Assumption \ref{ass:rel_con}. Since the collective dataset of all agents is sufficient to learn the true model parameter, for each $s$, there exists at least one agent $j$ with $(x^j_{\tau}{x^j_{\tau}}^T)_{ss}>0$ with probability 1. Therefore,  $\sum_{\tau=t-\tau(t)}^tT_{t:\tau}(X_{\tau})_{ss}>0$. Hence, on average, at each time step, $\frac{1}{\tau(t)}\sum_{\tau=t-\tau(t)}^tT_{t:\tau}(X_{\tau})_{ss}>0$ is added to $(Z_t)_{ss}$. Therefore,  $tr(Z_t)$ is increasing with rate proportional to $t$. Since $Z_t=(\bar{\Sigma}^{i}_{t})^{-1}$ and $\bar{\Sigma}^{i}_{t}$ is a positive semi-definite matrix, we know that $tr(Z_t)=tr(\bar{\Sigma}^{i}_{t})^{-1}$. Therefore, $tr(\bar{\Sigma}^{i}_{t})$ decreases with rate proportional to $\frac{1}{t}$. 
    %trace Since all  elements of $Z_t$ are non-negative and the diagonal elements are increasing with rate proportional to $t$,  $(\bar{\Sigma}^{i}_{t})^{-1}$ is increases with rate proportional to $t$.
    Hence, $\bar{\Sigma}^{i}_{t}$ is decreasing with rate proportional to $\frac{1}{t}$.
    
% In order to extend the proof to our asynchronous setting, we consider a global clock  with a cycle length being equal to the maximum of the clients' local cycle lengths. Then, in the above proof, $t$ would stand for the $t_{th}$ global clock cycle. Since our relaxed connectivity constraint is stated with respect to this global clock, the proof will be correct. The only point to mention is that in each of the global  clock cycles, clients might make more than one observations and aggregations. Since we want to show an increase in $(Z_t)_{ss}$ in every cycle and more observations will even add more to this quantity, the proof will still hold.  
%         According to equation \eqref{SoCoUp} and \eqref{SoCoAg}, we have
%     \begin{align}
% 	\frac{1}{\hat{\hat{\Sigma}}^i_{t}}&=\frac{1}{\bar{\Sigma}^i_{t-1}}+ \frac{1}{\Sigma^i}=\sum_{j \in \mathcal{N}} w^{ij}\frac{1}{\hat{\hat{\Sigma}}^j_{t-1}}+\frac{1}{\Sigma^i}\\&=\frac{1}{\hat{\hat{\Sigma}}^i_{t-1}}+\sum_{j \in \mathcal{N}, j \neq i}w^{ij}\frac{1}{\hat{\hat{\Sigma}}^j_{t-1}}+\frac{1}{\Sigma^i}
%  \\ &\geq 	\frac{1}{\hat{\hat{\Sigma}}^i_{t-1}}+|\mathcal{N}(i)| \frac{1}{\bar{\Sigma}^i} \geq \frac{|\mathcal{N}(i)|t}{\bar{\Sigma}^i}+\frac{1}{\Sigma_0}, \end{align}
%   where $\frac{1}{\bar{\Sigma}^i}=\frac{1}{|\mathcal{N}(i)|}\sum_{j \in \mathcal{N}(i)}\frac{1}{\Sigma^i}$.
%  Since the inverse of the variance is increasing linearly with $|\mathcal{N}(i)|t$, the variance is decreasing with rate proportional to $\frac{1}{|\mathcal{N}(i)|t}$.
    \end{proof}
    \begin{lemma}\label{lm:estimation_p2pfl}
    The social estimations, $\bar{\theta}^i_t$, converge to $\theta$ for all $i \in \mathcal{N}$.
    \end{lemma}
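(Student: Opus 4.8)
The plan is to pass to the natural (information) parametrization of the Gaussian social beliefs, in which both the Kalman data update \eqref{eq:SoBeUp_alg} and the aggregation rule \eqref{eq:SoBeAgg_alg} become affine, and then ride on the precision growth already established in Lemma \ref{lm:sigma_p2pfl}. Concretely, set $b^i_t := (\bar{\Sigma}^i_t)^{-1}\bar{\theta}^i_t$. From \eqref{eq:Sher_Mor} and the first line of \eqref{eq:SoBeUp_alg} one checks that the data step is $b^i_t = b^i_{t-1} + x^i_t y^i_t/\Sigma^i$, while \eqref{eq:SoBeAgg_alg} shows the aggregation step is the convex combination $b^i_t = \sum_{j} T^{ij}_t b^j_t$. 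Introducing the natural-parameter error $e^i_t := (\bar{\Sigma}^i_t)^{-1}(\bar{\theta}^i_t - \theta^*)$ and using $y^i_t = {x^i_t}^T\theta^* + \eta^i_t$, these two steps collapse to the clean recursions $e^i_t = e^i_{t-1} + x^i_t\eta^i_t/\Sigma^i$ (data) and $e^i_t = \sum_j T^{ij}_t e^j_t$ (aggregation): the data step injects only zero-mean noise and the aggregation step is a stochastic average.

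First I would stack the per-node errors and unroll one joint cycle at a time, obtaining $E_t = T_{t:1} E_0 + \sum_{\tau=1}^{t} T_{t:\tau} N_\tau$, where $T_{t:\tau}$ is the product of the row-stochastic trust matrices from cycle $\tau$ to $t$ and $N_\tau$ collects the independent, zero-mean injections $x^i_\tau\eta^i_\tau/\Sigma^i$. Because every $T_{t:\tau}$ is row-stochastic, the initial term is a convex combination of the bounded vectors $e^i_0 = -(\Sigma^i_0)^{-1}\theta^*$ and stays $O(1)$, while for the noise term the convexity of each row gives $\sum_j (T_{t:\tau})_{ij}^2 \le 1$, so that by independence across $(\tau,j)$ one gets $\mathbb{E}\|e^i_t\|^2 = O(t)$. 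Finally, since $\bar{\theta}^i_t - \theta^* = \bar{\Sigma}^i_t\, e^i_t$ and Lemma \ref{lm:sigma_p2pfl} yields $\|\bar{\Sigma}^i_t\| = O(1/t)$, I conclude $\mathbb{E}\|\bar{\theta}^i_t - \theta^*\|^2 = O(1/t^2)\cdot O(t) = O(1/t) \to 0$; a Chebyshev or martingale concentration bound on $e^i_t/\sqrt{t}$ upgrades this to the high-probability $1/t$-MSE statement of Theorem \ref{thm:learn_bcp2pfl}.

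The above is carried out, as in Lemma \ref{lm:sigma_p2pfl}, under the simplifying assumption that the trust matrices $T_t$ are externally given and satisfy the relaxed connectivity of Assumption \ref{ass:rel_con}. The main obstacle is removing this assumption and handling the genuinely state-dependent bounded-confidence weights of Definition \ref{def:trust}, which creates a circular dependence: the error analysis needs the realized trust graph to be (relaxed) connected, yet membership $j \in I(\bar{w}^{\mathcal{N}_t(i)}_t,\hat{w}^i_t)$ depends on the very beliefs whose convergence we are trying to establish. I would break this circularity with a bootstrapping argument. On coordinates $k\notin\mathcal{K}^i$ the local belief never updates, so $\hat{\Sigma}^i_{t,kk}$ stays at its large prior value and the tolerance $\kappa\sqrt{\hat{\Sigma}^i_{t,kk}}$ is automatically met; on coordinates $k\in\mathcal{K}^i$ both $\hat{\theta}^i_{t,k}$ and every benign $\bar{\theta}^j_{t,k}$ concentrate on $\theta^*_k$ at the common $1/\sqrt{t}$ scale, so for $\kappa$ large enough the inequality $|\hat{\theta}^i_{t,k}-\bar{\theta}^j_{t,k}|\le \kappa\sqrt{\hat{\Sigma}^i_{t,kk}}$ holds with high probability once the beliefs have entered this concentrated regime. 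The overwrite step of Algorithm \ref{alg:SABRE} keeps each node's social belief within the confidence bound of its own clean local belief, preventing early drift out of that regime. Hence, with high probability, the realized trust graph among benign nodes eventually contains the underlying relaxed-connected communication graph, so the effective $T_t$ inherit relaxed connectivity and the affine analysis of the first two paragraphs applies verbatim, proving $\bar{\theta}^i_t \to \theta^*$ for every $i$.
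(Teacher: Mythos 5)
Your proof is correct (at the paper's own level of rigor) and follows a genuinely different route from the paper's. The paper stays in the mean parametrization: it writes the stacked post-aggregation estimate as $\bar{\theta}_t = \tilde{I}\theta - \bar{T}_tD(G_t)\cdots\bar{T}_1D(G_1)\tilde{I}\theta + \bar{\eta}_t$, where $\bar{T}_t$ is a row-stochastic matrix built from the covariances and the $G^i_t$ are Kalman-gain complements; it then uses the relaxed connectivity a second time to argue that the matrix product is a contraction (via the eigenvalue-$\leq 1$ analysis of $G^i_t$), and dismisses the cumulative noise $\bar{\eta}_t$ by an appeal to the weak law of large numbers. You instead pass to the information parametrization, in which both the data step and the aggregation are exactly affine, yielding the clean error recursion $e^i_t = e^i_{t-1} + x^i_t\eta^i_t/\Sigma^i$ (data) and $e^i_t = \sum_j T^{ij}_t e^j_t$ (aggregation); the error then accumulates like a random walk, $\mathbb{E}\|e^i_t\|^2 = O(t)$, and multiplying by $\|\bar{\Sigma}^i_t\| = O(1/t)$ from Lemma \ref{lm:sigma_p2pfl} gives mean-square error $O(1/t)$. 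Your route buys three things: (i) connectivity is needed only inside Lemma \ref{lm:sigma_p2pfl}, so the graph-theoretic work is quarantined there rather than reappearing in the contraction argument; (ii) the quantitative $1/t$ rate claimed in Theorem \ref{thm:learn_bcp2pfl} falls out directly, whereas the paper's lemma only delivers convergence; (iii) your second-moment bound on the noise, via row-stochasticity ($\sum_j (T_{t:\tau})_{ij}^2 \leq 1$) and independence across injections, is tighter than the paper's bare WLLN appeal. What the paper's route buys is reuse: the row-stochastic $\bar{T}_t$ / opinion-dynamics machinery constructed in its proof is precisely what the proofs of Theorems \ref{thm:p2pfl_dp} and \ref{thm:bcp2pfl_robust} are later built on, so that construction does double duty. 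Two caveats on your side: the independence used in the noise bound requires the trust weights and covariances to be independent of the observation noises, which holds under the externally-given-weights assumption that you (like the paper) impose first; and your bootstrapping for the state-dependent bounded-confidence weights mirrors the paper's closing high-probability inclusion argument --- with the small bonus that you also treat coordinates $k \notin \mathcal{K}^i$, over which Definition \ref{def:trust} quantifies but which the paper's proof silently skips.
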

    \begin{proof}
% Similar to the proof of Lemma \ref{lm:sigma_p2pfl}, we first assume a synchronous setting.
    According to equation  \eqref{eq:SoBeUp_alg}, we have the following.
     \begin{align*}
      \bar{\theta}^{i}_{1}&=\frac{\Sigma^i_0x^i_1}{{x^i_1}^T\Sigma^i_0x^i_1+\Sigma^i}({x^i_t}^T\theta+\eta^i_t)\\&=\frac{\Sigma^i_0x^i_1{x^i_1}^T}{{x^i_1}^T\Sigma^i_0x^i_1+\Sigma^i}\theta+\frac{\Sigma^i_0x^i_1}{{x^i_1}^T\Sigma^i_0x^i_1+\Sigma^i}\eta^i_t \numberthis
    \end{align*}
    We denote 
\begin{align}
     G^i_t&=I-\frac{\bar{\Sigma}^i_{t-1}x^i_t{x^i_t}^T}{{x^i_t}^T\bar{\Sigma}^i_{t-1}x^i_t+\Sigma^i} \label{eq:G}\\ f^i_t&=\frac{\bar{\Sigma}^i_{t-1}x^i_t}{{x^i_t}^T\bar{\Sigma}^i_{t-1}x^i_t+\Sigma^i}\label{eq:f}
\end{align}
    Based on equation   \eqref{eq:SoBeAgg_alg}, we can write
    \begin{align}
       \bar{\theta}^i_1=\bar{\Sigma}^{i'}_1(\sum_{j \in \mathcal{N}_1(i)} T^{ij}_1(\bar{\Sigma}^j_1)^{-1}((I-G^j_1)\theta+f^j_1\eta^j_1) \label{eq:theta1}
    \end{align}
    Similarly, we can write
     \begin{align}
       \bar{\theta}_1=D(\bar{\Sigma}'_1)\tilde{D}(T_1)D((\bar{\Sigma}_1)^{-1})((I-D(G_1))\tilde{I}\theta+D(f_1)  \eta_1)
    \end{align}
    where we define 
        $\tilde{I}=\left[\begin{array}{c}
             I_K  \\
             I_K\\
             \vdots
        \end{array}\right]
      $,
    and 
    \begin{footnotesize}
     \begin{align}
        \tilde{D}(T_1)=\left[\begin{array}{cccccccc}
             T^{11}_1 & 0 & \cdots   & T^{12}_1 & 0 & \cdots  & 0 & \cdots\\
              0 & T^{11}_1  & \cdots   & 0 & T^{12}_1  & \cdots  & 0 & \cdots\\
              \vdots & & &&&&&\\
               T^{21}_1 & 0 & \cdots  & T^{22}_1 & 0 & \cdots  & 0 & \cdots\\
              0 & T^{21}_1  & \cdots   & 0 & T^{22}_1  & \cdots  & 0 & \cdots\\
             \vdots & & & \cdots &&&&
        \end{array}\right]
    \end{align}
    \end{footnotesize}
    We define
    \begin{align}
      \bar{T}_t=D(\bar{\Sigma}'_t)\tilde{D}(T_t)D((\bar{\Sigma}_t)^{-1})
      \label{eq:wbar}
    \end{align}
     Note that we have $\bar{T}_t\tilde{I}=\tilde{I}$ and therefore,  $\bar{T}_t$ is a row stochastic matrix.
    We can write
     \begin{align}
       \bar{\theta}_1=\bar{T}_1((I-D(G_1))\tilde{I}\theta+D(f_1)\eta_1)
    \end{align}
    Similarly, we have
    % \begin{align}
    %    \bar{\theta}^i_2=\bar{\Sigma}^i_2(\sum_{j \in \mathcal{N}_2(i)} T^{ij}_2(\hat{\hat{\Sigma}}^j_1)^{-1}(G^j_1\bar{\theta}^j_1+(I-G^j_1)\theta+f^j_1\eta^j_1)
    % \end{align}
    % and therefore, we have
    \begin{align*}
       \bar{\theta}_2=&\bar{T}_2(D(G_2) \bar{\theta}_1+(I-D(G_2))\tilde{I}\theta+D(f_2)\eta_2)
       \\
       =&\bar{T}_2(D(G_2)\bar{T}_1((I-D(G_1))\tilde{I}\theta+D(f_1)\eta_1) +(I-D(G_2))\tilde{I}\theta+D(f_2)\eta_2)\\
       =&\bar{T}_2\tilde{I}\theta-\bar{T}_2D(G_2))\tilde{I}\theta+\bar{T}_2D(G_2)\bar{T}_1\tilde{I}\theta -\bar{T}_2D(G_2)\bar{T}_1D(G_1)\tilde{I}\theta  +\bar{T}_2D(f_2)\eta_2 +\bar{T}_2D(G_2)\bar{T}_1D(f_1)\eta_1\\
       =& \tilde{I}\theta-\bar{T}_2D(G_2)\bar{T}_1D(G_1)\tilde{I}\theta+\bar{W}_2D(f_2)\eta_2+\bar{T}_2D(G_2)\bar{T}_1D(f_1)\eta_1 \numberthis
    \end{align*}
    where the last equality is due to the fact that $\bar{T}_t \tilde{I}= \tilde{I}$. By generalizing the above equations to time $t$, we can write
    \begin{align}
       \bar{\theta}_t&= \tilde{I}\theta-\bar{T}_tD(G_t)\cdots \bar{T}_1D(G_1)\tilde{I}\theta+\bar{\eta}_t
    \end{align}
     where  $\bar{\eta}_t$ is the cumulative noise terms up to time $t$,
     \begin{align}
     \bar{\eta}_t=&\bar{T}_tD(f_t)\eta_t+\bar{T}_tD(G_t)\bar{T}_{t-1}D(f_{t-1})\eta_{t-1}+ \cdots+\bar{T}_tD(G_t)\cdots \bar{T}_1D(f_1)\eta_1
     \end{align}
     According to the weak law of large numbers, we have $\bar{\eta}_t \xrightarrow{p} 0$. Furthermore, we can show that all eigenvalues of the $G^i_t$ matrices are less than one.  We had
     \begin{align*}
         G^i_t&=I-\frac{\bar{\Sigma}^i_{t-1}x^i_1{x^i_1}^T}{{x^i_1}^T\bar{\Sigma}^i_{t-1}x^i_1+\Sigma^i}=I-\frac{\bar{\Sigma}^i_{t-1}x^i_1{x^i_1}^T}{tr(\bar{\Sigma}^i_{t-1}x^i_1{x^i_1}^T)+\Sigma^i} \numberthis
     \end{align*}
     Since $tr(\bar{\Sigma}^i_{t-1}x^i_1{x^i_1}^T)=\sum_{s=1}^K\lambda^s$, where $\lambda^s$ is the $s_{th}$ eigenvalue of $\bar{\Sigma}^i_{t-1}x^i_1{x^i_1}^T$, all eigenvalues of $ G^i_t$ are less than or equal to one. Notice that $\bar{\Sigma}^i_t$ is a covariance matrix and therefore, is positive semi-definite. But since our covariance matrices are invertible, they  are positive definite. Hence,  if all elements of $x^i_t$ are positive, then all eigenvalues of $G^i_t$ would be less than one.
     The eigenvalues of $G^i_t$ that are one are due to some elements of $x^i_t$ being zero, thus making some rows of $\bar{\Sigma}^i_{t-1}x^i_1{x^i_1}^T$ to be all zeros. Therefore, contraction would not happen on those rows.
     However,  for each element of $\theta$, e.g., $(\theta)_k$, there is at least one agent with non-zero  $(x^i_t)_k>0$.  Based on assumption \ref{ass:rel_con}, for some set $\{s_1, \cdots, s_t\}$, we have   $\prod_{s=s_{t-1}}^{s_t} \bar{T}_sD(G_s)$ to have eigenvalues that are less than one (each block of $\bar{T}_t$ is a row stochastic matrix and therefore, all its eigenvalues are less than or equal to one, and we also assume that if $T^{ij}_t>0,$ then $T^{ij}_t>\delta$ for some $\delta>0$) and thus, $\prod_{s=s_{t-1}}^{s_t} \bar{T}_sD(G_s)$ is a contraction.
     %On the other hand, each block of $A_t$ is a row stochastic matrix and therefore, all its eigenvalues are less than or equal to one. Hence, each block of $A_tD(G_t)$, i.e., the  $iK:(i+1)K$ rows for  $i \in \{1, \cdots, N\}$, is a contraction. 
     Therefore, we have $\lim_{t\rightarrow \infty } ||\bar{T}_tD(G_t)\cdots \bar{T}_1D(G_1)\tilde{I}\theta||=0$. Consequently, we have $\lim_{t\rightarrow \infty } \bar{\theta}_t=\tilde{I}\theta$.

    \end{proof}
    Using lemmas \ref{lm:sigma_p2pfl} and \ref{lm:estimation_p2pfl}, the estimation of clients converges to the true model parameter with their uncertainty (variance) converging to zero.  Hence, clients can learn the true model parameter with sufficient data observations.

The difference that \sys{} has with the setting of the above proof above is  that $T_t$  changes by time according to our robust aggregation rule. Therefore, in order to prove the theorem, it suffices to show that $T_t$  will satisfy the relaxed connectivity constraint.  Since $A_t$ satisfies the relaxed connectivity constraint, we need to show that if there is an edge between users $i$ and $j$, then $T^{ij}_t>\delta$ with high probability for some $\delta>0$. 

% We add subscript $t$ to indicate the model parameters at cycle $t$ of client $i$,  
Since the beliefs are Gaussian, for agent $i$ we have $|(\hat{\theta}^i_t)_k-(\theta)_k|<2\sqrt{(\hat{\Sigma}^i_t)_{k,k}}$ for all $k \in \mathcal{K}^i$ with high probability ($\approx 0.97$). The same is true for the social belief of agent $j$. Therefore, we have $|(\hat{\theta}^i_t)_k-(\bar{\theta}^j_t)_k|<2\sqrt{(\hat{\Sigma}^i_t)_{k,k}}+2\sqrt{(\bar{\Sigma}^j_t)_{k,k}}$ with high probability ($\approx 0.999$). Hence, if $(\hat{\Sigma}^i_t)_{k,k}\simeq (\bar{\Sigma}^j_t)_{k,k}$, and for $\kappa \approx 4$, agent  $i$ will aggregate the updates of agent $j$ with high probability. Notice that there is a possibility for agent $i$ to not aggregate the updates of agent $j$ when $(\bar{\Sigma}^j_t)_{k,k}\gg (\hat{\Sigma}^i_t)_{k,k}$. This indicates that if the model of agent $j$ is too bad, agent $i$ will not aggregate it. 
\end{proof}

%%%%Asynchronous old

% \begin{proof}[Proof of Theorem \ref{thm:bcp2pfl_robust}]
% In order to study robustness, we analyze the convergence of the beliefs in the two algorithms and show whether the estimations of users on $\theta$ converge to its true value or not. We also show that in both algorithms, the mean square error (MSE) of the estimations, which are the variance of the beliefs, decreases with rate proportional to $\frac{1}{|\overline{\mathcal{N}(i)}|t}$, i.e., the variance is proportional to $\frac{1}{|\overline{\mathcal{N}(i)}|t}$, where $t$ is the number of data samples received and $|\overline{\mathcal{N}(i)}|$ is the average number of updates received by agent $i$ between two consecutive private observations.

\begin{proof}[Proof of Theorem \ref{thm:bcp2pfl_robust}]
Before proving this theorem, we need to state and prove the following theorem. 
\begin{theorem}
If the trust weight matrix $T_t$ is fixed through time (similar to \sysbase{}), and if nodes $\mathcal{N}^c$ are compromised by Label-Flipping attack and the communication graph satisfies the relaxed connectivity constraint, the estimation of all users converge to $\theta^*+cb$ for some vector $c$ with positive elements. 
%even if we have $\mathcal{K}^i=\mathcal{K}$, for all $i \in \mathcal{N}$.
 \label{thm:p2pfl_dp}
\end{theorem}
    \begin{proof}[Proof of Theorem \ref{thm:p2pfl_dp}]
    %Assume $\mathcal{K}^i=\mathcal{K}$.   
%Similarly, we define $\hat{\hat{\theta}}^i_t$, $\bar{\theta}^i_t$, $\hat{\hat{\Sigma}}^i_{t}$, and $\bar{\Sigma}^i_{t}$ as the pre-aggregation and post-aggregation estimations and covariance matrices, respectively. 
    We denote the benign nodes (the nodes that are not compromised) by $\mathcal{N}^b$. Also, the benign neighbors of node $i$ at time $t$ are denoted by $\mathcal{N}^b_t(i)$. Similarly, we denote the compromised neighbors of node $i$ at time $t$ by $\mathcal{N}^c_t(i)$. 
    According to equation  \eqref{eq:SoBeAgg_alg} and \eqref{eq:SoBeUp_alg}, and similar to equation \eqref{eq:theta1}, we can write 
   \begin{align}
       \bar{\theta}^i_1
       %&=\bar{\Sigma}^{i'}_1(\sum_{j \in \mathcal{N}^b_1(i)} T^{ij}_1(\bar{\Sigma}^j_1)^{-1}((I-G^j_1)\theta+f^j_1\eta^j_1)\nonumber \\&\quad +\sum_{j \in \mathcal{N}^c_1(i)} T^{ij}_1(\bar{\Sigma}^j_1)^{-1}((I-G^j_1)\theta+f^j_1\eta^j_1+f^j_1b)) \\
       &= \bar{\Sigma}^{i'}_1(\sum_{j \in \mathcal{N}_1(i)} T^{ij}_1(\bar{\Sigma}^j_1)^{-1}((I-G^j_1)\theta+f^j_1\eta^j_1)) +\bar{\Sigma}^i_1\sum_{j \in \mathcal{N}^c_1(i)} T^{ij}_1(\bar{\Sigma}^j_1)^{-1}f^j_1b 
    \end{align}
    Therefore, we have
     \begin{align}
       \bar{\theta}_1=\bar{T}_1((I-D(G_1))\tilde{I}\theta+D(f_1)\eta_1)+\bar{T}_1D(f_1) \mathbf{1}^cb
    \end{align}
    where $\tilde{I}$ and $\bar{T}_1$  are defined in the proof of Theorem \ref{thm:learn_bcp2pfl}. We also define $(\mathbf{1}^c)_{(j-1)K:jK}=1$ if $j \in \mathcal{N}^c$ and $(\mathbf{1}^c)_{(j-1)K:jK}=0$, otherwise.
    Similarly, we can write
     \begin{align}
       \bar{\theta}_t&= \tilde{I}\theta-\bar{T}_tD(G_t)\cdots \bar{T}_1D(G_1)\tilde{I}\theta+\bar{\eta}_t+\mathbf{c}_t b 
       \label{eq:bartheta_c}
    \end{align}
    where 
    \begin{align}
        \mathbf{c}_t=&\bar{T}_tD(f_t) \mathbf{1}^c+\bar{T}_tD(G_t)\bar{T}_{t-1}D(f_{t-1}) \mathbf{1}^c+ \cdots+\bar{T}_tD(G_t)\cdots \bar{T}_1D(f_1) \mathbf{1}^c 
    \end{align}
    In the proof of Theorem \ref{thm:learn_bcp2pfl} we showed that the sum of the first two terms in equation \eqref{eq:bartheta_c} will converge to $\theta$. In the following, we will show that $\mathbf{c}_t$ will converge to $\tilde{I}\mathbf{c}$, where $\mathbf{c}$ is a vector of size $K$  with positive elements. We can write
    \begin{align}
       \mathbf{c}_t=\bar{T}_tD(f_t) \mathbf{1}^c+\bar{T}_tD(G_t)\mathbf{c}_{t-1}
    \end{align}
    We know that $f^i_t \rightarrow 0$ and $G^i_t \rightarrow I$. Therefore, for large enough $t$,  we have
    \begin{align}
        \mathbf{c}_t\simeq\bar{T}_t\mathbf{c}_{t-1}
        \label{c_t}
    \end{align}
    We show the convergence of $\mathbf{c}_t$ based on the convergence analysis in the opinion dynamics literature \cite{hegselmann2002opinion,krause2000discrete}.  In this literature, there are a network of agents, each of which has an opinion over a specific matter. The opinions of the agent $i$ at time $t$ is denoted by $\mathbf{x}^i_t$, and the vector of agent's opinions is denoted by $\mathbf{x}_t$. 
    The opinion dynamics is usually assumed to be $\mathbf{x}_{t+1}=A_t\mathbf{x}_t$, where $A_t$ is a row stochastic matrix. One can see that $\mathbf{c}_t$ evolves according to a similar model. We  first state the next well known lemma from \cite{seneta2006non} (Theorem 3.1).
    \begin{lemma}
        If $A$ is a row stochastic matrix, then we have
        \begin{align}
            v(A\mathbf{x}) \leq (1-\min_{1 \leq i,j \leq n}\sum_{k=1}^n\min\{a_{ik},a_{jk}\}) v(\mathbf{x}),
        \end{align}
        where $\mathbf{x}\in \mathbb{R}^n$ and
    \begin{align}
        v(\mathbf{x})=\max_{1 \leq i \leq n}\mathbf{x}^i-\min_{1 \leq i \leq n}\mathbf{x}^i=\min_{1 \leq i,j \leq n}(\mathbf{x}^i-\mathbf{x}^j)
    \end{align}
        \label{rowstoch}
    \end{lemma}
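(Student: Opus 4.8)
The plan is to recognize the quantity $1-\min_{i,j}\sum_k\min\{a_{ik},a_{jk}\}$ as the Dobrushin ergodicity coefficient of the row-stochastic matrix $A$, and to prove the contraction by comparing the output coordinates pairwise. Writing $\mathbf{x}^k$ for the components of $\mathbf{x}$, I would first fix an arbitrary pair of indices $i,j$ and expand $(A\mathbf{x})^i-(A\mathbf{x})^j=\sum_k(a_{ik}-a_{jk})\mathbf{x}^k$. Then I would partition the index set $\{1,\dots,n\}$ into $S^+=\{k:a_{ik}\ge a_{jk}\}$ and $S^-=\{k:a_{ik}<a_{jk}\}$, so that the difference splits into a nonnegatively-weighted sum over $S^+$ and a nonpositively-weighted sum over $S^-$.

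The structural fact driving the argument is that the total positive excess and the total negative excess between rows $i$ and $j$ coincide: setting $d:=\sum_{k\in S^+}(a_{ik}-a_{jk})$, row-stochasticity gives $\sum_k(a_{ik}-a_{jk})=1-1=0$, hence $d=\sum_{k\in S^-}(a_{jk}-a_{ik})$. Bounding each $\mathbf{x}^k$ appearing with a positive coefficient by $\max_\ell\mathbf{x}^\ell$ and each $\mathbf{x}^k$ appearing with a negative coefficient by $\min_\ell\mathbf{x}^\ell$ then yields $(A\mathbf{x})^i-(A\mathbf{x})^j\le d\,\max_\ell\mathbf{x}^\ell-d\,\min_\ell\mathbf{x}^\ell=d\,v(\mathbf{x})$, where I use $v(\mathbf{x})=\max_\ell\mathbf{x}^\ell-\min_\ell\mathbf{x}^\ell$.

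The remaining and most delicate step is the identity $d=1-\sum_k\min\{a_{ik},a_{jk}\}$, which I would verify by noting that $\min\{a_{ik},a_{jk}\}$ equals $a_{jk}$ on $S^+$ and $a_{ik}$ on $S^-$; thus $\sum_k\min\{a_{ik},a_{jk}\}=\sum_{k\in S^+}a_{jk}+\sum_{k\in S^-}a_{ik}$, and subtracting this from $\sum_k a_{ik}=1$ leaves exactly $\sum_{k\in S^+}(a_{ik}-a_{jk})=d$. Substituting gives $(A\mathbf{x})^i-(A\mathbf{x})^j\le\bigl(1-\sum_k\min\{a_{ik},a_{jk}\}\bigr)v(\mathbf{x})\le\bigl(1-\min_{i,j}\sum_k\min\{a_{ik},a_{jk}\}\bigr)v(\mathbf{x})$. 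Since $i,j$ were arbitrary, taking the maximum over all pairs on the left — and recalling that $v(A\mathbf{x})=\max_{i,j}\bigl((A\mathbf{x})^i-(A\mathbf{x})^j\bigr)$ (the displayed definition with $\min$ is a typo for $\max$) — delivers the claim. I expect the main obstacle to be precisely the sign-splitting bookkeeping: the contraction is trivial once one extremizes $\mathbf{x}^k$ on each block, but the whole argument only closes because the positive and negative excesses are forced to be equal by row-stochasticity and because that common value is exactly the $\ell_1$ deficit $1-\sum_k\min\{a_{ik},a_{jk}\}$ between the two rows.
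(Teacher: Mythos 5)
Your proof is correct; note, however, that the paper does not actually prove this lemma at all --- it quotes it as a known result (Theorem 3.1 of Seneta, \emph{Non-negative Matrices and Markov Chains}) and defers the proof entirely to that reference. What you have written is, in substance, the classical argument behind that cited theorem: the quantity $1-\min_{i,j}\sum_k\min\{a_{ik},a_{jk}\}$ is the Dobrushin ergodicity coefficient, and your steps --- pairwise row comparison, the split into $S^+=\{k: a_{ik}\ge a_{jk}\}$ and $S^-$, the observation that row-stochasticity forces the positive and negative excesses to equal a common value $d$, the bound $(A\mathbf{x})^i-(A\mathbf{x})^j\le d\,v(\mathbf{x})$ obtained by extremizing $\mathbf{x}^k$ on each block, and the identity $d=1-\sum_k\min\{a_{ik},a_{jk}\}$ --- are all correct and complete, including the final maximization over pairs $(i,j)$. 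You are also right that the paper's displayed $\min_{1\le i,j\le n}(\mathbf{x}^i-\mathbf{x}^j)$ is a typo for $\max_{1\le i,j\le n}(\mathbf{x}^i-\mathbf{x}^j)$; as written that quantity would be nonpositive, contradicting $v(\mathbf{x})=\max_i\mathbf{x}^i-\min_i\mathbf{x}^i\ge 0$. So the comparison here is not between two proofs but between a citation and a proof: your proposal makes the result self-contained with a short, elementary argument, which is exactly what one would insert if the paper were to avoid leaning on the external reference.
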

    The proof of the above lemma can be found in \cite{seneta2006non}.
    
     We also state the following condition on a row stochastic matrix $A_t$ based on \cite{krause2000discrete}. 
     Suppose there exists numbers $0 \leq \delta_t\leq 1$ such that $\sum_{t=0}^{\infty} \delta_t=\infty$ and $\sum_{k=1}^n \min\{a^{ik}_t,a^{jk}_t\}\geq \delta_t$. We refer to this condition as the  joint connectivity condition. 
     Using this condition and according to \cite{krause2000discrete}, for the model of $\mathbf{x}_{t+1}=A_t\mathbf{x}_{t}$, we have $v(\mathbf{x}_t) \leq e^{-\sum_{s=0}^t \delta_s}v(\mathbf{x}_0)$. Therefore, we have $v(\mathbf{x}_t) \rightarrow 0$ and thus, $\mathbf{x}_t$ converges to a vector with the same elements, which we denote as $\mathbf{x}=x \mathbf{1}$. We note that one can relax the condition on the matrix $A_t$, by defining a set $\{s_1, s_2, \cdots, s_t, \cdots\}$ and $\bar{A}_t=\prod_{l=s_{t}}^{s_{t-1}}A_l$. If the joint connectivity condition holds for $\bar{A}_t$, then the result still holds.  We refer to this relaxed condition as the relaxed joint connectivity condition. 
     
     In order to apply the above argument to prove the convergence of $\mathbf{c}_t$, we  assume that the non-diagonal entries of $\bar{\Sigma}^i_t$ are 0 for all $i \in \mathcal{N}$. That is, we assume that the belief parameters are shared separately for each element of the model parameter. This is what happens in the general variational learning version in Section \ref{sec:SABRE}.   Notice that this modification does not affect the learning analysis of the  algorithm done in Theorem \ref{thm:learn_bcp2pfl}. We denote the vector $\mathbf{c}^k_t=(\mathbf{c}^{(i-1)*K+k}_t)_{i \in \mathcal{N}}$ to be the vector of elements of $\mathbf{c}_t$ corresponding to the $k_{th}$ element of the model parameter, then $\mathbf{c}^k_t$ evolves according to the opinion dynamics model of $\mathbf{c}^k_t=\bar{T}^k_t\mathbf{c}^k_{t-1}$ for all $k \in \mathcal{K}$, where $\bar{T}^k_t$ is the matrix consisting of the elements of $\bar{T}_t$ corresponding to $\mathbf{c}^k_{t}$ and $\mathbf{c}^k_{t-1}$ (the elements that are multiplied to $\mathbf{c}^k_{t-1}$ to generate $\mathbf{c}^k_{t}$  in equation \eqref{c_t}). $\bar{T}^k_t$ is a row stochastic matrix. We further mention that the relaxed connectivity condition in Assumption \ref{ass:rel_con} will imply that $\bar{T}^k_t$ satisfies the relaxed joint connectivity condition described earlier. Therefore, $\mathbf{c}^k_{t}$ will converge to a vector of the same elements, $\mathbf{c}^k \mathbf{1}$. Consequently, $\mathbf{c}_{t}$ converges to a vector $\tilde{I}\mathbf{c}$. 
Note that $c>0$ because  according to equation \eqref{c_t}, $\mathbf{c}_t$ is a summation of all positive terms.  
     % Next, we will explain the modifications required to extend the proof to the asynchronous version of the \sysbase{} algorithm. 
     % Similar to the proof of Theorem \ref{thm:learn_bcp2pfl}, we use the global clock and define the dynamics with respect to that clock. This approach will lead us to have an equation $ \mathbf{c}_t\simeq\hat{\bar{T}}_t\mathbf{c}_{t-1}$, where $\hat{\bar{T}}_t$ is derived based on the adjacency matrix of the global clock, $A^{\bar{T}}_t$. By assumption \ref{ass:rel_con}, we have the relaxed connectivity constraint on $A^{\bar{T}}_t$ and therefore, $\hat{\bar{T}}_t$ satisfies the relaxed joint connectivity condition. Subsequently, we can prove the convergence of $\mathbf{c}_t$.
    \end{proof}

In order to show the robustness of \sys{}, we will show that the social estimations of the benign clients converge to $\theta$ and the social covariance matrices converge to 0. 

In the proof of Theorem \ref{thm:learn_bcp2pfl}, we showed that  the covariance matrices of the beliefs converge to 0. Furthermore, we can see in the proof of Lemma \ref{lm:sigma_p2pfl} that the covariance matrices of the beliefs are independent of the labels $y^i_t$ and therefore, any bias in the labels will not affect the covariance matrices. Therefore, the covariance matrices will converge to 0 whether or not some users are compromised.

   Similar to the proof of Theorem \ref{thm:p2pfl_dp},    this proof is also based on the convergence analysis in the opinion dynamics literature \cite{hegselmann2002opinion,krause2000discrete}.
    In this proof, the opinion of users are their social estimations of $\theta$. That is, we have $x^i_t=\bar{\theta}^i_t$. We will model the data acquisition of agents by adding two benign and malicious nodes to the network that play the role of stubborn opinion leaders \cite{zhao2016bounded} whose opinions are always $\theta$ and $\theta+b$, respectively, for the benign and malicious opinion leaders.  The opinion of these two nodes will not change. The benign nodes directly hear the opinion of benign opinion leader with environment noise (data observation noise) and the compromised nodes, directly hear the opinion of malicious opinion leader. According to this model, the opinion dynamic of agents in the network is given by linear equations \eqref{eq:SoBeUp_alg} and \eqref{eq:SoBeUp_alg}.

    In order to prove that \sys{} is robust, we show that during the dynamics of the estimations of users, we will see an opinion fragmentation (see \cite{hegselmann2002opinion}) between the benign and the compromised agents. If an opinion fragmentation happens for a benign user and a compromised one, the benign agent will not aggregate the belief of the compromised agent.  In order to show that an opinion fragmentation happens, we show that a compromised user will be removed from the confidence set of agent $i$ at some time $t$. 
    
    %We use the result that was shown in the proof of Theorem \ref{thm:p2pfl_dp} about the estimations in \sysbase{} algorithm converging to $\theta+cb$. 
   Similar to Lemma \ref{lm:sigma_p2pfl}, one can show that   the variances of the local belief of user $i$ for the elements $k \in \mathcal{K}^i$, decrease with rate proportional to $\frac{1}{t}$,  where $t$ is the number of local observations.
    % \begin{lemma}
    %     The variances of the local belief of user $i$ for the elements $k \in \mathcal{K}^i$, decrease with rate proportional to $\frac{1}{t}$,  where $t$ is the number of local observations.
    %     \label{lm:PrCoRt}
    % \end{lemma}
    % \begin{proof}
    %     The proof is similar to the proof of Lemma \ref{lm:sigma_p2pfl} and we can write 
    %     \begin{align}
    %          (\hat{\Sigma}^i_t)^{-1}=(\hat{\Sigma}^i_{t-1})^{-1}+\frac{x^i_t{x^i_t}^T}{\Sigma^i}
    %     \end{align}
    %      Since for $k\in \mathcal{K}^i$, we have $(x^i_t)_k>0$ with probability 1, by each local data observation, $((\hat{\Sigma}^i_t)^{-1})_{k,k}$ increases. Therefore, $(\hat{\Sigma}^i_t)_{k,k}$ decreases with rate $\frac{1}{t}$. 
    % \end{proof}
    Based on the proof of Theorem \ref{thm:learn_bcp2pfl}, one can easily show that the local estimation of the benign user $i$ on $(\theta)_k$  converge to $(\theta)_k$ for $k \in \mathcal{K}^i$,  by setting the trust weights $T^{ij}_t$ to 0 except for $j \neq i$. 
    
  Assume that agent $j$ is compromised and agent $i$ is benign and they can communicate with each other at some time. According to Assumption \ref{ass:Jlearning}, there exists a $k\in \mathcal{K}^i \cap \mathcal{K}^j$. 
   Since the local estimation of agent $i$ on $(\theta)_k$ is converging to $(\theta)_k$ and the social estimation of agent $j$ is converging to $(\theta+\mathbf{c}b)_k$ for some $\mathbf{c}>0$, there is a time $t$ at which we have $|(\hat{\theta}^i_t)_k-(\bar{\theta}^j_t)_k|>\kappa\sqrt{(\hat{\Sigma}^i_t)_{k,k}}$. It can be easily proved by contradiction. That is, assume that it does not happen. Since  $(\hat{\Sigma}^i_t)_{k,k}$ is converging to 0, we must have $(\hat{\theta}^i_t)_k$ and $(\bar{\theta}^j)_k$ converge to the same point, which we know is a contradiction. Next, we will investigate how many observations are needed from agent $i$ to make an opinion fragmentation with a compromised agent $j$. 
  %Since the social belief of agent $j$ has a higher speed of convergence, we can assume that for large enough $t$ we have $\sqrt{\hat{\Sigma}^i}>\hat{\hat{\Sigma}}^j$.
  We note that since the beliefs are Gaussian, $|(\hat{\theta}^i_t)_k-(\theta)_k|<2\sqrt{(\hat{\Sigma}^i_t)_{k,k}}$ with high probability ($\approx 0.97$) and the same can be said about other beliefs as well.
Assume we have $\sqrt{(\bar{\Sigma}^j_t)_k}\leq a\sqrt{(\hat{\Sigma}^i_t)_k}$.
    %Since the social belief of agent $j$ has a higher speed of convergence, we can assume that for large enough $t$ we have .
   If we have $|(\theta)_k-(\theta+cb)_k|>(2a+2+\kappa)\sqrt{(\hat{\Sigma}^i
   )_k}$, we know with high probability that $|(\hat{\theta}^i_t)_k-(\bar{\theta}^j_t)_k|>\kappa\sqrt{(\hat{\Sigma}^i_t)_{k,k}}$. Note that the multiplier $2a+2+\kappa$ is due to the uncertainty of $(\hat{\theta}^i_t)_k$ around $(\theta)_k$ (which is $2\sqrt{(\hat{\Sigma}^i)_{k,k}}$), the uncertainty of $(\bar{\theta}^i_t)_k$ around $(\theta+\mathbf{c}b)_k$ (which is $2\sqrt{(\bar{\Sigma}^j_t){k,k}}\leq 2\kappa\sqrt{(\hat{\Sigma}^i_t)_{k,k}}$) and the allowed deviation of $(\hat{\theta}^i_t)_k$ from $(\bar{\theta}^j_t)_k$ (which is $\kappa\sqrt{(\hat{\Sigma}^i_t)_{k,k}}$). Therefore, if we have 
        $(\hat{\Sigma}^i_t)_{k,k}< (\frac{\mathbf{c}_kb}{2a+2+\kappa})^2$,
    then we have an opinion fragmentation with high probability. 
   Since $(\hat{\Sigma}^i_t)_{k,k}$ decreases with rate proportional to $\frac{1}{t}$, we should have
	\begin{align}
        t> l((\frac{2a+2+\kappa}{\mathbf{c}_kb})^2)
    \end{align}
	for the opinion fragmentation to happen. The multiplier $l$ is derived from the average decrease in $(\hat{\Sigma}^i_t)_{k,k}$ by each local data observation.

 Notice that the last step  of \sys{} was to make sure that the social model of clients stay close to their local models for those elements that a client has a local model. This step will ensure that even if at the beginning of the algorithm (before the opinion fragmentation with the compromised clients happens), the social models get poisoned, the clients can correct them with their clean local models. Therefore, after the compromised clients are removed from the system, everything will become the same as the learning in a benign  setting and according to Theorem \ref{thm:learn_bcp2pfl}, clients will learn the true model parameter if the conditions of the learning hold. 
 
	In conclusion, we can say that \sys{} algorithm is robust against the considered Label-Flipping data poisoning attacks.

\end{proof}

% Due to space limitations, we skip the proofs of Theorem \ref{thm:bcp2pfl_robust_gen} and \ref{thm:bcp2pfl_robust_all}. One can prove these theorems similarly to the proof of Theorem \ref{thm:bcp2pfl_robust}. 

\begin{proof}[Proof of Theorem \ref{thm:bcp2pfl_robust_gen}]
 Assume that a given attacker $j$ is tampering with the model parameter weights in set $\mathcal{K}^c$ with size $|\mathcal{K}^c|=C|\mathcal{K}|$. Also, assume that for  a given client $i$, we have $|\mathcal{K}^i|=L|\mathcal{K}|$. In order for client $i$ to detect the poisoned client $j$, we need to have $\mathcal{K}^i \cap \mathcal{K}^c \neq \emptyset$. Since the model parameter elements in $\mathcal{K}^c$ are chosen randomly, the probability of the above condition holding is as follows.
\begin{align}
    \mathbb{P}(\mathcal{K}^i \cap \mathcal{K}^c \neq \emptyset)&=1-\frac{{|\mathcal{K}|-|\mathcal{K}^i| \choose |\mathcal{K}^c|}}{{|\mathcal{K}| \choose |\mathcal{K}^c|}}= 1-\frac{{(1-L)|\mathcal{K}| \choose C|\mathcal{K}|}}{{|\mathcal{K}| \choose C|\mathcal{K}|}}
\end{align}
Fig. \ref{fig:prob_v} shows the plot of the above probability w.r.t. the model size for different values of $C$ and $L$, and in Fig. \ref{fig:prob_f}, we see the probability plot for a fixed model size (1e6) and different values of $C$ and $L$.  We see that the probability is almost always equal to 1 for different ranges of $L$ and $C$ and model sizes. 
\begin{figure}[H]
\centering
    \subfloat[Variable Model Size]{   \label{fig:prob_v}  \includegraphics[width=4.5cm]{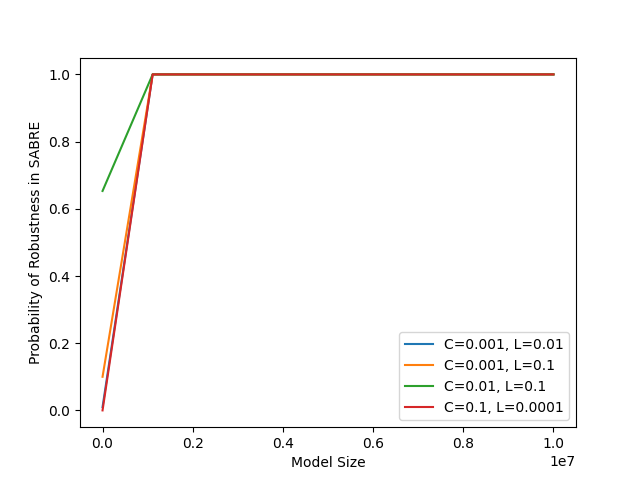}}
    \subfloat[Model Size Fixed at 1e6]{   \label{fig:prob_f}  \includegraphics[width=4cm]{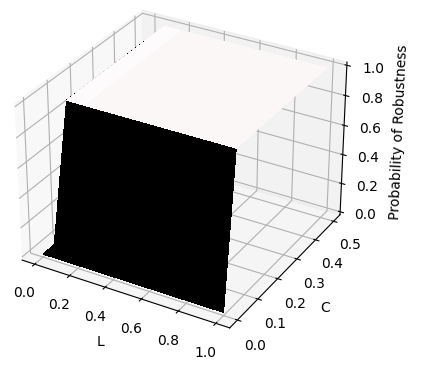}}
    \caption{The probability of \sys{} being robust against General Random attack.}
   \label{fig:prob}
\end{figure}
\end{proof}

	% that's all folks
\end{document}